\documentclass{article}

 \usepackage[main, preprint]{neurips_2026}

\usepackage{xcolor}
\definecolor{linkblue}{HTML}{1F4E79}
\usepackage[utf8]{inputenc} %
\usepackage[T1]{fontenc}    %
\usepackage{hyperref}
 \hypersetup{
  colorlinks=true,
  linkcolor=linkblue,
  citecolor=linkblue,
  urlcolor=linkblue
}%
\usepackage{url}            %
\usepackage{booktabs}       %
\usepackage{amsfonts}       %
\usepackage{nicefrac}       %
\usepackage{microtype}      %
\usepackage{xcolor}         %

\usepackage{mathtools}

\usepackage{graphicx}
\usepackage{subcaption}
\usepackage{listings}
\usepackage{refcount}
\usepackage{tikz}
\usepackage{xspace}
\usepackage{amsmath}
\usepackage{amssymb}
\usepackage{mathtools}
\usepackage{amsthm}
\usepackage{placeins}
\usepackage[capitalize,noabbrev]{cleveref}
\usepackage{thm-restate}
\usepackage{algorithm}
\usepackage{algorithmic}
\usepackage{longtable}

\usepackage{tcolorbox}
\tcbset{
  colback=gray!3,
  colframe=gray!60,
  boxrule=0.5pt,
  arc=2pt,
  left=6pt,
  right=6pt,
  top=6pt,
  bottom=6pt
}
\usetikzlibrary{calc, positioning}

\newcommand{\aigen}{AI-generated\xspace}

\newcommand{\detectionmodel}{detection model\xspace}

\newcommand{\pnu}{PNU\xspace}

\PassOptionsToPackage{sort&compress}{natbib} %

\newif\ifexcludeesif
\excludeesiftrue
\newcommand{\esif}[1]{\ifexcludeesif\else#1\fi}

\renewcommand{\paragraph}[1]{\textbf{#1}}

\theoremstyle{plain}
\newtheorem{theorem}{Theorem}[section]

\newtheorem{observation}[theorem]{Observation}
\theoremstyle{definition}

\theoremstyle{remark}

\newif\iffinal
\finalfalse    %

\iffinal
  \newcommand{\nikhil}[1]{}
  
  \newcommand{\krinline}[1]{%
    {\color{black} #1}%
    }
  \newcommand{\todoinline}[1]{}
  \newcommand{\kevin}[1]{}
  \newcommand{\mr}[1]{}
  \newcommand{\mrinline}[1]{%
    {\color{black} #1}%
    }

\providecommand{\todo}[1]{}
\else
  \usepackage{xcolor}
  \usepackage[textsize=small]{todonotes}

  \newcommand{\nikhil}[1]{%
    \todo[inline,linecolor=orange,backgroundcolor=orange!20,bordercolor=orange]{\textbf{NG:} #1}%
  }

  \newcommand{\krinline}[1]{%
    {\color{red}\textbf{KR} #1}%
  }

  \newcommand{\todoinline}[1]{%
    {\color{orange}\textbf{TODO:} #1}%
  }

  \newcommand{\kevin}[1]{%
    \todo[inline,linecolor=red,backgroundcolor=red!20,bordercolor=red]{\textbf{KR:} #1}%
  }

  \newcommand{\mr}[1]{%
    \todo[inline,linecolor=olive,backgroundcolor=olive!20,bordercolor=olive]{\textbf{MR:} #1}%
  }

  \newcommand{\mrinline}[1]{%
    {\color{olive}\textbf{MR:} #1}%
  }

\fi

\title{Hitting a Moving Target: Test-Time Adaptation for AI Text Detection under Continual Distribution Shift}

\author{%
  Kevin Ren \\
  Cornell Tech \\
\texttt{kevinren@cs.cornell.edu} \\
  \And
  Manish Raghavan \\
  MIT \\
  \texttt{mragh@mit.edu}
\And
  Nikhil Garg \\
  Cornell Tech \\
  \texttt{ngarg@cornell.edu}
}

\date{}

\begin{document}

\maketitle

\begin{abstract}
Deployed approaches for AI text detection often rely on training-time access to labeled datasets of both human-written and \aigen text. This approach is vulnerable to three types of distribution shifts that occur continually post-deployment, and for which labeled data is often unavailable: adversarial humanization, new LLMs being released, and temporal drift in human writing. Simultaneously, existing approaches do not leverage a key signal of LLM usage: inference-time homogeneity. We propose a test-time adaptation (TTA) approach, using semi-supervised learning, that adapts to distribution shifts by leveraging homogeneity among unlabeled samples observed at inference time.
Empirically, we find that state-of-the-art supervised detectors systematically fail when they encounter distribution shifts in \aigen and human writing, both adversarial and natural, while test-time adaptation with semi-supervised learning is largely robust; e.g., the commercial model Pangram detects just 24.1\% of our adversarial \aigen text, compared to 90.5\% for our test-time approach. We establish that test-time adaptation is a promising framework for AI text detection {in the wild}. We publicly release our code (which includes code for model training, evaluation, and plots) at \href{https://github.com/kkr36/llm_detection}{https://github.com/kkr36/llm\_detection}.

\end{abstract}

\section{Introduction}

\begin{figure*}[t]
\centering
\includegraphics[width=\textwidth]{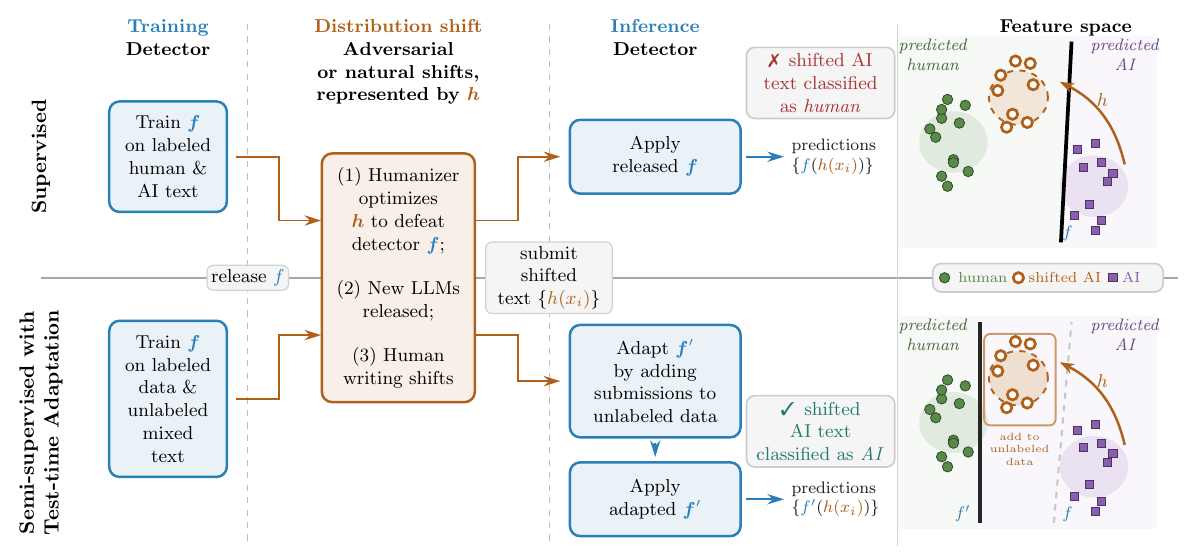}
\caption{Comparing AI text detection with supervised learning (top) and test-time adaptation with semi-supervised learning (bottom). With supervised detection: a detector is trained on labeled human and AI text, released, and can fail under distribution shift. With test-time adaptation: the detector re-estimates its boundary using submitted texts at inference time and therefore can adapt. %
}
\label{fig:pipeline_and_feature_space}
\end{figure*}

\aigen text detection is an increasingly common task in domains from education to journalism, both to flag undisclosed, undesirable AI usage on specific texts and to broadly understand societal effects of LLM usage \citep{liang2024icmlreviews, corpus2025introducing, wang2025perceived, russell2025ai}. 
Popular, high-performing detection models are primarily trained with {supervised learning} methods. For \textit{individual-level classification} (identifying specific pieces of \aigen writing), the commercial detector Pangram trains a transformer-based classifier on millions of texts across dozens of domains, with a sophisticated data augmentation algorithm \citep{emi2024technical}. For \textit{prevalence estimation} (estimating the \textit{proportion} of \aigen text in a corpus), \citet{liang2024mapping} propose a lightweight n-grams-based method. Both methods have been widely used \citep{russell2025ai, russell2025people, chakrabarty2025ai, chakrabarty2025readers, thorat2025dactyl, leippold2026authoritative, naddaf2025major,qian2026rise, sharma2026llms,elazar2026llm, kusumegi2025scientific, liang2025quantifying,liang2025widespread}.

Model developers assert that AI detection is solvable with current approaches: Pangram claims 99\% accuracy \citep{emi2024technical, russell2025ai} and robustness to popular ``humanizers'' \citep{masrour2025damage}, corroborated by similar, independent estimates \citep{jabarian2025artificial}. Yet other evaluations suggest that these claims may not generalize. \citet{chakrabarty2025readers} show that, after fine-tuning models to match a specific human author's stylistic features, the outputs largely evade Pangram, with a 3\% detection rate. A growing body of work finds general brittleness in detection models \citep{dawkins2025detection,dugan-etal-2024-raid,wang-etal-2024-m4,wang-etal-2024-m4gt,li-etal-2024-mage,sadasivan2023can,yu-etal-2025-evobench}, which we similarly find in Pangram and our own supervised detectors.

What explains these measurement discrepancies? As is often the answer: distribution shifts. Supervised methods assume access to training datasets that are in-distribution to the evaluation. To curate such datasets, practitioners (a) find source domain text written prior to the widespread adoption of LLMs (i.e., before 2022) for known human-written text, and (b) generate \aigen text themselves (``synthetic mirrors'') by querying an LLM to rewrite the human text. When the evaluation (approximately) matches the training distribution, we expect -- and indeed prior work observes -- good performance with supervised methods.
Performance degrades when the two differ. %

As we argue, distribution shift in AI text detection is especially challenging: it is continual and often post-deployment \citep{yu-etal-2025-evobench}. Consider the shifts common in LLM detection: adversarial humanization, in which users or humanizer services \citep{grammarly_ai_humanizer,humanizeai_pro,krishna2023paraphrasing}
strategically rewrite AI-generated text to evade a published detector; and
natural shifts over time, due to new LLM releases and temporal drift in human
writing. These shifts all share two characteristics: (1) they
\textit{continually} occur over time, and so the task differs from one-time
transfer learning; and (2) labeled test data may never be available -- or come with substantial cost or time delay. These characteristics limit the performance of standard retraining or domain adaptation approaches, in which a deployed model is regularly retrained (e.g., monthly) to adapt to fixed target shifts.

At the same time, existing methods (supervised or otherwise) are not designed to leverage the fact that real-world LLM usage is homogeneous, reflecting \textit{algorithmic monoculture} \citep{kleinberg2021algorithmic,peng2024monoculture,wu2025generative, padmakumar2024does, raghavan2024competition, ugander2024art, doshi2024generative, goelgreat2025, kimcorrelated2025, toups2023ecosystem,jo2026subjectivity}. Modeling test-time homogeneity, even without labeled data, thus provides important signal for AI usage in the wild: while it may be hard to classify any single shifted text sample, detection becomes feasible given a \textit{cohort} of samples {(i.e., a sufficiently large test-time batch)}, if the distribution shifts (e.g., releases of new LLMs and humanizer tools) are reasonably homogeneous.\footnote{e.g., in the NYTimes \citep{vigdor2025professors}, two professors describe asking students to disclose non-permitted LLM usage. They received many responses, and an unusually large fraction of them had the phrase ``sincerely apologize''. The professors concluded that many students were still using LLMs, even to apologize for using LLMs.}  

Thus, we {claim} that {AI text} detection -- due to continual distribution shifts and test-time homogeneity -- is an ideal application for the emerging
paradigm of \textbf{test-time adaptation} (TTA) 
\citep{sun2020test,wang2021tent,zhang2022memo,liang2025comprehensive,sahoo2020unsupervised}, which
adapts an existing classifier or re-trains a new classifier given unlabeled
test-time data. \textit{Test-time adaptation with semi-supervised learning} methods can leverage test-time homogeneity in \aigen text, while methods that are {frozen at train-time}  consider each test-time sample independently and cannot exploit this latent signal.

In this work, first, we present a conceptual framework to explain the dynamics of AI text detection under (potentially adversarial) continual distribution shifts, using insights from strategic classification, test-time adaptation, and positive-unlabeled (PU) learning (a type of semi-supervised learning). We show standard methods are at a fundamental timing-based disadvantage, and therefore vulnerable to adversarial and natural shifts. In contrast, test-time adaptation-based approaches can perform well.

Second, we empirically demonstrate the advantages of our framework using data on scientific abstracts and the RAID benchmark \citep{dugan-etal-2024-raid}. (a) Supervised detectors can be repeatedly evaded with strategically chosen humanizers; we produce \aigen text that reduces Pangram's detection rate of \aigen text from 94.7\% (for non-adversarially chosen text) to 24.1\%. (b) State-of-the-art supervised detectors can perform poorly given \aigen text from LLMs that differ from training LLMs; e.g., in our experiments following the release of GPT 5.4, we find that Pangram had only a 34\% detection rate on its outputs while being near-perfect on older models. (c) Finally, detectors continually degrade over time due to human distribution shifts; e.g., when trained on 2010 data, supervised approaches such as that of \citet{liang2024mapping} estimate that 15.1\% of sentences from abstracts on arXiv in 2020 were already LLM-generated (when in reality the number is ostensibly near-zero). \textit{In all settings, test-time adaptation using classifiers trained with PU learning (or positive-negative-unlabeled, PNU) is robust to such shifts, e.g., maintaining near-perfect performance even against strategically chosen humanizers.}

Overall, our work clarifies that \textit{continually} shifting distributions, both adversarial and natural, are the key bottleneck for robust LLM detection -- while existing supervised detection methods perform well with in-distribution data, even on texts as short as a sentence, performance degrades substantially on out-of-distribution text. Empirically, we show the potential for test-time adaptation to mitigate the impacts of distribution shift in LLM text detection, by leveraging latent homogeneity.  %

\section{Conceptual framework: test-time adaptation for AI text detection}
\label{sec:conceptual}

To motivate test-time adaptation, we first clarify the characteristics of distribution shifts in LLM text detection: (1) they occur continually, and (2) labeled data from the shifted distribution may be scarce.
The dynamics are sharpest with adversarial adaptation, viewed through the lens of \textit{strategic classification} and \textit{performative prediction}
\citep{dalvi2004adversarial,hardt2016strategic,perdomo2020performative}. As illustrated in \Cref{fig:pipeline_and_feature_space}, first, the detector publishes a classifier. Then, the subject (the evader) strategically alters their features (submitted writing) to receive a favorable classification.

(1) Distribution shifts occur continually, and the detector cannot respond in real-time; i.e., the detector ``moves first'' and is thus at a disadvantage. A detector model like Pangram publicly releases model predictions through an API (or as an open source model release). Thus, in the standard supervised learning paradigm, this model is fixed (between retraining runs), and the evader can respond in real-time to this fixed model; for example, a user can keep editing generated text until it receives a favorable rating. As we show, a humanizer to defeat a fixed model can also be trivially built given black box API access to a model; in contrast, supervised model retraining is a slow process, requiring acquiring and labeling new data on humanization strategies.
The evader can thus always win (pass AI text off as human-written) \textit{if} they can shift the distribution of \aigen text to the other side of the fixed decision boundary. It is not sufficient for the detector to adapt once and publish a new classifier, as the evader can simply adapt to the new classifier. This disadvantage extends to transfer learning or domain adaptation methods that assume a fixed target domain. Intuitively, under this framework, the detector needs to ``move last,'' i.e., adapt to the evader's latest strategic responses at test-time. 

(2) A publicly released detector like Pangram {does not have true labels for text that is sent to its API}, and thus cannot retrain its model in real-time using a supervised learning framework. At best, if the popular humanization strategies are publicly available, the detector can generate humanized \aigen text using them, and then regularly re-train a supervised model to detect their outputs. 

These characteristics are also present in natural distribution shifts. LLMs are constantly being released and updated, and it may be costly or time-consuming to continually generate labeled data and retrain one's model against them. While slower, human distribution shift may be even more challenging to respond to: pre-2022 human text is increasingly outdated, and more recent text is not guaranteed to be human-written. Thus, obtaining verified, recent, human-written text may be expensive (e.g., require paying writers and finding ways to ensure that they are not using LLMs).

\paragraph{Theoretical observations.}
We formalize the above intuition in a simple model (proofs in \Cref{app:theory}). Let human and initial AI text be drawn from distributions $\mathcal{H}$ and $\mathcal A$, respectively. A function $h$ denotes distribution shift, mapping $\mathcal A$ to $h(\mathcal A)$. 
Pre-deployment, we observe labeled samples from $\mathcal{H}$ and $\mathcal A$, but none from $h(\mathcal A)$. The test-time task is to classify samples from an (unknown) mixture $\mathcal{U}$:
\[
\mathcal{U} = \lambda_{{H}} \mathcal{H} + \lambda_A \mathcal A + \lambda_{{h}} h(\mathcal A), \ \ \ \ \ \ \ \text{where $\lambda_A,\lambda_{{h}}>0$.}
\]

Of course, the task is impossible if the AI text distributionally matches human text.\footnote{e.g., if evaders can perfectly humanize text. Typically, strategic classification models impose costs on
manipulation to prevent arbitrary behavior \citep{hardt2016strategic,hu2019disparate,milli2019social,kleinberg2020classifiers,liu2022strategic}.
} 

\begin{observation}
Let $\mathcal{H}$ and $\mathcal G$ be two distributions, with total variation distance $\operatorname{TV}(\mathcal{H},\mathcal G)$. Then, for any binary classifier \(f\), we have balanced accuracy $\operatorname{Acc}(f;\mathcal{H},\mathcal G)
\leq \frac{1+\operatorname{TV}(\mathcal{H},\mathcal G)}{2}$.

\label{obs:tvbound}
\end{observation}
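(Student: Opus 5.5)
We will argue directly from the definition of balanced accuracy together with the variational characterization of total variation distance. Take balanced accuracy to be the average of the true-negative rate on $\mathcal{H}$ and the true-positive rate on $\mathcal G$:
\[
\operatorname{Acc}(f;\mathcal{H},\mathcal G) = \tfrac12\Bigl(\Pr_{x\sim\mathcal{H}}[f(x)=0] + \Pr_{x\sim\mathcal G}[f(x)=1]\Bigr).
\]
Here $f$ is a (measurable) map to $\{0,1\}$, with label $1$ meaning ``AI''. Let $S=\{x : f(x)=1\}$ be the region the classifier flags as AI-generated. The accuracy then rewrites as
\[
\tfrac12\bigl(1-\mathcal{H}(S)+\mathcal G(S)\bigr) = \tfrac12\bigl(1+\mathcal G(S)-\mathcal{H}(S)\bigr).
\]

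The key step is to recall that $\operatorname{TV}(\mathcal{H},\mathcal G)=\sup_{A}|\mathcal G(A)-\mathcal{H}(A)|$, with the supremum taken over measurable sets $A$. Since $S$ is one such set, we get $\mathcal G(S)-\mathcal{H}(S)\le \operatorname{TV}(\mathcal{H},\mathcal G)$, and substituting this gives the claimed bound $\frac{1+\operatorname{TV}(\mathcal{H},\mathcal G)}{2}$. If the paper's convention is instead $\operatorname{TV}=\tfrac12\|\mathcal{H}-\mathcal G\|_1$, I would first verify that the two definitions agree. Using densities $p,q$ with respect to a common dominating measure, $\int_S (q-p)\le \int_{q>p}(q-p)=\tfrac12\int|q-p|$, where the equality holds because both $p$ and $q$ integrate to one.

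To cover randomized classifiers, write $f(x)\in[0,1]$ for the probability of outputting $1$. The gap term becomes $\int f\,(q-p)$, and since $0\le f\le 1$ this is again at most $\int_{q>p}(q-p)$. The same bound therefore holds.

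The main obstacle is simply fixing conventions: what balanced accuracy means, which class is positive, and whether $f$ may be randomized. The argument itself is a one-line application of the set-supremum form of TV. For completeness I would also note that the bound is tight, attained by $S=\{q>p\}$, though tightness is not required by the statement.
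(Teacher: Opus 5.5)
Your proof is correct and is essentially the paper's argument: you write balanced accuracy as $\tfrac12 + \tfrac12$ times the mass difference on the classifier's decision region, then bound that difference by the set-supremum form of total variation. The paper instead takes $f(x)=1$ to mean ``human'' and uses the human-predicted region $S_f$, which changes nothing. Your extensions to randomized classifiers and to the $\tfrac12\|\cdot\|_1$ convention are correct and go slightly beyond what the paper proves.
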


Thus, to make progress, we assume that $\mathcal{H}$ is separable from $\mathcal A \cup h(\mathcal A)$ by some classifier in the hypothesis class $\mathcal F$ (i.e., that $\exists f \in \mathcal F: {Acc}(f;\mathcal{H},\mathcal A \cup h(\mathcal A)) = 1$). Empirically, perhaps surprisingly, we observe separability -- in our experiments, human and \aigen text can always be almost perfectly classified given in-distribution training data, even for texts as short as one sentence (below, we analyze that this separability is not due to naive differences caused by poor prompting, such as length differences; we also find similar separability on existing  detection benchmarks, using our models).  We also assume $h$ is non-trivial, where $h(\mathcal A)$ places positive mass outside the measurable support of $\mathcal A$. %

Given this setup, \textit{supervised learning can fail.}  

\begin{observation}

  For sufficiently rich hypothesis classes, many classifiers may perfectly separate $\mathcal{H}$ from $\mathcal A$, while not all of these separate $\mathcal{H}$ from $h(\mathcal A)$. Thus, a supervised detector $f$ can achieve perfect training-time accuracy on $\mathcal{H}$ and $\mathcal A$ while still misclassifying the shifted AI text.

\label{obs:supfails}
\end{observation}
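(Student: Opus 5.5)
The statement is existential, so I will prove it with an explicit construction. The ingredients are the non-triviality assumption on $h$ and a suitably rich hypothesis class. The idea is to exhibit two classifiers that agree on the training distributions $\mathcal{H}$ and $\mathcal A$ but disagree on the shifted mass of $h(\mathcal A)$.

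\textbf{Step 1: fix the sets.} Let $S_H$ and $S_A$ be measurable supports of $\mathcal{H}$ and $\mathcal A$. By the separability assumption there is some $f^\star\in\mathcal F$ with $\operatorname{Acc}(f^\star;\mathcal{H},\mathcal A\cup h(\mathcal A))=1$. So, up to null sets, $f^\star$ labels $S_H$ as human and $S_A\cup\operatorname{supp}(h(\mathcal A))$ as AI.

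By non-triviality there is a measurable set $T$ with $T\cap S_A=\emptyset$ and $h(\mathcal A)(T)=\delta>0$. Since $f^\star$ is perfect on $h(\mathcal A)$, we may also take $T$ disjoint from $S_H$ up to null sets. The reason is that any part of $T$ with positive mass under both $\mathcal{H}$ and $h(\mathcal A)$ would contradict perfect separation.

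\textbf{Step 2: build the bad classifier.} Define $g$ to agree with $f^\star$ everywhere except on $T$, where it outputs ``human''.
\begin{itemize}
\item On $\mathcal{H}$, $g$ differs from $f^\star$ only on $T$, which is $\mathcal{H}$-null. So $g$ is correct $\mathcal{H}$-a.s.
\item On $\mathcal A$, $g$ differs from $f^\star$ only on $T$, which is disjoint from $S_A$. So $g$ is correct $\mathcal A$-a.s.
\end{itemize}
Hence $g$ has perfect training accuracy on $(\mathcal{H},\mathcal A)$. On $h(\mathcal A)$, however, $g$ errs on mass at least $\delta$. Its balanced accuracy on $(\mathcal{H},h(\mathcal A))$ is therefore at most $1-\delta/2<1$.

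Because supervised empirical risk minimization only sees $\mathcal{H}$ and $\mathcal A$, it cannot prefer $f^\star$ over $g$: both have zero training loss. This shows that ``many'' perfect separators exist and that not all of them separate $\mathcal{H}$ from $h(\mathcal A)$. Replacing $T$ by any of its positive-mass subsets gives a whole family of such $g$.

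\textbf{Main obstacle.} The delicate step is ensuring $g\in\mathcal F$. I will make ``sufficiently rich'' precise as closure of $\mathcal F$ under modification on measurable sets, e.g.\ $\mathcal F$ containing all measurable functions, or closed under $f\mapsto f\cdot\mathbf 1_{T^c}$.

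As a sanity check that this assumption is not vacuous, I will give a concrete example: one-dimensional thresholds. Take $\mathcal{H}=U[0,1]$, $\mathcal A=U[2,3]$, and $h$ the shift by $-0.5$, so $h(\mathcal A)=U[1.5,2.5]$. Every threshold in $[1,2]$ perfectly separates $\mathcal{H}$ from $\mathcal A$. Only thresholds in $[1,1.5]$ separate $\mathcal{H}$ from $h(\mathcal A)$. For example, the threshold $1.8$ is perfect in training but misclassifies $30\%$ of the shifted text.
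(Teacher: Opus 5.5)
Your construction is the paper's proof. Both take the perfect separator $f^\star$ and relabel as ``human'' a set that is $\mathcal A$-null but carries positive $h(\mathcal A)$-mass; the paper uses $C=D\cap(S^\star)^c$, so the human-prediction set becomes $S^\star\cup C$. Both then define ``sufficiently rich'' to mean that this modified classifier lies in $\mathcal F$.

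One small repair is needed. Your reason for taking $T$ to be $\mathcal{H}$-null is not valid as stated, because a set can carry positive mass under two mutually singular measures. Instead, replace $T$ by $T\cap\{f^\star=\text{AI}\}$, as the paper does. The step is in fact dispensable: relabelling extra points as ``human'' cannot lower accuracy on $\mathcal{H}$.
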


Indeed, with a humanizer $h$ that is adapting to the published detector $f$, the text $h(\mathcal A)$ may be \textit{adversarially} designed to be misclassified as human by $f$.

\textit{Semi-supervised learning can adapt with test-time data.} Now instead suppose we can also use the unlabeled inference-time data, drawn from $\mathcal{U}$. A positive-unlabeled (PU) classifier uses known labels from one class, and an unlabeled mixture set.

\begin{observation}
There exists a positive-unlabeled optimization whose population
minimizers are classifiers that separate $\mathcal{H}$ from
$\mathcal A \cup h(\mathcal A)$, using only labeled $\mathcal{H}$
and unlabeled $\mathcal U$ data.
\label{obs:puworks}
\end{observation}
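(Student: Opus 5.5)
I will construct the optimization explicitly and characterize its population minimizers. Take labeled ``positive'' data to be human text $\mathcal H$ and unlabeled data from $\mathcal U = \lambda_H\mathcal H + \lambda_A\mathcal A + \lambda_h h(\mathcal A)$. Write $\mathcal G := \frac{\lambda_A\mathcal A + \lambda_h h(\mathcal A)}{\lambda_A+\lambda_h}$ for the AI part of the mixture, so that $\mathcal U = \lambda_H \mathcal H + (1-\lambda_H)\mathcal G$ with $1-\lambda_H = \lambda_A+\lambda_h>0$.

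The objective I would use is the standard unbiased PU risk with known class prior $\pi:=\lambda_H$ (or an upper bound on it), under 0-1 loss. For a classifier $f\in\mathcal F$ with $f(x)=1$ meaning ``human'', set
\[
R_{\mathrm{PU}}(f) = \pi\,\Pr_{\mathcal H}[f(x)=0] + \Bigl(\Pr_{\mathcal U}[f(x)=1] - \pi\,\Pr_{\mathcal H}[f(x)=1]\Bigr).
\]
The key step is the identity $\Pr_{\mathcal U}[f=1] = \pi\Pr_{\mathcal H}[f=1] + (1-\pi)\Pr_{\mathcal G}[f=1]$. This follows by linearity of expectation in the mixture and yields
\[
R_{\mathrm{PU}}(f) = \pi\Pr_{\mathcal H}[f=0] + (1-\pi)\Pr_{\mathcal G}[f=1],
\]
which is exactly the population misclassification risk for $\mathcal H$ versus $\mathcal G$. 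Only labeled $\mathcal H$ samples and unlabeled $\mathcal U$ samples enter the formula, as the statement requires.

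Next I characterize the minimizers. The risk is nonnegative, and by the separability assumption there is some $f^\star\in\mathcal F$ with $\Pr_{\mathcal H}[f^\star=0]=0$ and $\Pr_{\mathcal A}[f^\star=1]=\Pr_{h(\mathcal A)}[f^\star=1]=0$, so $R_{\mathrm{PU}}(f^\star)=0$. Hence every population minimizer $\hat f$ has risk $0$. Because $\pi>0$ (needed for $\mathcal H$ to appear in $\mathcal U$; if $\lambda_H=0$, use a positive surrogate weight) and $\lambda_A,\lambda_h>0$, zero risk forces $\Pr_{\mathcal H}[\hat f=0]=0$, $\Pr_{\mathcal A}[\hat f=1]=0$, and $\Pr_{h(\mathcal A)}[\hat f=1]=0$ separately. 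So $\hat f$ separates $\mathcal H$ from $\mathcal A\cup h(\mathcal A)$. The condition $\lambda_h>0$ is what makes the shifted component $h(\mathcal A)$ carry weight, in contrast to Observation~\ref{obs:supfails}, where the supervised risk gives zero weight to $h(\mathcal A)$.

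The main obstacle is the dependence on the unknown prior $\pi$. To handle it, I would use the nonnegative or ``biased'' variant: for any weight $c\ge \pi$, minimize $c\Pr_{\mathcal H}[f=0] + \Pr_{\mathcal U}[f=1]$. This equals $(c-\pi)\Pr_{\mathcal H}[f=0] + \pi\Pr_{\mathcal H}[f=0]+\pi\Pr_{\mathcal H}[f=1]\cdot 0 + \dots$; more cleanly, it is at least $\pi\Pr_{\mathcal H}[f=1] + (1-\pi)\Pr_{\mathcal G}[f=1] + c\Pr_{\mathcal H}[f=0]$. Its minimum value $\pi$ is attained only by classifiers with $\Pr_{\mathcal H}[f=0]=0$ (when $c>\pi$) and $\Pr_{\mathcal G}[f=1]=0$. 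This removes the need for exact knowledge of $\pi$, but I would verify this carefully. Alternatively, a PNU variant that adds labeled $\mathcal A$ recovers the same conclusion. Measure-zero subtleties, where separation holds only almost surely, are absorbed by stating the result in terms of the accuracy notion defined earlier.
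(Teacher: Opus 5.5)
Your proof is correct, but it takes a different route from the paper.

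The paper uses no risk-rewriting identity. It poses the hard-constrained problem $\min_{f\in\mathcal F}\mathcal U(S_f)$ subject to $\mathcal H(S_f)=1$. With $\mathcal B:=h(\mathcal A)$, every feasible $f$ satisfies $\mathcal U(S_f)=\lambda_H+\lambda_A\mathcal A(S_f)+\lambda_h\mathcal B(S_f)\ge\lambda_H$, with equality at the separator. Hence $\mathcal A(S_f)=\mathcal B(S_f)=0$ at any minimizer. This formulation needs neither the prior $\lambda_H$ nor $\lambda_H>0$, which is why the paper can say no knowledge of the weights is required.

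Your main route, the unbiased PU risk with $\pi=\lambda_H$, gives something the paper's does not. The identity $R_{\mathrm{PU}}(f)=\pi\Pr_{\mathcal H}[f=0]+(1-\pi)\Pr_{\mathcal G}[f=1]$ holds for every $f$, not only at the optimum. It therefore matches the PU losses used in practice and identifies the minimizers as risk-optimal for $\mathcal H$ versus $\mathcal G$ even without realizability. The cost is that you must know $\pi$ and need $\pi>0$, which you then have to patch.

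Your penalized variant is the right patch; it is essentially a Lagrangian relaxation of the paper's constraint, whose problem is the $c\to\infty$ limit. It needs three tightenings:
\begin{itemize}
\item The objective equals $\pi+(c-\pi)\Pr_{\mathcal H}[f=0]+(1-\pi)\Pr_{\mathcal G}[f=1]$ exactly, not ``at least'' that quantity.
\item You need $c>\pi$ strictly, not $c\ge\pi$. At $c=\pi$ the constant ``always AI'' classifier also attains the minimum value $\pi$ without separating anything.
\item Since $\lambda_H=1-\lambda_A-\lambda_h<1$, the choice $c=1$ always works. It gives the prior-free objective $\pi+(1-\pi)\bigl(\Pr_{\mathcal H}[f=0]+\Pr_{\mathcal G}[f=1]\bigr)$, an affine function of the balanced error. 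It also covers $\lambda_H=0$ without a separate surrogate-weight argument.
\end{itemize}
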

This observation is consistent with standard PU learning results under
our assumptions.\footnote{See, e.g., \citet{liu2002partially,mansouri2025learning}. One proof is as follows. Treating human text as the positive class and using the unlabeled set drawn from $\mathcal{U}$, a PU learner can solve:
\(
\min_{f\in\mathcal F} P_{\mathcal{U}}(f(X)= \text{human})\) such that \(
P_{\mathcal{H}}(f(X)=\text{human})=1.
\)
The constraint ensures labeled human text is classified as human; as unlabeled human text is also drawn from $\mathcal{H}$, it will also be classified as human. The objective penalizes classifying the unlabeled text as human. Under realizability, the population optima are perfect separators of $\mathcal{H}$ from $\mathcal A \cup h(\mathcal A)$. This guarantee does not require knowledge of the weights $\lambda$, and holds even when the humanized text $h(\mathcal A)$ is adversarially designed, as long as it is separable from $\mathcal{H}$.
} 
We show this theoretical discussion empirically: supervised learning can be systematically evaded, while semi-supervised methods can efficiently adapt at test time to distribution shifts, without labels. For PU learning, we use the $(\mathrm{TED})^n$ method of \citet{garg2021mixture}, which is designed to work effectively in high dimensions and in practice does not require perfect separability. We also adapt $(\mathrm{TED})^n$ to include labeled negatives from the source domain (i.e., a \pnu learning method; similar in principle to \citet{kiryo2017positive}). We refer to these methods as PU + TTA and PNU + TTA, respectively. In practice, other TTA approaches may also improve performance.

Finally, as we empirically show in \Cref{sec:results_temporal}, our framework can also be extended to settings with shifts in human text, by treating (known) LLM examples as the positive class and treating the test-time distribution as a mixture of LLM and human text. In the case of distribution shifts in human texts (holding AI text distribution fixed), the above insights apply, with a change of notation, in which $\mathcal{H}$ instead refers to the fixed \aigen distribution, $\mathcal{A}$ to the training-time human distribution, and $h(\mathcal{A})$ to the post-shift human distribution.

\esif{
\nikhil{When writing contributions, it would be nice if we can eventually claim that our method is SOTA across a reasonable set of data we accumulate.}

\nikhil{another note for later. While reading the pangram technical report, I think they claimed they made some data public. If so, would be good to evaluate on those eventually}
}

\section{Experimental setup}\label{sec:exps}

We first collect human text. Then, we use an LLM and prompt to produce \aigen ``mirrors'' of the human text (given existing text, prompt the LLM to rewrite the text with additional style suggestions while preserving the information)\esif{\mrinline{If we have space, I think it would be helpful to provide an example of what mirror generation actually looks like}}, and simulate train-time and test-time conditions, with potential distribution shift between the two. We train a supervised classifier on the train-time data with fully known labels; we also train positive-unlabeled (PU) {and positive-negative-unlabeled (\pnu) classifiers} on \textit{one} of the train-time data classes (either AI or human) as the known positive class and unlabeled data sampled from the test-time distribution. For the \pnu models, we further have negative class data sampled from the train-time distribution; i.e., the \pnu models combine the data distributions of the supervised and PU models.\footnote{Notably, this means that while our PU method has unlabeled access to the test-time distribution, we do not give it access to one of the labeled classes at train-time. In practice, we foresee that a PNU classifier that combines state-of-the-art supervision alongside test-time adaptation would work best; our PU measurements are \textit{lower bounds} on such performance, and our PNU models perform effectively in our setup.} This pipeline  largely follows that used in related literature \citep{liang2024mapping,emi2024technical}; in particular, generating and then using ``mirrors'' is standard practice in training and evaluating supervised learning LLM text detection classifiers. Additional details per experiment are in \Cref{sec:results} and Appendix \ref{app:additionaldetails}. %

\paragraph{Data and Prompts.} We conduct experiments on the Cornell arXiv dataset \citep{cornell_arxiv_kaggle}, using abstracts from 2010-2020. Across experiments, we use ten total LLMs. For experiments involving strategic behavior, mirrors are produced using a rewriting prompt, optimized via an autoresearch loop \citep{karpathy2026autoresearch}, to specifically evade a given model; otherwise, we use the prompting strategy defined in \citet{liang2024mapping} (we refer to this as the ``naive prompt''). {We also replicate our results on the RAID benchmark \citep{dugan-etal-2024-raid}, with qualitatively similar findings, including for the accuracy of detection models overall.}

We validate that our adversarial rewrites generally preserve content and that hallucinations or omissions do not explain detector evasion; the generated text does not differ substantially from human text in length or other observable features. (Surprisingly to us, LLM detection on our data seems to be a task our models can perform with high accuracy, despite it being a hard task for humans to perform by inspection). See \Cref{sec:mirrorvalidation} for details; example generations are in \Cref{tab:sampled_sentences}.

\paragraph{Detection Approaches.} To train our own models (both supervised and {TTA} methods), we fine-tune DistilBERT \citep{sanh2019distilbert}. For supervised methods, we use cross-entropy as the loss function. For {semi-supervised} learning with test-time adaptation (PU + TTA; \pnu + TTA), we use the $\text{(TED)}^{n}$ learning algorithm defined in \citet{garg2021mixture}, {adapted to include train-time negatives for \pnu}; training uses unlabeled data {from the test distribution but not the test set itself}. For both, we threshold continuous predictions at 0.5 to create binary labels. In some experiments, we further compare to Pangram.\footnote{\citet{jabarian2025artificial} compare commercial models Pangram, OriginalityAI, and GPTZero, finding that Pangram had the lowest error rates, noting the sophistication of their methods -- including a hard negative mining procedure in which additional data near the decision boundary is repeatedly added to the training set. We thus use Pangram as a strong supervised baseline.} We also evaluate models on a \textit{prevalence estimation} task (what fraction of the set is \aigen). Given any (supervised or {semi-supervised}) classifier outputting continuous scores, we estimate prevalence using the Best-Bin Estimation algorithm of \citet{garg2021mixture}, which we find generally outperforms the naive estimator of averaging $\Pr(\text{LLM})$. We also use the supervised prevalence estimator of \citet{liang2024mapping}, which is not designed for individual-level classification.

\paragraph{Metrics and evaluation.} In the main text, we assess classification performance using binary \textit{accuracy} on a label-balanced test set; when interested in predictions on a given class, we further evaluate the \textit{detection rate} (class-conditional recall) for that class. For prevalence estimation, we primarily use \textit{bias}, the difference between estimated and true prevalence of LLM text. For concision, we show a subset of metrics in the main text; Appendix \ref{app:figs} shows the other metrics, such as AUC, cross entropy loss, and recall for each class. We primarily train and evaluate on individual \textit{sentences} from abstracts (except experiments including Pangram, as sentences are too short for their stated reliability claims; for these, we use full abstracts). To prevent data leakage, we cluster full abstracts into the train/test split\esif{\todo{is this true?}}, sharing splits across models and bootstrapping the train/test split for confidence intervals.

\esif{
\krinline{missing description of heatmap experiments}
}

\esif{
\subsection{SemEval Dataset}

\krinline{missing altogether}

The SemEval task consists of data from two domains: a train and validation set with roughly 600,000 combined LLM- and human-written code samples from a out-of-distribution setting, an unlabeled test set containing 1,000 in-distribution code samples, and a labeled test set containing 1,000 in-distribution code samples. We are interested in determining if assuming access to the OOD training data along some number of ID data improves performance. Here, we use CodeBERT \citep{feng2020codebert} as the base model and fine-tune on SemEval code samples. As with the arXiv data, we keep the same learning scheme for training with PU and supervised methods.\footnotemark[\getrefnumber{fn:esifmissing}]
}
    
\section{Empirical evaluation}\label{sec:results}
We empirically evaluate performance given three types of (continual) distribution shifts: one \textit{adversarial} shift in AI text (\Cref{sec:results_adversarial}), and two \textit{natural} shifts between LLM models (\Cref{sec:results_llm}) and in human writing (\Cref{sec:results_temporal}). We show that supervised approaches perform poorly out-of-distribution, but semi-supervised learning with test-time adaptation {(PU + TTA; PNU + TTA)} performs well. {Here, we primarily illustrate results from the arXiv dataset; detailed results on the RAID benchmark are in Appendix \ref{sec:raid}.\footnote{On RAID, PU + TTA is most beneficial under domain, generator, and repetition-penalty shifts; supervised detectors are already strong on many simple adversarial perturbations.}}

\subsection{Strategic behavior between \aigen text detectors and AI users}
\label{sec:results_adversarial}
We simulate a multi-turn detection-evasion game; in each turn, the detector trains and releases a \detectionmodel trained on all observed \aigen text from prior turns, and the evader produces a humanizer prompt that rewrites AI text to evade detection by the most recent \detectionmodel.

\paragraph{Adversarial humanizer development.}
{As the evader, we develop rewriting (humanizer) prompts $h$ that can ``evade'' a given fixed model $f$: given AI text $x$, the humanizer prompt instructs an LLM to rewrite $x$, producing text $h(x)$ that is predicted as human-written by $f$, while preserving the substance of $x$. We use a coding agent (Claude Code) to generate candidate prompts in an autoresearch loop \citep{karpathy2026autoresearch}}. The coding agent observes the \aigen text from its candidate prompts, along with the \detectionmodel's predictions. It updates the prompts by analyzing what texts evade the \detectionmodel, until it produces a prompt that consistently yields \aigen text that evades the \detectionmodel. Our approach only sees data from scientific abstracts -- and so tailors its strategy to bypassing the detectors on such tasks; however, we foresee this approach being useful in other settings, and it underscores the ease with which humanizers can be optimized against a given \detectionmodel. Appendix \ref{app:claude} contains additional details on our approach, which can be seen as a purely agent-driven, in-context analogue of gradient-based or logprob-based prompt optimization \citep{zhu2024advprefix, khattab2023dspy}; \Cref{tab:sampled_sentences_xz} contains sample generations.

\paragraph{Experiment overview.}
{In each iteration, the detector releases a model trained on AI text generated by the prompt in the previous iteration (in the first iteration, for our models, we use the naive prompt from \citet{liang2024mapping}); the evader develops a humanizer prompt optimized against this model. For supervised learning, we evaluate the released model against the new adversarial generations, and only retrain before the next iteration (i.e., the detector eventually receives labeled training data). PU + TTA can adapt to the new generations -- we retrain it on known human text, and an unlabeled set drawn from the test distribution (containing new adversarial generations).  
We continue for three total iterations, and fix gpt-oss-120b to generate AI text. 

In \Cref{fig:adversarial-1} (left), we show results from the first iteration, against our DistilBERT model (supervised), Pangram,\footnote{For Pangram, we use their discretized predictions of text as either ``AI-written'', ``AI-assisted'', or ``human-written''; any text predicted as ``AI-assisted'' receives an accuracy of 0.5 (since all of our text is either fully \aigen or human-written).}, PU + TTA, and PNU + TTA; in this iteration, we use a prompt optimized against Pangram. \Cref{fig:adversarial-1} (right) shows results from subsequent iterations; the evader optimizes against the previous round's model for the model against which it is evaluated (as we cannot retrain Pangram, we exclude it here. We also exclude PNU + TTA for computational efficiency, given the performance of PU + TTA).
In \Cref{fig:adversarial-2}, we vary how many adversarial samples are included in the unlabeled set for PU + TTA in the first iterations, {given a fixed number of naive \aigen sentences (generated using prompt in \citet{liang2024mapping})}. We show AI recall rates, with other metrics in the Appendix; all models perform well on human text, with few false detections. %

\begin{figure}[tb]
\begin{subfigure}{.56\linewidth}
  \centering
  \includegraphics[width=\linewidth]{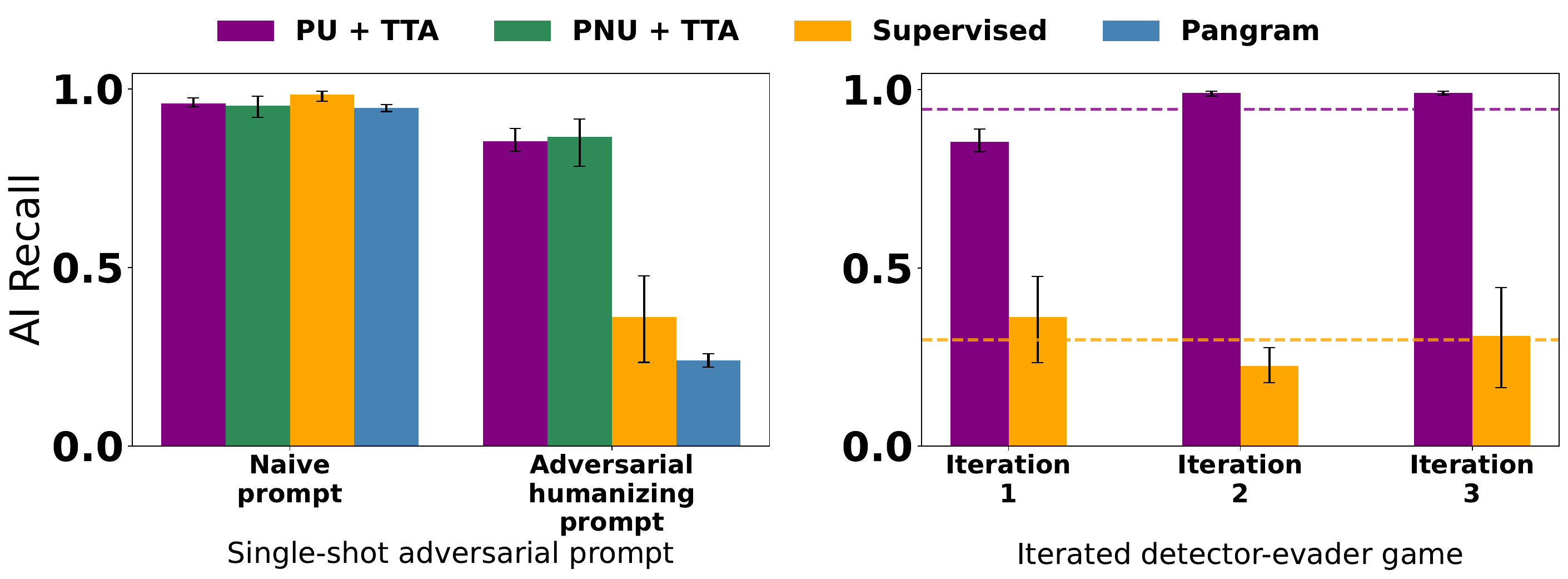}
  \caption{AI Recall in adversarial settings}
  \label{fig:adversarial-1}
\end{subfigure}%
\begin{subfigure}{.44\linewidth}
  \centering
  \includegraphics[width=\linewidth]{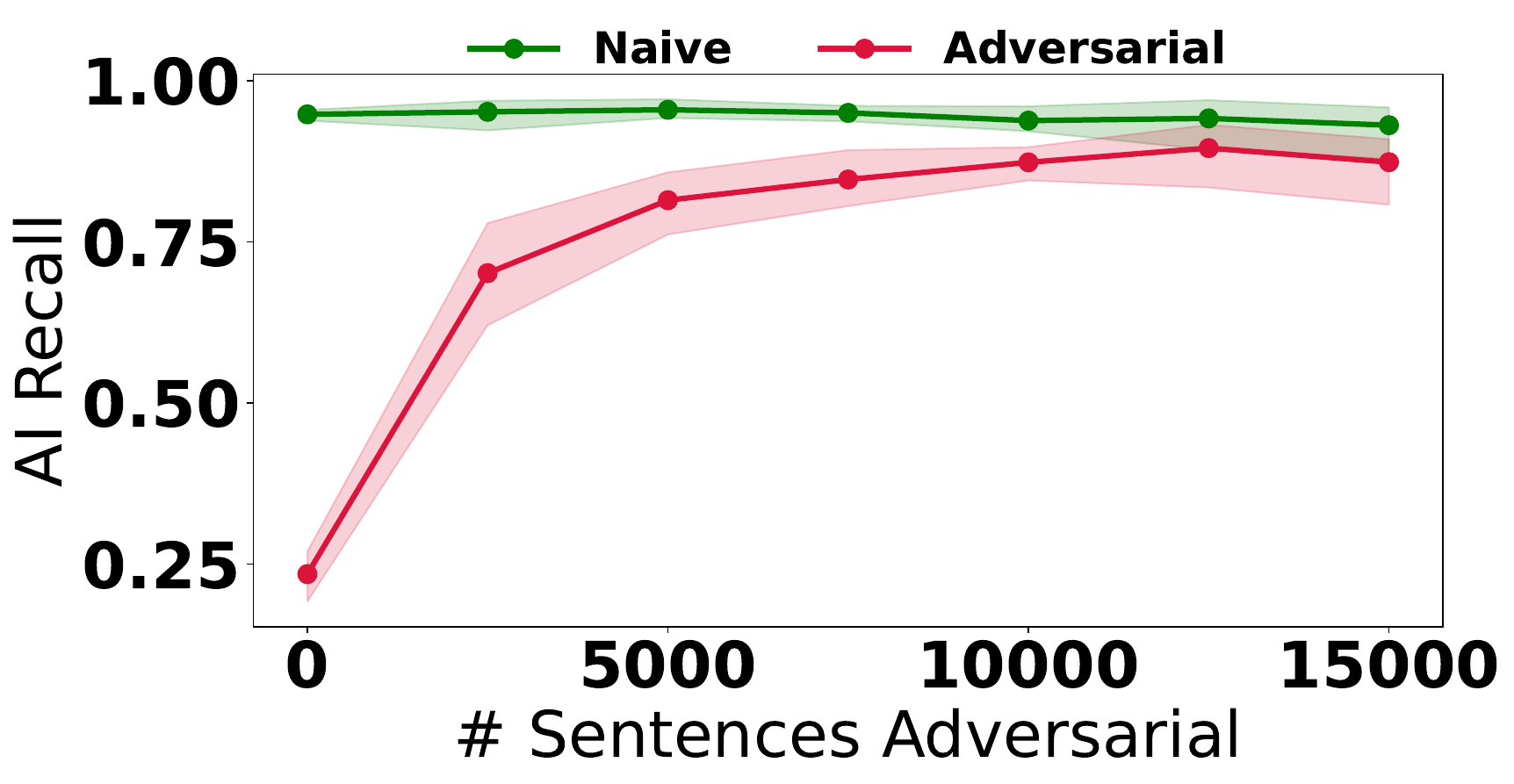}
  \caption{Varying adversarial data size for PU + TTA}
  \label{fig:adversarial-2}
\end{subfigure}%
\caption{(a) We simulate the back-and-forth between detectors and evaders. PU + TTA and PNU + TTA are robust, while supervised methods are not. (b) Only a small amount of unlabeled adversarial data is needed to accurately detect it using TTA methods. 
}
\label{fig:adversarial}
\end{figure}

\paragraph{Results: Supervised learning can be defeated, while TTA is robust.} Our autoresearch loop develops rewriting prompts that evade any given supervised model. For instance, while Pangram is near-perfect on naively generated AI text (balanced accuracy of 97.3\%; recall of 94.7\% on \aigen text), an adversarially generated rewriting prompt can produce AI text that evades it (balanced accuracy of 61.8\%; recall of 24.1\% on \aigen text). In contrast, adapting to unlabeled test-time data allows TTA to remain robust {(for both PU + TTA and PNU + TTA, on the first adversarial prompt)}. This remains true across multiple iterations; e.g., at the final iteration, PU + TTA has a balanced accuracy of 95.0\% and AI recall of 99.0\%; supervised learning has a balanced accuracy of 60.9\% and AI recall of 30.8\%. Compare our results with those of \citet{masrour2025damage}, whose evaluations show that training on one humanizer increases robustness against unobserved humanizers. We find that while it may be \textit{harder} to optimize against such a model (it took Claude Code more attempts to find a prompt that beat each classifier in later iterations), we can still do so successfully. Further, TTA is essential: a PU classifier without access to adversarial test-time unlabeled data can be defeated like the supervised classifiers.\footnote{During the autoresearch loop against PU, we held the unlabeled batch constant until it was defeated. Then, we evaluated on an updated unlabeled batch that included adversarial generations. (This likely makes our simulation conservative: in practice, if a humanizer is optimizing against a constantly adapting TTA detector, it may never successfully beat the detector).}

{The RAID benchmark analysis (Appendix \ref{sec:raid}) provides similar evidence. Their adversarial attacks are static rather than continual (i.e., applying a fixed transformation such as removing whitespaces, rather than adapting the rewriting strategy to the release of a specific \detectionmodel). We nevertheless find one attack (``homoglyph'') where the supervised model struggles, but PU + TTA does well (gain in AI recall of 40.0\% for PU + TTA). Both approaches perform well on the other attacks.}

\paragraph{Results: A small amount of unlabeled test-time data drawn from the test distribution is enough.} Only a small fraction of the unlabeled data needs to be adversarially generated for the detection model to classify adversarially generated \aigen text well (\Cref{fig:adversarial-2}). Furthermore, performance on naively generated text remains high even as it makes up a smaller fraction of the unlabeled set, suggesting that PU + TTA can simultaneously do well on multiple AI text distributions. This suggests that test-time adaptation is a promising approach for real-world detection even in adversarial settings. { \Cref{fig:mini} contains a sensitivity analysis of the performance of PU + TTA with respect to the amounts of adversarial and naive \aigen text; we find that relatively small amounts of in-distribution \aigen text in the unlabeled set are sufficient for good detection performance. }

Next, we consider natural distribution shifts, as caused by different LLM writing (e.g., the release of new LLMs with distinct writing styles) and different human writing (e.g., temporal distribution shift).

\subsection{Natural distribution shift across LLMs as new models are released}
\label{sec:results_llm}

\begin{figure}[t]
\begin{subfigure}{.5\linewidth}
  \centering
  \includegraphics[width=\linewidth]{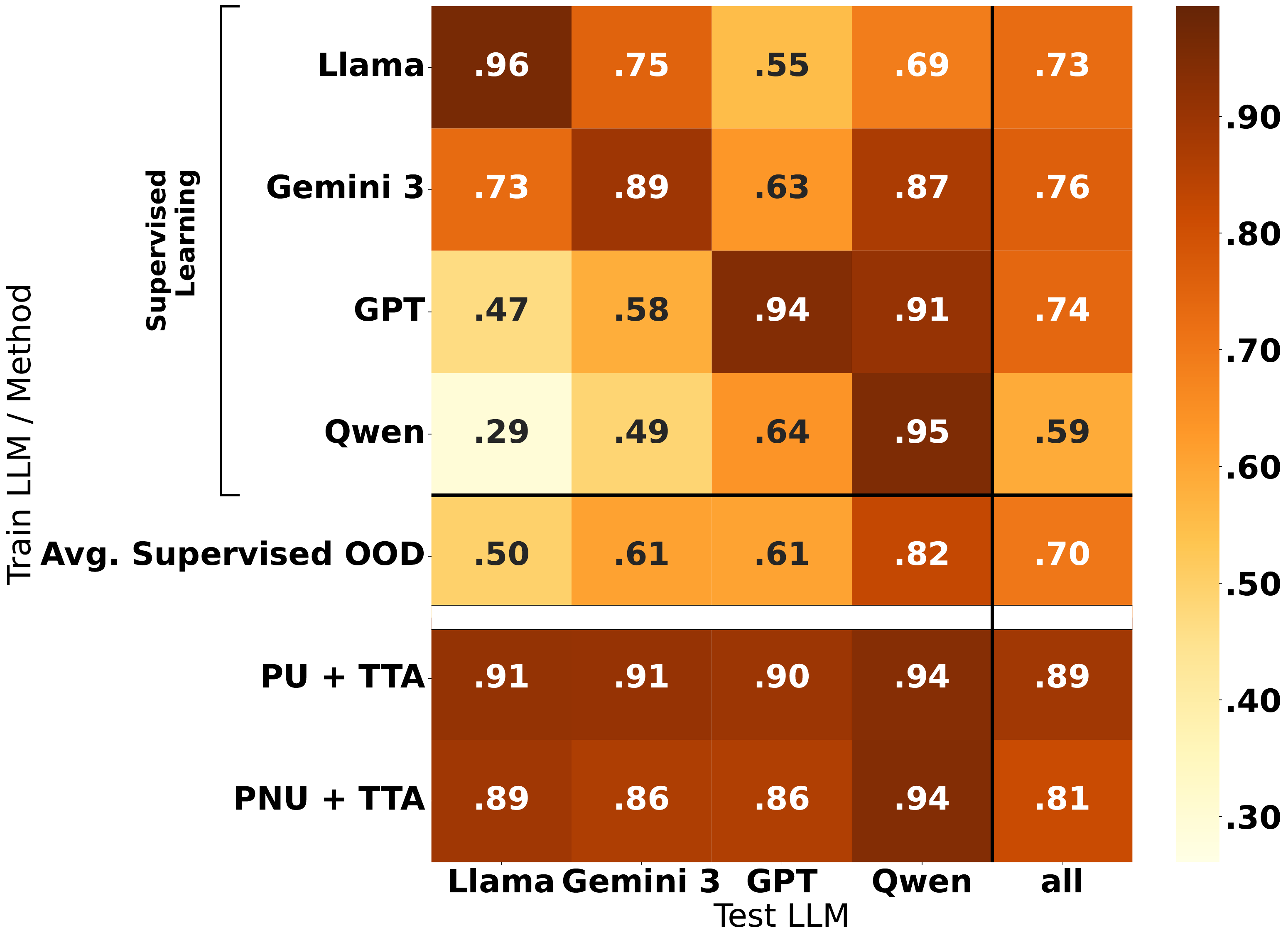}
\end{subfigure}%
\begin{subfigure}{.5\linewidth}
  \centering
  \includegraphics[width=\linewidth]{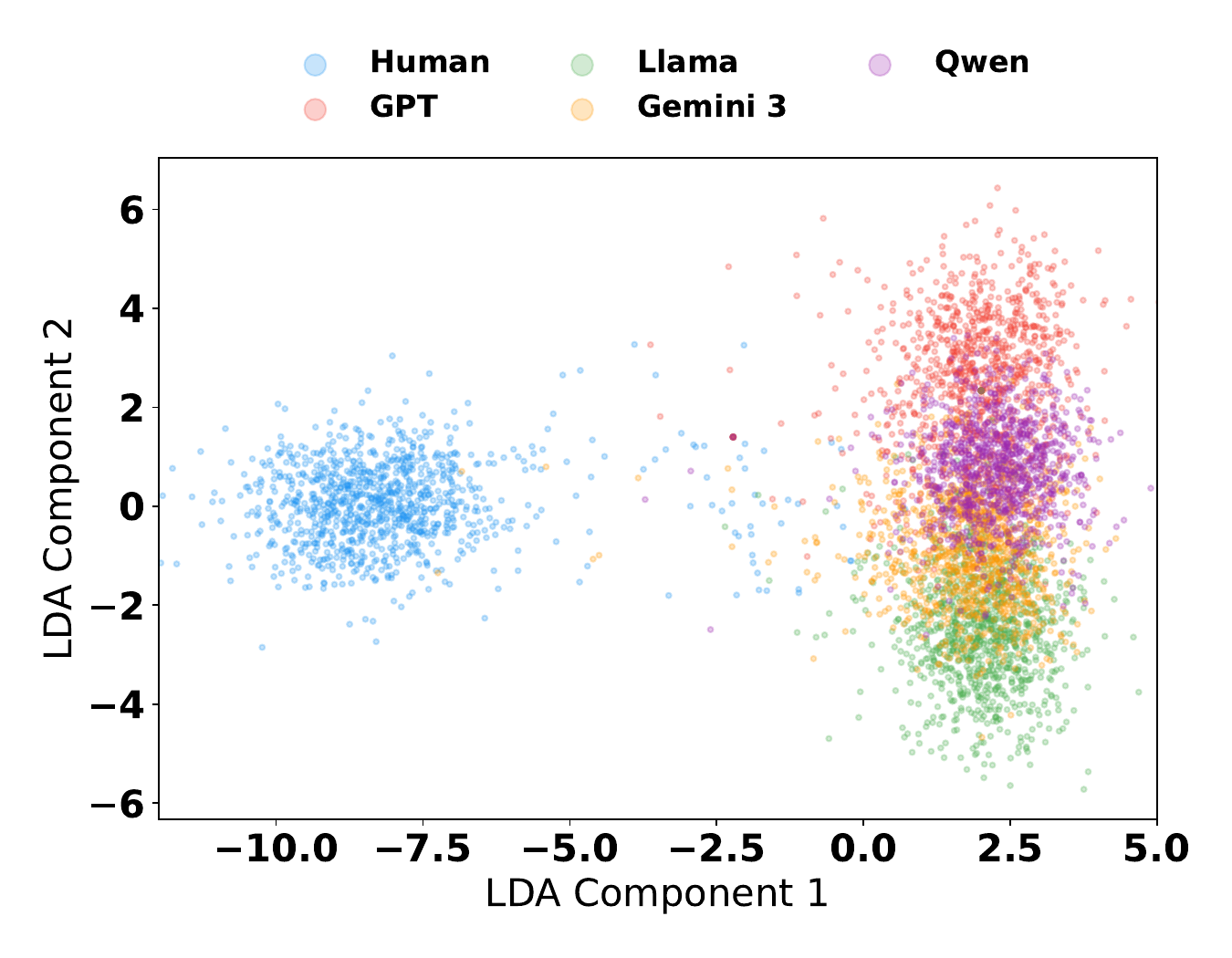}
\end{subfigure}

\caption{Detectors trained on one LLM have low recall on the outputs of other LLMs. TTA performs well on the test distribution without labeled \aigen text. In projected embedding space, human and LLM text are well-separated, and model similarity correlates with OOD performance. On the left, ``Avg. Supervised OOD'' refers to the average of the off-diagonal elements for supervised learning for that respective column (i.e., for a given test distribution, averaging the performance over different train distributions). Similarly, for PNU + TTA, we report the average over the \pnu models trained on labeled \aigen text not generated by the test-time LLM. For PU + TTA, since no train distribution is used, only one model is trained, using the test distribution as the unlabeled data. 
}

\label{fig:llm-shift}
\end{figure}

In \Cref{fig:llm-shift}, we train supervised models on labeled data produced by one LLM and evaluate on text produced by other LLMs. We also evaluate PU + TTA and PNU + TTA, which see unlabeled text from the test-time distribution. We plot AI detection rates, with other metrics in the Appendix. Here and in \Cref{sec:results_temporal}, we use the naive prompt to generate text.

\paragraph{Results: Supervised learning performs poorly out-of-distribution, while TTA methods are robust.} Supervised detectors perform well on \textit{in-distribution} data, as shown by the main diagonal in \Cref{fig:llm-shift} (left); both balanced accuracy and AI recall always exceed 89\%. However, the off-diagonals reveal they perform poorly on the outputs of LLMs not seen during training. For instance, a detector trained on Llama 3.3 70B Instruct outputs achieves low AI recall on abstracts mirrored with gpt-oss-120b (55\%). In contrast, PU + TTA and PNU + TTA -- whose unlabeled set includes text from the evaluation LLM -- are robust; even without labeled AI text, detection rates exceed 90\%, close to or surpassing that of a supervised detector trained in-distribution. {TTA further performs relatively well simultaneously on multiple distributions of AI writing (see ``all'' column).} {We find similar results on the RAID benchmark when analyzing performance on in- and out-of-distribution LLMs.}

The challenge of out-of-distribution LLMs extends to Pangram. When GPT 5.4 was released on March 5, 2026, we found that Pangram predicted its abstracts  as human-written a majority of the time (AI recall rate of 34.0\%); in contrast, Pangram is near-perfect for older models (see \Cref{fig:pangram}). {Notably, Pangram's performance increased significantly following the release of the new Pangram 3.3 model in May 2026. This delay, but eventual high performance, reflects the philosophy of improving performance by finding labeled, test-distribution data to retrain the \detectionmodel.} \footnote{Pangram also ran their own analysis at this time, and reported near-perfect (99.5\%) AI recall \citep{masrour2026pangram} on GPT 5.4. It is unclear why the results differ; possible explanations include text length, and that our prompt and data might also be OOD.}

\paragraph{Results: Model similarity explains out-of-distribution performance.}
We now ask, does training on a \textit{similar} but not identical model correlate with increased out-of-distribution performance? To analyze this, we project (using Linear Discriminant Analysis) embeddings of generations from each model into two dimensions and visualize them in \Cref{fig:llm-shift} (right). We find that human and \aigen writing is largely separable, and that the writing styles of each LLM are measurably distinct, even in two dimensions. Furthermore, distances in embedding clusters correlate with out-of-distribution performance (e.g., Llama and Gemini 3 are close, and a supervised detector trained on one model performs comparatively well out-of-distribution on the other). See additional supporting analysis and experimental details in \Cref{app:lda}. In \Cref{fig:gemini}, we repeat the analysis but using five LLMs from the Gemini family. We generally find that supervised models perform better out-of-distribution within this family, reflecting that the models are more correlated.

\subsection{Temporal distribution drift in human writing}\label{subsec:temporal}
\label{sec:results_temporal}

\begin{figure}[t]
\begin{subfigure}{.5\linewidth}
  \centering
  \includegraphics[width=\linewidth]{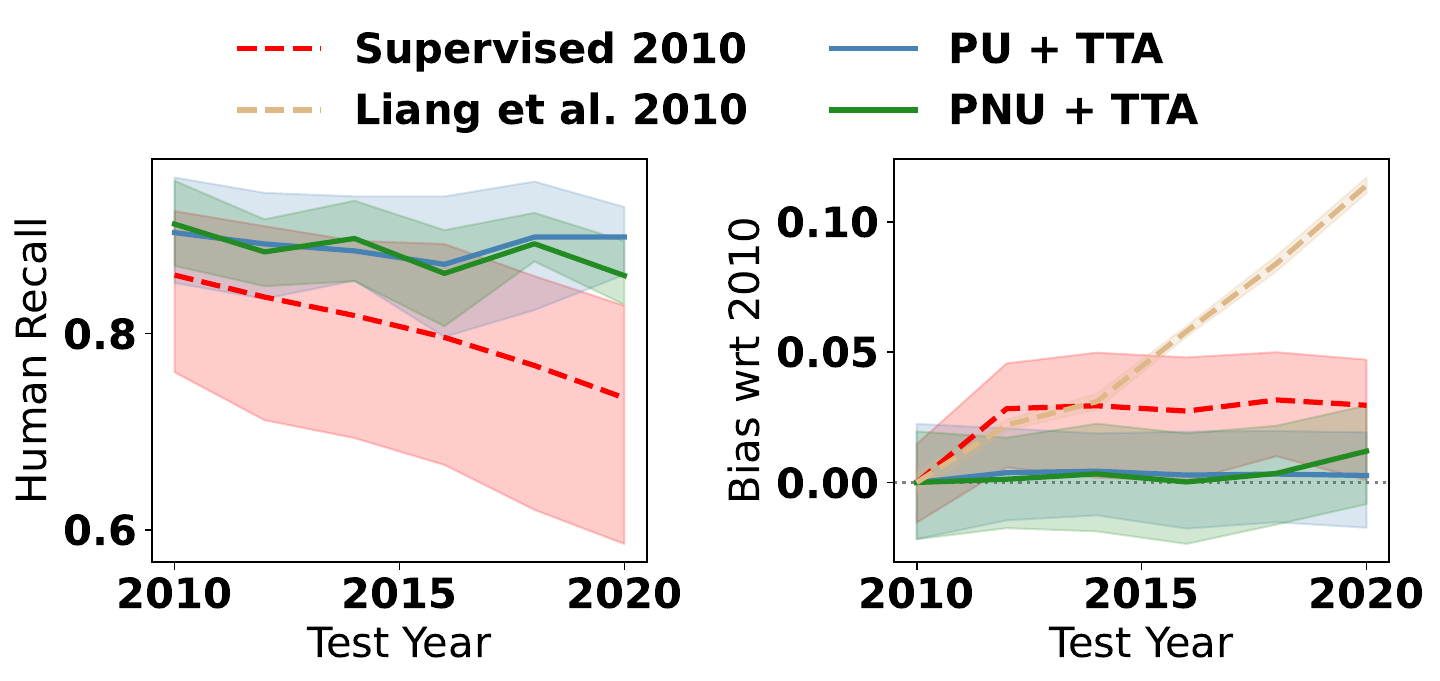}
  \caption{Detection of human text over time}
  \label{fig:droptemporal-1}
\end{subfigure}%
\begin{subfigure}{.5\linewidth}
  \centering
  \includegraphics[width=\linewidth]{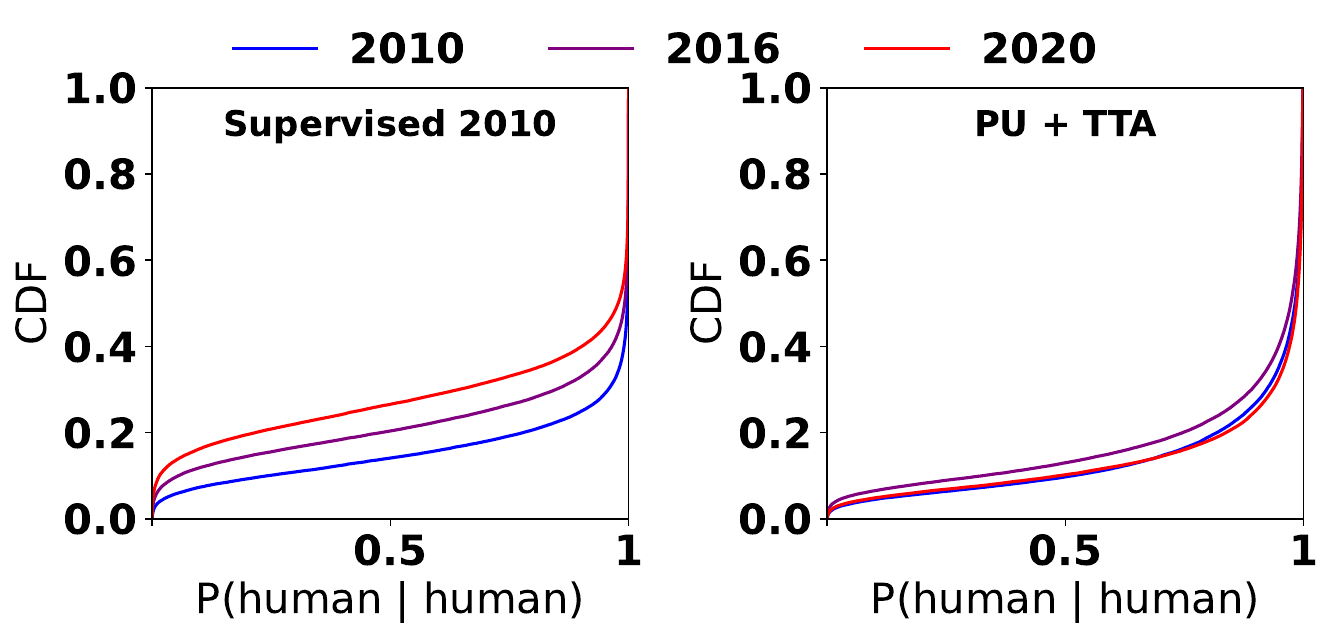}
  \caption{Distribution of predictions on human text}
  \label{fig:droptemporal-2}
\end{subfigure}

\caption{(a) Due to temporal distribution shift in human writing, models trained in 2010 degrade in predictive quality and overestimate LLM prevalence in the test set. We plot human recall and relative bias in estimated proportion of \aigen text from the 2010 estimate on held-out data through 2020. Supervised methods are less robust than TTA. (b) Distribution of predictions.}

\label{fig:droptemporal}
\end{figure}

Finally, we quantify the effect of distribution shift in human writing -- as pre-2022 human-written text is increasingly out-of-distribution with respect to current-year writing, supervised models relying on known human data may perform poorly.
However, to measure this effect, we run into the same challenge: we do not have easy access to post-2022 verified human-written text to evaluate models. Instead, we conduct the following simulation: suppose that LLMs were widely adopted after \textit{2010}, so supervised detectors must train on 2010 or pre-2010 human text. Then, we can measure detector performance on a test set of known human text and AI mirrors from 2010-2020. Since we are interested in human distribution shift, we primarily report the recall rate on human writing; for prevalence estimation, we plot the difference in the estimated proportion of \aigen writing at each year and the estimate at 2010 (i.e., measuring the drift in estimated prevalence of \aigen text). {We train PU + TTA {and \pnu + TTA} on known \textit{LLM}-generated text (mirrors of human abstracts from the evaluation year) as the positive set, and a mixture of \aigen and human text from the evaluation year as the unlabeled set}.

\paragraph{Results: Supervised methods are susceptible to drift in human language, but TTA  is more robust.} \Cref{fig:droptemporal-1} (left) illustrates that as the test data becomes increasingly out-of-distribution, the individual-level prediction quality of a fixed supervised detection model gradually decreases on human writing (human writing recall decreases from 85.9\% in 2010 to 73.5\% in 2020). TTA does not degrade as much {(for PU + TTA and \pnu + TTA, recall on human writing is more stable over time, from 90.2\% and 91.1\% in 2010 to 89.8\% and 85.9\% in 2020, respectively)}. \Cref{fig:droptemporal-1} (right) shows the same for prevalence estimation: while PU + TTA has a stable prevalence bias from 2010 to 2020, the bias of  supervised methods substantially increases over time, as they increasingly predict recent, out-of-distribution human text as LLM-written. For the commonly used method of \citet{liang2024mapping}, bias increases by  11.4\%. 
When evaluated on only human text, the same method estimates that 15.1\% of sentences from 2020 abstracts are \aigen (but only 2.3\% in 2010). 
\Cref{fig:droptemporal-2} illustrates prediction distributions on human text for each model. 

Our results suggest that, while the drift in human writing may be slow, it visibly affects supervised learning performance. We note that actual drift in writing post-2022 may be faster than in our simulation, if humans adapt to LLMs and adopt their styles. 
An important observation is that while PU + TTA (with LLMs as the positive class) is more robust to human distribution shift, it does not perform as well on detecting LLM text (Appendix \ref{app:cdf}). It may be that detectors need to see labeled human writing to perform well overall (perhaps because human writing is naturally more diverse than LLM writing, so it works better as the labeled than unlabeled set); our PU setup in this subsection only sees labeled LLM writing. PNU + TTA -- which sees both unlabeled in-distribution and labeled out-of-distribution human data -- combines the performance of PU + TTA and supervised learning methods: while it does as well as PU + TTA on human recall and precision, it does \textit{almost} as well as supervised learning alone on AI recall. Thus, we expect that a well-trained and tuned positive-negative-unlabeled method would perform best in practice. We further note that falsely labeling human writing as AI-generated, as supervised models do here, may be more harmful than failing to detect LLM text.

\section{Related work}
\label{sec:related}

\aigen text detection is a rapidly growing field. 
State-of-the-art methods used in practice are primarily supervised, encompassing traditional machine learning (e.g., neural network-based), contrastive learning, linguistic-inspired feature engineering, and intrinsic dimensionality methods \citep{emi2024technical,li2025s,tulchinskii2023intrinsic,liu2023coco, la2024contrasting,guo2024detective,kim-etal-2024-threads,soto2024few,su2023hc3,verma2024ghostbuster}. A related body of work focuses on detecting AI-written code with supervised learning methods \citep{shi2024between, orel2025codet, xu2025distinguishing}. As we are primarily concerned with test-time adaptation, we focus on the works closest to ours, and refer to surveys on LLM detection for more details \citep{wu-etal-2025-survey,xiang2026ai,kumarage2024survey,jawahar2020automatic}.

Adversarial humanization and domain adaptation are persistent concerns in \aigen text detection, as highlighted in the above surveys. Benchmarks show that existing methods are not robust to out-of-distribution data, including adversarial humanization \citep{dugan-etal-2024-raid,li-etal-2024-mage,wang-etal-2024-m4gt,wang-etal-2024-m4,yu-etal-2025-evobench,liu2025generalization}, and substantial work highlights the adversarial ``technical arms race'' nature of the detection task \citep{jabarian2025artificial, leibowicz2021deepfake}. Here, we show that adversarial behavior and new LLM releases continue to be a concern for state-of-the-art classifiers that are near-perfect when evaluated in-distribution. Further, to our knowledge, the emerging challenge of finding verified, post-2022 human writing has not been articulated.

A supervised-learning response to adversarial humanization is to incorporate \textit{labeled} adversarial data into the training set of the detector. Pangram evaluates itself and re-trains on commercial humanizer tools, and claims reasonably high performance \citep{masrour2025damage}. \citet{hu2023radar} advocate training-time retraining of a detector alongside a paraphrasing model, in a GAN-like setup; \citet{koike2024outfox} use in-context learning for an LLM-based \aigen text detector. Crucially, the above approaches are all susceptible to the challenges we articulate: they rely on \textit{labeled} data from an adversarial humanizer, which may be difficult to obtain in practice, and are susceptible to continual shifts, e.g., future humanizers that are specifically designed to evade the current detector. %

Another response has focused on zero-shot, unsupervised, and semi-supervised methods \citep{mitchell2023detectgpt, bao2023fast, su2023detectllm, gehrmann2019gltr,galle2021unsupervised,tian2023multiscale,bhattacharjee2023conda,11401963,he2025detree,park2025enhancing,hans2024spotting}. Crucially, these methods are largely designed to leverage \textit{priors} (the researcher's or learned) about the distribution of human- and LLM-written text, e.g., that AI repeats higher-order n-grams more often than humans \citep{galle2021unsupervised} -- these approaches may be more robust to distribution shifts, if the assumptions hold. However, they are not designed to use the test distribution itself, and so they may be susceptible to shifts that (are designed to) break their priors.

Our work is most related to \citet{bhattacharjee2023conda}, which uses labeled data from a source domain and unlabeled data in a \textit{fixed} target domain, with a contrastive loss function. Our setting differs in timing and objective. We do not assume a fixed target data generator available during detector development; our target distribution is the deployment-time test-time mixture, potentially including a newly released model or humanizer selected after the detector is public. Rather than learning a domain-invariant representation, we advocate test-time adaptation on the target test-time batch itself, exploiting homogeneity in the deployment data. 

In contrast to the above work, we advocate the use of semi-supervised learning as a way to \textit{adapt at test-time}, using the unlabeled test distribution to learn a model that is robust to distribution shift\esif{ between the training and test distributions}. Our work is complementary to the above approaches: a detector in practice should both learn from well-collected training data and adapt during inference (e.g., the work of \citet{bhattacharjee2023conda} applied to the test-time distribution would fit our framework). To our knowledge, test-time adaptation methods have not been proposed or evaluated for \aigen text detection; we believe this adaptation is necessary (cf. \Cref{sec:conceptual}).

As such, our work contributes to a growing line of work on the emerging paradigm of test-time adaptation (TTA) \citep{sun2020test,zhang2022memo,wang2021tent,liang2025comprehensive,wang2022continual,gong2022note,sahoo2020unsupervised}, which uses unlabeled test-time data\esif{, in an unsupervised manner,} to respond to distribution shifts. 
Technically, our work differs in the use of PU learning (with mixture prevalence estimation) for test-time adaptation, rather than the more common use of entropy minimization or self-supervised auxiliary losses; we do so as our setting also has the goal of detecting the prevalence of \aigen text in the test distribution, which is not a standard goal of test-time adaptation. However, we foresee that other test-time adaptation methods may also be effective for \aigen text detection. %

\section{Conclusion}
\label{sec:conclusion}

We theoretically frame and empirically examine continual distribution shifts in LLM text detection. Drawing insights from strategic classification and positive-unlabeled learning, we show the potential of the emerging paradigm of \textbf{test-time adaptation} in LLM detection. 
We further provide concrete recommendations for practitioners who train detection models. In particular, our methods may enable existing detectors to adapt to modern text -- e.g., post-2022 text scraped from the internet, or from recent API calls to a \detectionmodel. Beyond the PU and PNU learning methods studied here, other models within the test-time adaptation toolkit or other semi-supervised approaches could offer additional improvements. %

We broadly find that ``naive'' AI usage can be reliably detected by existing methods, both in the scientific abstracts setting and on the RAID dataset -- supervised detectors trained on sufficiently in-distribution data can have near-perfect accuracy, even with text as short as a sentence (and, as we analyze, this is not due to trivial differences such as length or the presence of non-ASCII characters). Furthermore, while strategic, adaptive prompting of LLMs to fool a pre-trained \detectionmodel is not detectable by a fixed \detectionmodel, we find that test-time adaptation improves substantially upon state-of-the-art deployed solutions. 

\paragraph{Limitations.} Our measurement quantification leaves open the question of the long-term validity of the task of AI detection itself -- for instance, the convergence of the distributions of human and \aigen writing would invalidate any learning method, cf. \Cref{obs:tvbound}. More generally, our findings carry implications for the long-term limitations of training AI detection models, given the challenges of data curation.

Relatedly, while we advance \aigen detection methods, we do not speak to the appropriateness of any particular application of such methods. In many settings,  the costs of incorrect detection, especially falsely labeling human writing as LLM-written, may be high. For this reason, in this work, we do not apply our methods to labeling any particular (unknown) piece of text as \aigen. The use of detection methods in practice should account for concerns such as the cost of mistakes; while our work seeks to reduce the prevalence of labeling errors, it cannot do so perfectly.

Using our approach in practice may require several steps, including reengineering pipelines to adapt models with batches of received data from the API. Furthermore, we do not analyze performance of our models with multiple kinds of distribution shift simultaneously, such as with both LLMs and humans changing over time; TTA methods may also be susceptible to data poisoning attacks. Handling such shifts may require new kinds of PNU + TTA methods, that encode appropriate levels of inductive biases for how much to rely on labeled, out-of-distribution data (labeled positives and negatives) versus the unlabeled, in-distribution data.

\section*{Acknowledgements}

We thank Sarah Dean, Thorsten Joachims, Divya Shanmugam, Vibhhu Sharma, and members of the Garg lab and Cornell AIPP for useful feedback and discussion. This project used API credits provided by Pangram to use their detectors. NG is supported by NSF CAREER IIS-2339427, NASA, the William T. Grant Foundation, and research awards from the Cornell Tech Urban Tech Hub, Google, Mastercard, and Amazon.

\nocite{langley00}

\clearpage

\bibliographystyle{icml2026}

\bibliography{example_paper}

\newpage

\appendix
\onecolumn

\section{Additional discussion}

\subsection{Positive-unlabeled learning}
Beyond our chosen methods, significant prior literature in semi-supervised learning has demonstrated the usefulness of positive-unlabeled (PU) learning. The two tasks of mixture proportion estimation (MPE; equivalent to prevalence estimation) and individual-level classification are intertwined in the PU learning literature, as machine learning classifiers trained with PU methods frequently estimate a class prior $\alpha$ of the proportion of positive data in the unlabeled set. \citet{ivanov2020dedpul}, \citet{bekker2018estimating}, and \citet{ramaswamy2016mixture} propose MPE methods given PU data to estimate $\alpha$. \citet{elkan2008learning, du2014analysis, kiryo2017positive, garg2021mixture} additionally propose methods that, given an estimate or pseudo-estimate of $\alpha$, train a machine learning classifier on the positive and unlabeled data, using the estimate of $\alpha$ to alter the training loss on the unlabeled data. We utilize the methods proposed in \citet{garg2021mixture}; namely, their MPE method, named Best-Bin Estimation (BBE), which assumes that there exists within the distribution of model predictions on positive data a ``top bin,'' above which no predictions on negative data are present, and their model-training algorithm Transform-Estimate-Discard ($\text{(TED)}^{n}$).

\subsection{The validity of \aigen text detection}\label{app:validity}

A growing body of work questions both humans' and algorithms' ability to reliably distinguish \aigen text from human writing. \citet{jakesch2023human} show that people rely on inaccurate heuristics when identifying \aigen text, which can be exploited; i.e., LLMs can be prompted to generate text that is judged as more human-like than actual human writing. \citet{kadoma2025generative} demonstrate that such heuristics systematically disadvantage certain demographic groups, whose human-written text is more likely to be misclassified as \aigen. In contrast, \citet{russell2025people} find that people who frequently use LLMs are better at recognizing \aigen text, and report strong performance by Pangram on their dataset. Our work does not directly weigh in on the applications where AI text detection should be used -- some applications may be appropriate, while in others the harms of incorrect detection, especially if disproportionately affecting some groups, may be too high.

 Some works argue that the binary framing of \aigen versus human-written text is increasingly inadequate, noting a wide range of LLM usage categories such as AI-assisted writing and partial editing that may weaken the validity of simple binary prediction models' predictions \citep{thai2025editlens, saha2025almost}. Our TTA approach should extend to such settings.

\section{Experimental Details}

\subsection{Additional experimental details}
\label{app:additionaldetails}

\paragraph{Model training and evaluation.} All PU and supervised models trained by fine-tuning DistilBERT use a learning rate of 0.00001, weight decay of 0.0005, and 3 epochs. For PU + TTA models, we always train on data drawn from the test distribution that is separate from the evaluation data itself. The procedure of $\text{(TED)}^{n}$ has a calibration step (also using positive and unlabeled data); we ensure that the training set for PU + TTA and the calibration set combined are no larger than the training set for supervised learning.  
For uncertainty quantification during evaluation, we obtain metrics via bootstrapping -- we independently sample multiple train/test splits per-model; furthermore, given a model and test set, we get predictions on each sentence in the test set with the model, and re-sample the predictions 2,500 times to obtain a distribution of point estimates, from which we can obtain confidence bounds. All confidence intervals and error bars in our experiments are at a 95\% level.

\paragraph{Curating human and AI writing.} To curate known \aigen and human writing, we sample 10,000 paper abstracts per year from alternating, even-numbered years 
between 2010 and 2020 (inclusive), yielding 6 years of data and an upper bound  of 60,000 total human-written abstracts combined between all models' training and evaluation sets. 

We generate \textit{non-adversarial} or naive synthetic mirrors of each abstract using the method described in \citet{liang2024mapping} -- the LLM is prompted to summarize the abstract, generate a new abstract from the summary, and check for grammar. In the strategic classification experiments, the LLM (see \Cref{tab:llm-mappings} for an exhaustive list of all LLMs used across experiments) may also be prompted with a strategy generated by an autoresearch loop using Claude Code.
We treat each sentence of each abstract as an individual data point in our training and test data, and perform basic data cleaning by replacing non-ASCII characters with the \texttt{unidecode} library. See \Cref{tab:sampled_sentences} and \Cref{tab:sampled_sentences_xz} for examples of \aigen sentences -- we remark that while the writer is often difficult to ascertain with the human eye, our detection models can separate the distributions of human and AI writing surprisingly accurately. All models are evaluated on held-out data composed of equal proportions of human- and \aigen sentences, although the construction of the \aigen sentences may differ by evaluation (e.g., be from an LLM not seen during training, using an adversarially generated prompt to fool the model, or from a paper written after the year of training). For evaluation, we collect all metrics described in \Cref{tab:metric-mappings}.

\paragraph{Strategic classification experiments.} For experiments in which we examine adversarially generated \aigen text: we fix the distribution of all human writing to abstracts written in 2020, and vary the distribution of \aigen writing to potentially be from adversarially generated prompts by an autoresearch loop. To train all detection models, 4,000 human abstracts form the labeled positive set; for supervised models, these abstracts and their synthetic mirrors form the labeled training set. For PU + TTA, we use 4,000 additional abstracts and their synthetic mirrors (either adversarially or non-adversarially generated) to construct an unlabeled training set with 75\% of sentences being \aigen. If outputs from multiple adversarially generated prompts are available, the unlabeled set consists of evenly-distributed proportions of sentences from each of the prompts. The final 2,000 human abstracts and their mirrors (sometimes in-distribution; sometimes generated using a strategically chosen prompt) are reserved for evaluation. To ensure robustness, we bootstrap 5 training and evaluation splits per trained model; given the same random seed, the supervised and PU models share identical human abstracts in training and an identical evaluation set. All synthetic mirrors are generated with gpt-oss-120b; adversarial prompts are generated using the process in \Cref{app:claude}. For insight on raw data, we include randomly sampled sentences from naive and adversarially generated prompts in \Cref{tab:sampled_sentences_xz}.

\paragraph{Natural LLM shift experiments.} For experiments in which we examine performance of \aigen text detectors on out-of-distribution LLM outputs: we fix all analysis to abstracts written in 2020. We evaluate on two sets of LLMs: {gpt-oss-120b \citep{agarwal2025gpt}, Llama 3.3 70B Instruct \citep{grattafiori2024llama}, Gemini 3 Pro Preview \citep{team2023gemini}, Qwen3-Next 80B A3B \citep{yang2025qwen3}}, and a set of five Gemini models of similar architecture but different training dates (Gemini 3 Pro Preview, Gemini 2.5 Flash, Gemini 2.5 Pro, Gemini 2.0 Flash, Gemini 2.0 Flash-Lite), with 2,500 mirrors generated per model. In the experiment in which we assess Pangram's performance on a newly released LLM, we additionally share results for gpt-oss-20b and GPT 5.4. For PU + TTA (human-as-positive), we train on the first 50\% of rows (where each row contains a human abstract and its LLM mirror), use another 25\% as a calibration set for the $\hat{\alpha}$ hyperparameter in $\text{(TED)}^{n}$, and hold out the final 25\% as the evaluation set; for supervised models, we train on the first 75\% of rows and hold out the same final 25\% for evaluation. For PU + TTA, we use 25\% of the human abstracts in the training set for the labeled positive set. We use the remaining 75\% of the human-written abstracts in the training set and their synthetic mirrors for the unlabeled set, subsampling by sentence so that 50\% of sentences are negative (\aigen). For supervised models, we mirror this construction, but with the labels swapped and with fully-labeled data (instead of an unlabeled set, we simply use only the human-written abstracts in the PU model's unlabeled set for training). The evaluation set comprises 157 positive abstracts and 468 unlabeled abstracts (from which we sample equal numbers of human and \aigen sentences). For BBE, both PU + TTA and supervised models are evaluated against the same held-out human positives with identical unlabeled data, although the set of \aigen text in the evaluation dataset may be generated using an LLM not encountered at train-time. We bootstrap 5 training and evaluation splits per trained model. For LDA: all text embeddings are made with Gemini Embedding 2. We sample 5,000 sentences per class (human, each LLM), fit the LDA on a training set comprising 80\% of the available sentences, and plot the remaining 20\% of sentences using the LDA dimensions as a test set.

\paragraph{Natural temporal shift experiments.} For experiments in which we examine temporal shift in human writing over time: for supervised models, we train on 2,500 human abstracts and 2,500 AI abstracts (evenly split between all LLMs used in this experiment). For PU + TTA, we train on a positive set of 2,500 AI abstracts, and an unlabeled set of 1,250 human abstracts and 1,250 AI abstracts. Further, for PU + TTA, we calibrate the $\hat{\alpha}$ in $(\text{TED})^{\text{n}}$ using an additional sample $1/3$ the size of the train set (a positive set of 834 labeled AI abstracts; and an unlabeled set of 417 human and 417 AI abstracts). After making the split by abstract, for the unlabeled set for PU + TTA, we additionally split each abstract into sentences and subsample sentences independently so that they are evenly split between human and AI sentences. 2,500 additional held-out human-written abstracts and their synthetic mirrors are used for each year's test set. Each year maps to its own evaluation set with equal amounts of held-out human-written and \aigen text. We bootstrap 10 training/evaluation splits for each year and training method (supervised, $(\text{TED})^{\text{n}}$). We use four LLMs to generate mirrors for each abstract: gpt-oss-120b, Llama 3.3 70B Instruct, Gemini 3 Pro Preview, and Qwen3-Next 80B A3B.

\paragraph{Computational resources.} All models are trained using a single A6000 GPU; in total, model training required 53 GPU hours for the models used in all strategic prompting experiments, 30 GPU hours for the models used in all out-of-distribution LLM experiments, and 40 GPU hours for the models used in the temporal shift experiments.

\paragraph{AI usage.} Outside of the use of AI agents as an integrated part of our methodology, we used AI tools throughout this project to write and critique plotting code or make minor edits throughout a repository (e.g., adjusting the text on a plot label across several plotting scripts). AI tools were additionally used for minor tasks such as searching for relevant papers in the domain, translating writing and figures from paper or Google Docs to LaTeX form, and to further iterate on the design of \Cref{fig:pipeline_and_feature_space}. We note that the outputs of all AI tools were individually vetted before integrating the contributions of the tool in the work itself.

\subsection{LLM and metric name mappings}
\label{app:llms}

\begin{table}[h]
\caption{LLM shorthand to full name mapping}
\label{tab:llm-mappings}
\renewcommand{\arraystretch}{1.3}
\begin{tabular}{p{3.5cm} p{4.5cm} p{4.5cm}}
\toprule
\textbf{Shorthand(s)} & \textbf{Full Name} & \textbf{Link} \\
\midrule
GPT & gpt-oss-120b & \url{https://arxiv.org/abs/2508.10925} \\
--- & gpt-oss-20b & \url{https://arxiv.org/abs/2508.10925} \\
--- & GPT 5.4 & \url{https://openai.com/chatgpt} \\
Llama, Lla & Llama 3.3 70B Instruct & \url{https://arxiv.org/abs/2407.21783} \\
Gemini 3 Pro, Gemini 3, Gem, 3 Pro & Gemini 3 Pro Preview & \url{https://deepmind.google/technologies/gemini} \\
2.5 Flash & Gemini 2.5 Flash & \url{https://deepmind.google/technologies/gemini} \\
2.5 Pro & Gemini 2.5 Pro & \url{https://deepmind.google/technologies/gemini} \\
2.0 Flash & Gemini 2.0 Flash & \url{https://deepmind.google/technologies/gemini} \\
2.0 Flash-Lite & Gemini 2.0 Flash-Lite & \url{https://deepmind.google/technologies/gemini} \\
Qwen, Qwe & Qwen3-Next 80B A3B & \url{https://arxiv.org/abs/2505.09388} \\
\bottomrule
\end{tabular}
\end{table}

\begin{table}[h]
\centering
\caption{Metric shorthand to full name mapping}
\label{tab:metric-mappings}
\renewcommand{\arraystretch}{1.3}
\begin{tabular}{p{5cm} p{7cm}}
\toprule
\textbf{Shorthand(s)} & \textbf{Full Name} \\
\midrule
Bal. Accuracy & Balanced accuracy on label-balanced test set \\
Human Recall & Accuracy conditioned on human-written text \\
AI Recall & Accuracy conditioned on AI-generated text \\
AUC & Area Under the ROC Curve \\
Avg P(human $\mid$ human) & Mean predicted probability of ``human'' class given human-written input; closely tracks Human recall \\
Avg P(human $\mid$ AI) & Mean predicted probability of ``human'' class given AI-generated input; inversely tracks AI recall (lower is better) \\
Bal. Cross-Entropy & Balanced cross-entropy loss, averaged equally over both classes (lower is better) \\
Bias & Difference between estimated and true prevalence of AI-generated text (relative to 2010 baseline in temporal experiments; closer to 0 is better) \\
Bias Avg P(AI) & Bias in the mean predicted probability of the ``AI'' class, over entire dataset (closer to 0 is better) \\
\bottomrule
\end{tabular}
\end{table}

\clearpage
\newpage

\subsection{\pnu + TTA, main experiments}

{\paragraph{General setup.}
The PNU models combine labeled positive and labeled negative data drawn from a \emph{source} distribution with unlabeled data drawn from a \emph{target} distribution. The labeled source data reuses the labeled training data of the corresponding supervised model for that source domain.
The unlabeled target data reuses the unlabeled training set of the corresponding PU + TTA model for that target domain, and the unlabeled portion of the calibration set reuses the PU + TTA calibration data for the target domain. As in PU + TTA, the calibration set is used to estimate the fraction of positives in the unlabeled training data. In training, the unlabeled target data is partitioned into pseudo-labeled positives and negatives; we place individual, tunable weights on the losses of the labeled positives, labeled negatives, pseudo-labeled positives, and pseudo-labeled negatives. Held-out evaluation data is identical to that used for the supervised and PU + TTA models in each setting, and we bootstrap the same number of train/evaluation splits per setting as the corresponding PU + TTA experiment. 

\paragraph{Adversarial experiments.}
The source distribution is human writing paired with non-adversarial (naive) mirrors, and the target distribution is the \emph{same} human-writing distribution paired with adversarial mirrors; human writing is the positive class. 
The labeled source data contains 1{,}000 human abstracts as labeled positives and 500 non-adversarial mirrors as labeled negatives; the unlabeled target data comprises 2{,}000 human abstracts and 2{,}000 adversarial mirrors. For calibration, we hold out 500 additional human abstracts as known positives, and draw 1{,}000 human abstracts and 1{,}000 adversarial mirrors for unlabeled target data. All human abstracts across the labeled, unlabeled, calibration, and evaluation sets are mutually disjoint.

\paragraph{Natural LLM shift experiments.}
The source distribution is human writing plus mirrors from a source LLM $X$, and the target distribution is human writing plus mirrors from a different target LLM $Y$; human writing is the positive class. For the labeled training set: from the first 75\% of LLM-$X$ rows (pairs of human abstracts and LLM mirrors, given that X was used to mirror the human abstract), we use the LLM-$X$ mirror sentences as labeled negatives and the human sentences as labeled positives in the same proportion as the supervised model, holding out 10\% of the human abstracts for calibration. The unlabeled target data reuses the PU + TTA unlabeled training set for LLM $Y$ (the first 50\% of LLM-$Y$ rows, subsampled by sentence so half of the sentences are \aigen). The calibration set pairs the held-out LLM-$X$ human positives with the PU + TTA LLM-$Y$ calibration unlabeled data.

\paragraph{Natural temporal shift experiments.}
We simulate a practitioner who holds labeled data anchored to a reference year (2010) but must detect \aigen text in a future year whose abstracts are an unlabeled mixture of human and LLM writing; LLM writing is the positive class. Assuming the practitioner can always mirror contemporary text with a current LLM, the labeled negatives (human writing) are drawn from 2010, while the labeled positives (LLM mirrors) are generated from the test year. The target data is a test-year mixture of human abstracts and LLM mirrors. Concretely, the labeled negatives reuse the 2{,}500 supervised 2010 human training abstracts; the labeled positives reuse 2{,}500 test-year LLM mirrors, and the unlabeled set reuses the PU + TTA test-year unlabeled training data (1{,}250 human abstracts and 1{,}250 mirrors, subsampled by sentence at $\alpha = 0.5$). The calibration set pairs 500 held-out test-year mirror positives with the PU + TTA test-year calibration unlabeled data (417 human and 417 mirror abstracts). Because the labeled positives and the \aigen portion of the unlabeled set are both produced by mirroring test-year text, which may itself contain \aigen writing, a fraction of the \aigen text consists of mirrors of mirrors (``double mirrors''); as in the PU + TTA setup, we treat this as admissible.}

\subsection{Sampled sentences from raw data (arXiv abstracts)}

To illustrate our raw data, in \Cref{tab:sampled_sentences} and \Cref{tab:sampled_sentences_xz}, we provide randomly sampled sentences from the arXiv data used to train detection models below (for naive writing from each LLM and adversarially generated AI text). For untrained human observers (like the paper's authors), these LLM-generated sentences may be difficult to distinguish from human-written sentences. However, perhaps surprisingly, they can be almost perfectly distinguished -- even at the sentence level -- by training an in-distribution supervised \detectionmodel, or with PU + TTA or PNU + TTA.

\begin{longtable}{|p{0.75\textwidth}|p{0.18\textwidth}|}
\caption{Sampled sentences by source model; naive prompt from \citet{liang2024mapping}}
\label{tab:sampled_sentences} \\

\hline
\textbf{Sentence} & \textbf{Source} \\
\hline
\endfirsthead

\hline
\textbf{Sentence} & \textbf{Source} \\
\hline
\endhead

\hline
\endfoot

A critical problem in the design of personalized recommendation systems is the sparse embedding layer, which, despite its importance, has seen limited acceleration efforts. & Llama 3.3 70B Instruct \\
\hline
This not only sheds light on the fundamental physics underlying laser-material interactions but also has significant implications for nanophotonic applications. & Llama 3.3 70B Instruct \\
\hline
These findings are consistent with anticipated values, thereby providing a proof of principle for the application of quartz chambers in TPCs for dark matter detection. & Llama 3.3 70B Instruct \\
\hline
The limitations of current approaches include the high parameter count and computational complexity of 3D models, as well as the inability of 2D models to leverage 3D contextual information from input images. To address these challenges, we propose a novel brain tumor segmentation method that combines the efficiency of a low-parameter 2D UNet architecture with the benefits of an attention mechanism and multi-view fusion. & Llama 3.3 70B Instruct \\
\hline
These images are designed to analyze DNN performance and enhance robustness through targeted data augmentation. & Llama 3.3 70B Instruct \\
\hline
Binary choice games serve as a fundamental bridge connecting economic theory with statistical physics, offering a robust framework for modeling collective decision-making. & Gemini 3 Pro Preview \\
\hline
We validate the efficacy of this approach by applying it to gradient and adaptive boosting algorithms across three distinct datasets: a synthetic dataset, the UCI Adult (Census) dataset, and a proprietary real-world credit scoring dataset. & Gemini 3 Pro Preview \\
\hline
Effective noise reduction is a critical requirement for hearing aid technology, yet current deep learning approaches face significant limitations when deployed in real-world scenarios. & Gemini 3 Pro Preview \\
\hline
While Convolutional Neural Networks (CNNs) have emerged as the dominant methodology for automated 3D medical image segmentation, particularly with Computed Tomography (CT) scans, they often exhibit inconsistent performance on edge cases and complex anatomical boundaries. & Gemini 3 Pro Preview \\
\hline
We propose Image2StyleGAN++, a flexible image editing framework that significantly extends the capabilities of the recent Image2StyleGAN method. & Gemini 3 Pro Preview \\
\hline
This fusion model attains a classification accuracy of \textbf{96.81\,\%} on the synchronized video-audio data, surpassing the single-modality baselines by 3.5\,\% and 2.9\,\% respectively. & gpt-oss-120b \\
\hline
Continuous Integration (CI) has become a cornerstone of modern software-development processes, enabling teams to merge changes rapidly and deliver value continuously. & gpt-oss-120b \\
\hline
Recent electron-spin-resonance (ESR) investigations of singlet-fission materials have revealed the transient appearance of a quintet multiexciton state, a finding that challenges the conventional picture of triplet-pair formation. & gpt-oss-120b \\
\hline
The algorithm operates directly on top of the SAE J2735 standard, the de-facto protocol suite for vehicular messages, thereby ensuring seamless integration with current on-board units and roadside equipment. & gpt-oss-120b \\
\hline
Existing solutions, however, are typically confined to isolated domains or rely on centralized infrastructures that cannot scale to the dynamic, large-scale settings envisioned for next-generation context-aware systems. & gpt-oss-120b \\
\hline
End-to-end automatic speech recognition (ASR) models have fundamentally reshaped speech processing by eliminating explicit acoustic, pronunciation, and language modeling components, instead learning a direct mapping from raw audio frames to text sequences via neural networks. & Qwen3-Next 80B A3B \\
\hline
This work establishes a new hybrid paradigm in image denoising: a \textit{learned-but-interpretable} framework that retains the analytical clarity of classical models while unlocking near-ML-level performance through principled, physics-guided parameterization. & Qwen3-Next 80B A3B \\
\hline
Across a broad range of design parameters, our method produces codebooks that are consistently larger than those generated by state-of-the-art approaches, with measured free energy gaps between desired and undesired hybridizations increased by up to 40\%. & Qwen3-Next 80B A3B \\
\hline
Here, we identify and elucidate the universal structural and dynamical mechanisms governing the formation of Large-Scale Vortices (LSVs) in rapidly rotating, stably stratified turbulent convection, using high-resolution numerical simulations constrained by physical scaling laws. & Qwen3-Next 80B A3B \\
\hline
By transforming existing infrastructure into a dark matter observatory, this work bridges the gap between theoretical models of non-particle dark matter and real-world experimental data, offering a practical, scalable, and cost-effective pathway to explore new frontiers in fundamental physics using only the sensors already in operation---on the ground and in orbit. & Qwen3-Next 80B A3B \\
\hline

\end{longtable}

\begin{table}[h]
\centering
\caption{Sampled sentences by prompt, written with gpt-oss-120b (naive prompt from \citet{liang2024mapping}), and adversarially generated prompt from an autoresearch loop to bypass Pangram.}
\vspace{1em}
\renewcommand{\arraystretch}{1.4}
\begin{tabular}{|p{0.75\textwidth}|p{0.18\textwidth}|}
\hline
\textbf{Sentence} & \textbf{Source} \\
\hline
To address this trade-off, we propose a dedicated two-stage processing pipeline that synergistically combines two fundamentally different convolutional neural-network architectures. & Naive \\
\hline
In this study we focus on a specific, yet largely overlooked, source of bias: the distribution of scanner manufacturers within a training dataset. & Naive \\
\hline
These embeddings have become the backbone of a variety of downstream applications, ranging from script-event prediction to narrative understanding. & Naive \\
\hline
On the standard long-term visual-localization and map-based localization for autonomous-driving challenges, our models achieve performance that surpasses current state-of-the-art baselines and secures competitive rankings. & Naive \\
\hline
Existing explanation-generation techniques, however, treat the explanatory act as a single, monolithic communication event that merely conveys the rationale behind an action. & Naive \\
\hline
Experiments show PAMS compresses and speeds up EDSR and RDN. & Adversarial \\
\hline
Applicability is evaluated on two cases---a tongue and an artery---using plane linear elasticity with soft-tissue contractility modeled as pre-stress. & Adversarial \\
\hline
CNNs excel in many MIR tasks. & Adversarial \\
\hline
To reduce code overload, our tool highlights relevant programming actions and API details within tutorial excerpts and code examples, guided by the underlying knowledge graph. & Adversarial \\
\hline
We contend that complex explanations should be delivered incrementally during task execution to distribute information and lessen users' mental effort in demanding scenarios. & Adversarial \\
\hline
\end{tabular}

\label{tab:sampled_sentences_xz}
\end{table}

\newpage

\subsection{Prompt optimization autoresearch details}
\label{app:claude}

Below, we describe an autoresearch loop in which we utilize a coding agent to generate prompts that adversarially target each \detectionmodel (runs of an ``outer loop'', where at each iteration we use the coding agent to target a specific \detectionmodel, and train a new \detectionmodel using all prior naive and adversarially generated AI text), producing \Cref{fig:adversarial} and Appendix \ref{app:adversarial}. For iterations in which the \detectionmodel is trained with PU + TTA, the outer loop uses the version of the \detectionmodel trained in the previous iteration of the outer loop, and not test-time adaptation applied within the current iteration.

Concretely, for the last released \detectionmodel, we initially provide Claude Code with a repository composed of:

\begin{itemize}
    \item a CSV file with 15 fixed synthetic mirrors of human abstracts generated using gpt-oss-120b (a validation set) and the prompting strategy from \citet{liang2024mapping}, along with predictions from the \detectionmodel (either Pangram by giving the agent access to Pangram's API, or a semi-supervised or supervised \detectionmodel that we train as the detector); abbreviated as \texttt{results.csv}
    \item a Python script with code that takes in a prompt, and generates synthetic mirrors of the 15 original \aigen texts with that prompt, using gpt-oss-120b
    \item a Python file containing the skeleton for a prompt; initialized to a mock function (returns the input text directly)
    \item a .txt file summarizing the repository and the agent's goal
    \item a utility file containing simple functions to replace non-ASCII characters in a string and split a string into sentences
    \item a test set of 100 held-out synthetic mirrors generated using gpt-oss-120b and the prompting strategy from \citet{liang2024mapping}

\end{itemize}

Given this initial repository, we then proceed in multiple iterations (calls to an ``inner loop'', where at each iteration Claude uses its findings from previous iterations of the inner loop to adversarially target the same \detectionmodel) of coding agent prompting. At each iteration of the inner loop, we ask Claude Code to:

\begin{enumerate}
    \item read the .txt file of context and instructions
    \item analyze all \texttt{results.csv} files generated at previous iterations of the inner loop; report natural language patterns that yielded high probability of being human writing by the \detectionmodel
    \item write one or multiple new candidate prompt functions
    \item run the Python script to generate new synthetic mirrors of the 15 validation abstracts
    \item pass each set of 15 new synthetic mirrors through the \detectionmodel; save each mirror and its predictions from the model to a new \texttt{results.csv} file
    \item write reflections to a new file
\end{enumerate}

The inner loop is iteratively run until Claude Code successfully generates a prompt whose synthetic mirrors are usually predicted as human-written (i.e., $\mathbb{E}\left[\Pr(\text{human | AI})\right] \geq 0.5$), at which point we confirm good generalization on the test set. Finally, we mirror all 10,000 available \aigen abstracts written with the prompting strategy from \citet{liang2024mapping}, using the new prompt, and execute the next iteration of the outer loop.

Anecdotally, we find that each successive \detectionmodel, whether trained using supervised or semi-supervised methods, was increasingly ``difficult'' for the agent to adversarially attack, perhaps suggesting that we were training a successively more robust \detectionmodel at each iteration. For instance, we found that certain strategies that yielded prompts that successfully bypassed the \detectionmodel on earlier iterations of the outer loop (e.g., prompting the agent to generate one prompt at a time; prompting the agent to generate a prompt immediately after reading the \texttt{results.csv} files, without an intermediate step to brainstorm high-level patterns to aim for) no longer worked on later iterations, forcing us to adjust our prompting strategies over iterations. Additionally, the number of candidate prompts that had to be generated by the coding agent in order to find a single prompt that bypassed the \detectionmodel grew as the number of adversarial prompts the \detectionmodel had been trained on increased (e.g., for a supervised classifier, from 10 prompts in the first iteration to bypass Pangram, to 15 prompts in the second iteration, to 35 prompts in the final iteration).

\clearpage
\newpage

\section{Proofs for observations in \Cref{sec:conceptual}}\label{app:theory}
\begin{proof}[Proof of \Cref{obs:tvbound}]
For any detector \(f:\mathcal X\to\{0,1\}\), define
\[
S_f=\{x\in\mathcal X:f(x)=1\},
\]
the set of texts classified as human. Then
\[
\Pr_{X\sim \mathcal{H}}[f(X)=1]=\mathcal{H}(S_f),
\qquad
\Pr_{X\sim \mathcal G}[f(X)=0]=\mathcal G(S_f^c)=1-\mathcal G(S_f).
\]
Thus the balanced accuracy of \(f\) is
\[
\begin{aligned}
\operatorname{Acc}(f;\mathcal{H},\mathcal G)
&=
\frac{1}{2}\Pr_{X\sim \mathcal{H}}[f(X)=1]
+
\frac{1}{2}\Pr_{X\sim \mathcal G}[f(X)=0]\\
&=
\frac{1}{2}\mathcal{H}(S_f)
+
\frac{1}{2}\bigl(1-\mathcal G(S_f)\bigr)\\
&=
\frac{1}{2}
+
\frac{1}{2}\bigl(\mathcal{H}(S_f)-\mathcal G(S_f)\bigr).
\end{aligned}
\]
By definition of total variation distance,
\[
\operatorname{TV}(\mathcal{H},\mathcal G)
=
\sup_{S\subseteq\mathcal X}
\left|\mathcal{H}(S)-\mathcal G(S)\right|.
\]
Therefore,
\[
\mathcal{H}(S_f)-\mathcal G(S_f)
\le
\operatorname{TV}(\mathcal{H},\mathcal G),
\]
and hence
\[
\operatorname{Acc}(f;\mathcal{H},\mathcal G)
\le
\frac{1+\operatorname{TV}(\mathcal{H},\mathcal G)}{2}.
\]

\end{proof}

We note that this observation also bounds the possibility of prevalence estimation.

\begin{proof}[Proof of \Cref{obs:supfails}]
Write $\mathcal B:=h(\mathcal A)$. Let $f^\star\in\mathcal F$ be a separator of
$\mathcal{H}$ from $\mathcal A\cup\mathcal B$, and write
$S^\star=S_{f^\star}$. Thus
\[
\mathcal{H}(S^\star)=1,\qquad
\mathcal A(S^\star)=0,\qquad
\mathcal B(S^\star)=0.
\]

Formalizing the non-triviality of $h$, suppose there is a measurable set $D$ such that
\[
\mathcal A(D)=0,\qquad \mathcal B(D)>0.
\]
Let
\[
C := D\cap (S^\star)^c.
\]
Since $\mathcal B(S^\star)=0$, we still have $\mathcal B(C)=\mathcal B(D)>0$.
Also, $\mathcal A(C)=0$ because $C\subseteq D$, and $\mathcal{H}(C)=0$ because
$C\subseteq (S^\star)^c$ and $\mathcal{H}(S^\star)=1$.

By ``sufficiently rich,'' we mean that $\mathcal F$ also contains the classifier
$\tilde f$ whose human-prediction set is
\[
S_{\tilde f}=S^\star\cup C.
\]
Then
\[
\mathcal{H}(S_{\tilde f})=1
\qquad\text{and}\qquad
\mathcal A(S_{\tilde f})=0,
\]
so $\tilde f$ perfectly separates $\mathcal{H}$ from $\mathcal A$. However,
\[
\mathcal B(S_{\tilde f})
=
\mathcal B(S^\star\cup C)
\ge
\mathcal B(C)
>
0,
\]
so $\tilde f$ classifies a positive-measure subset of shifted AI text as human. Thus
$\tilde f$ achieves perfect training-time accuracy on $\mathcal{H}$ versus $\mathcal A$
but does not separate $\mathcal{H}$ from $h(\mathcal A)$.
\end{proof}

\begin{proof}[Proof of \Cref{obs:puworks}]
For any detector $f:\mathcal X\to\{0,1\}$, define
\[
S_f=\{x\in\mathcal X:f(x)=1\},
\]
where $f(x)=1$ denotes prediction as human-written.
Write $\mathcal B=h(A)$ for brevity. By ``$f$ separates $\mathcal{H}$ from $\mathcal A\cup h(\mathcal A)$,'' we mean, as in the proof of Observation 2.1,
\[
\mathcal{H}(S_f)=1,\qquad \mathcal A(S_f)=0,\qquad \mathcal B(S_f)=0.
\]

Consider the following population positive-unlabeled objective:
\[
\min_{f\in\mathcal F} \mathcal U(S_f)
\quad\text{subject to}\quad
\mathcal{H}(S_f)=1,
\tag{1}
\]
where
\[
\mathcal U=\lambda_{{H}} \mathcal H+\lambda_A \mathcal A+\lambda_{{h}} \mathcal B,
\qquad \lambda_A,\lambda_{{h}}>0.
\]
This population objective can be instantiated using only labeled samples from $\mathcal{H}$ to enforce the constraint and unlabeled
samples from $U$ to evaluate the objective.

By the separability assumption in Section 2, there exists $f^\star\in\mathcal F$ such that
\[
\mathcal H(S_{f^\star})=1,\qquad \mathcal  A(S_{f^\star})=0,\qquad \mathcal B(S_{f^\star})=0.
\]
For this classifier,
\[
\mathcal U(S_{f^\star})
=
\lambda_{{H}} \mathcal H(S_{f^\star})
+
\lambda_A \mathcal A(S_{f^\star})
+
\lambda_{{h}} \mathcal B(S_{f^\star})
=
\lambda_{{H}}.
\]
Thus the optimum value of (1) is at most $\lambda_{{H}}$.

Now let $f$ be any feasible classifier, so $\mathcal{H}(S_f)=1$. Then
\[
\mathcal U(S_f)
=
\lambda_{{H}} \mathcal{H}(S_f)
+
\lambda_A \mathcal A(S_f)
+
\lambda_{{h}} \mathcal B(S_f)
=
\lambda_{{H}}+\lambda_A \mathcal A(S_f)+\lambda_{{h}} \mathcal B(S_f)
\ge \lambda_{{H}}.
\]
Therefore the optimum value of (1) is exactly $\lambda_{{H}}$.

Finally, if $f$ is a population minimizer of (1), then feasibility gives $\mathcal{H}(S_f)=1$ and
optimality gives
\[
\lambda_{{H}}+\lambda_A \mathcal A(S_f)+\lambda_{{h}} \mathcal B(S_f)=\lambda_{{H}}.
\]
Hence
\[
\lambda_A \mathcal A(S_f)+\lambda_{{h}} \mathcal B(S_f)=0.
\]
Since $\lambda_A,\lambda_{{h}}>0$ and both probabilities are nonnegative,
\[
\mathcal A(S_f)=0
\qquad\text{and}\qquad
\mathcal B(S_f)=0.
\]
Thus every population minimizer separates $\mathcal{H}$ from $\mathcal A\cup h(\mathcal A)$, up to null sets. Conversely,
any classifier satisfying
\[
\mathcal{H}(S_f)=1,\qquad \mathcal A(S_f)=0,\qquad \mathcal B(S_f)=0
\]
achieves objective value $\lambda_{{H}}$ and is therefore a population minimizer. 
\end{proof}

\esif{\subsection{SemEval Dataset}
\begin{itemize}
    \item fine tuning / epoch / lr / wd info
    \item dataset sizes
    \item how you curated pn vs pu sets
\end{itemize}}

\newpage

\section{Additional empirical results}\label{app:figs}

\subsection{Adversarial behavior}
\label{app:adversarial}

\begin{figure}[ht]

  \centering
  \includegraphics[width=0.8\linewidth]{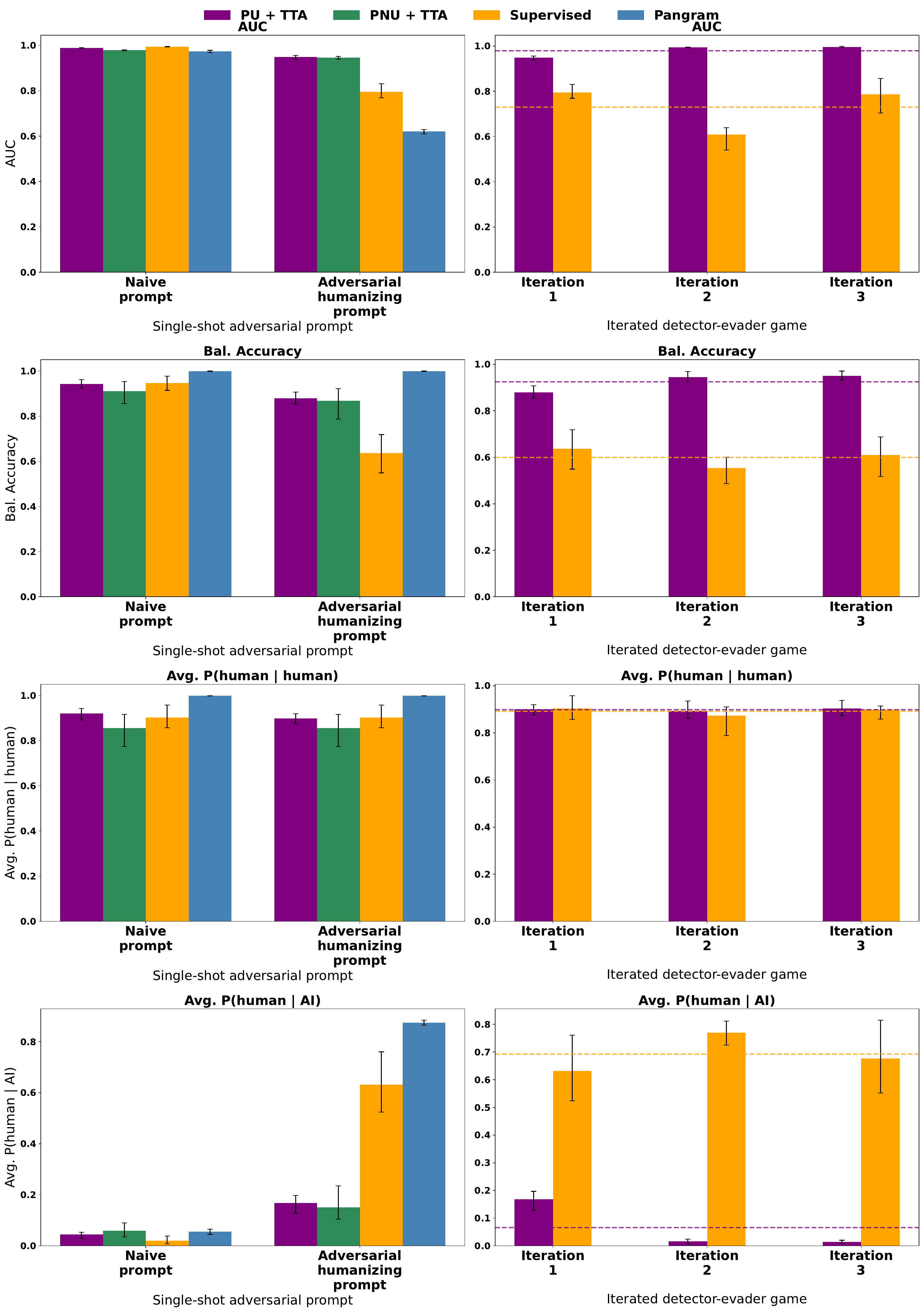}

\caption{Plotting the same experiment as in \Cref{fig:adversarial-1} but reporting AUC, balanced accuracy, average predicted probability of being human on human writing, and average predicted probability of being human on AI writing. Note that AUC, balanced accuracy, and $\Pr(\text{human} | \text{AI})$ degrade significantly for supervised learning, although PU + TTA  and \pnu + TTA are comparatively stable. Meanwhile, supervised learning and TTA perform comparably on human text. (1/2)}

\label{fig:extraline1}
\end{figure}

\begin{figure}[ht]

  \centering
  \includegraphics[width=0.8\linewidth]{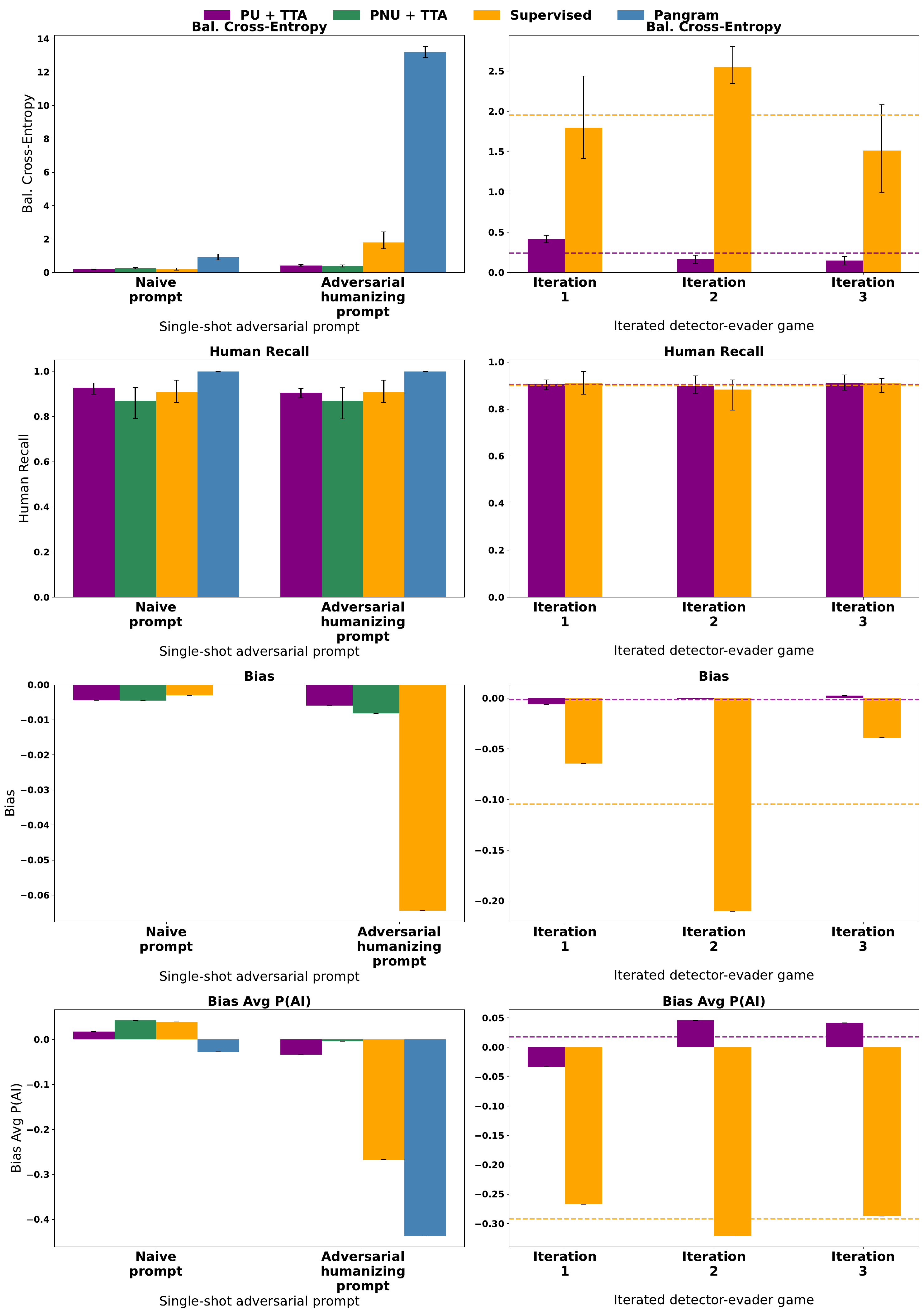}

\caption{Plotting the same experiment as in \Cref{fig:adversarial-1} but reporting balanced cross-entropy, bias on a label-balanced proportion estimation task as measured with BBE, bias on the same proportion estimation task as measured with average P(human) over the entire test set, and recall on human text. Note that balanced cross-entropy, and bias on prevalence estimation tasks degrade significantly for supervised learning, although TTA methods are comparatively stable. Meanwhile, supervised learning and TTA perform comparably on human recall. (2/2)}

\label{fig:extraline2}
\end{figure}

\begin{figure}[ht]

  \centering
  \includegraphics[width=\linewidth]{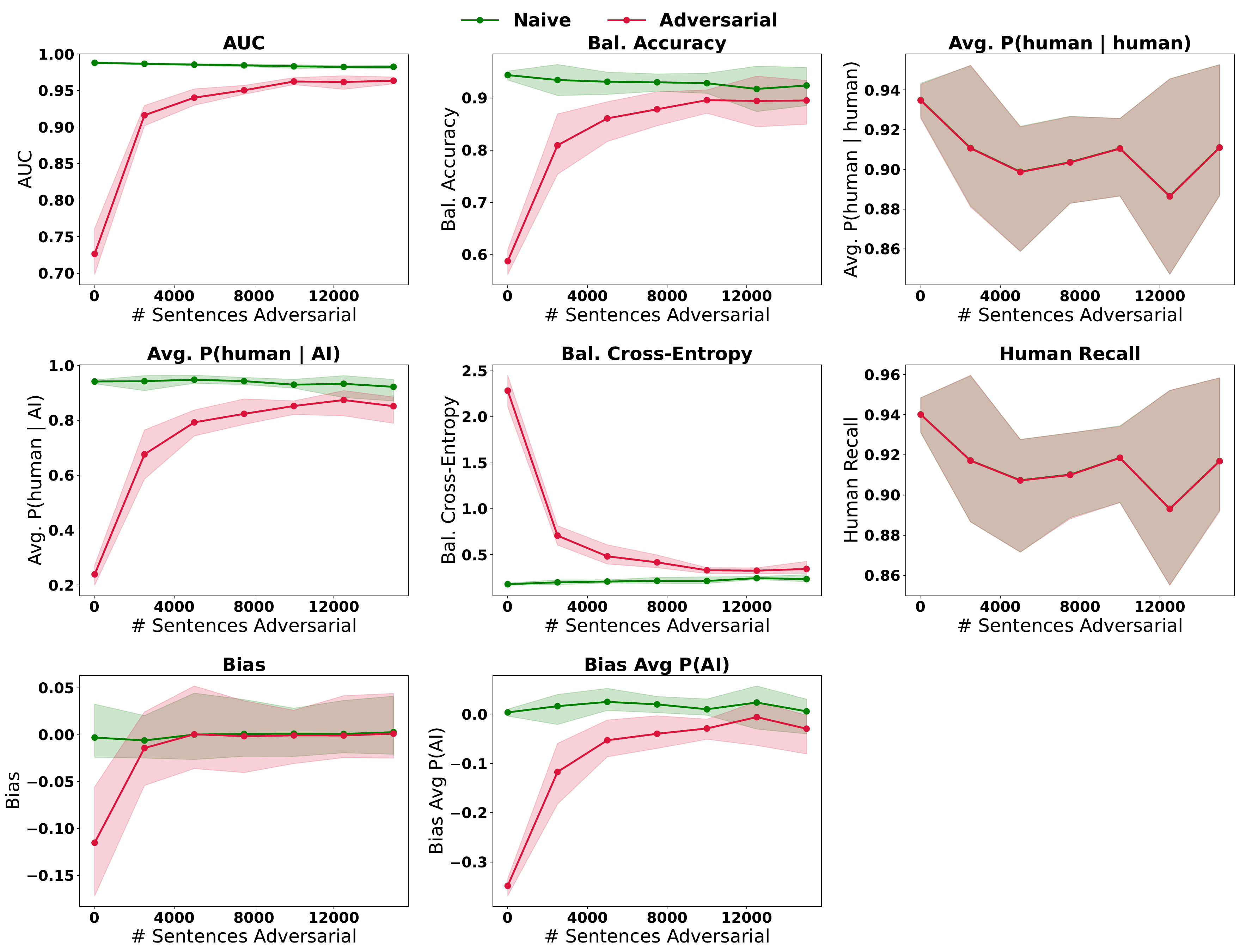}

\caption{Plotting the same experiment as in \Cref{fig:adversarial-2}, but reporting all metrics other than in the main text (AI recall). As the number of sentences from the adversarial prompt increases in the unlabeled training data for PU + TTA, the model rapidly approaches the strong performance achieved on naive \aigen text, but for the adversarial text.}

\label{fig:slidinggrid}
\end{figure}

\begin{figure}[tbh]

\begin{subfigure}{.5\linewidth}
  \centering
  \includegraphics[width=\linewidth]{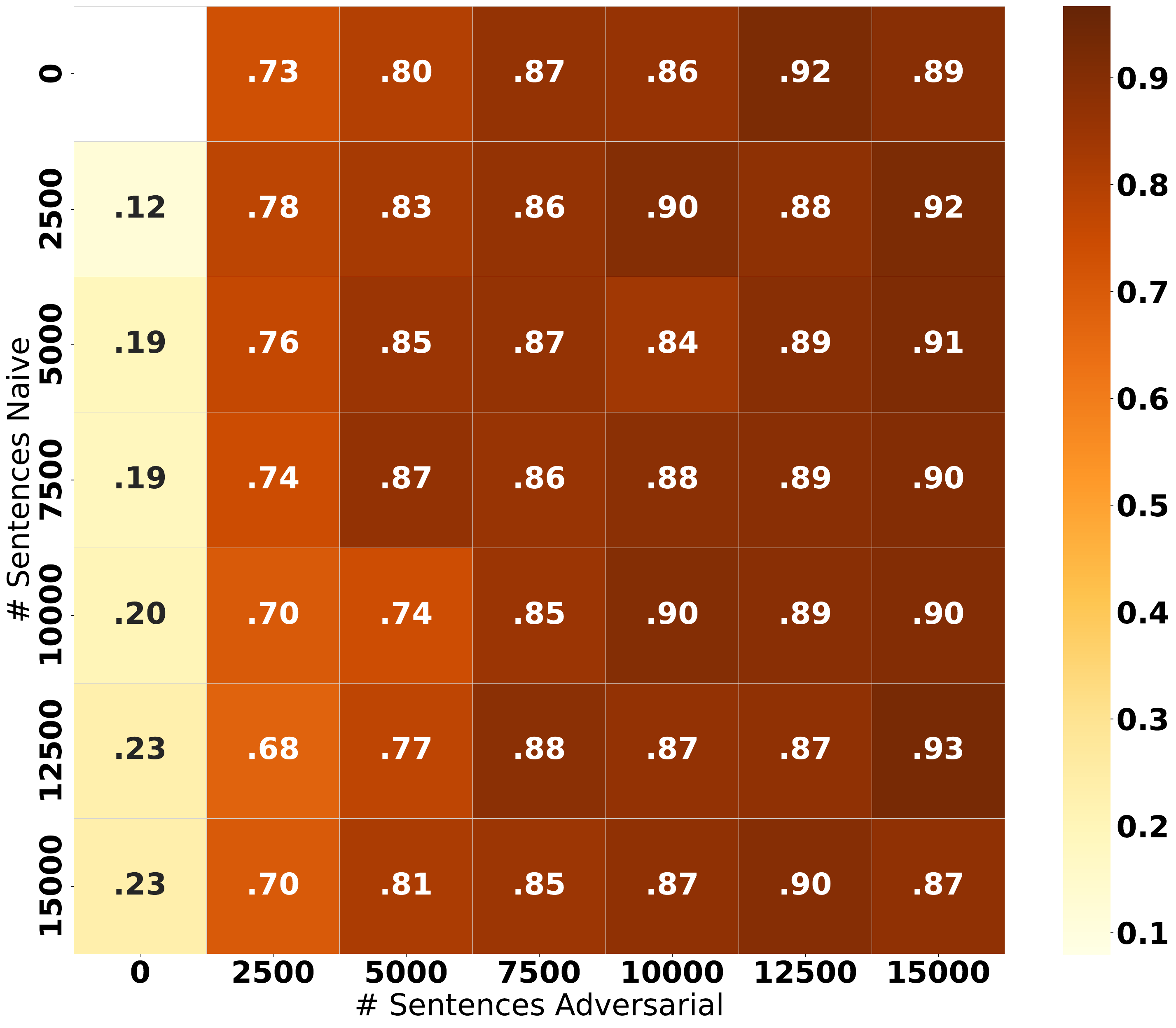}
  \caption{}
  \label{fig:mini-1}
\end{subfigure}%
\begin{subfigure}{.5\linewidth}
  \centering
  \includegraphics[width=\linewidth]{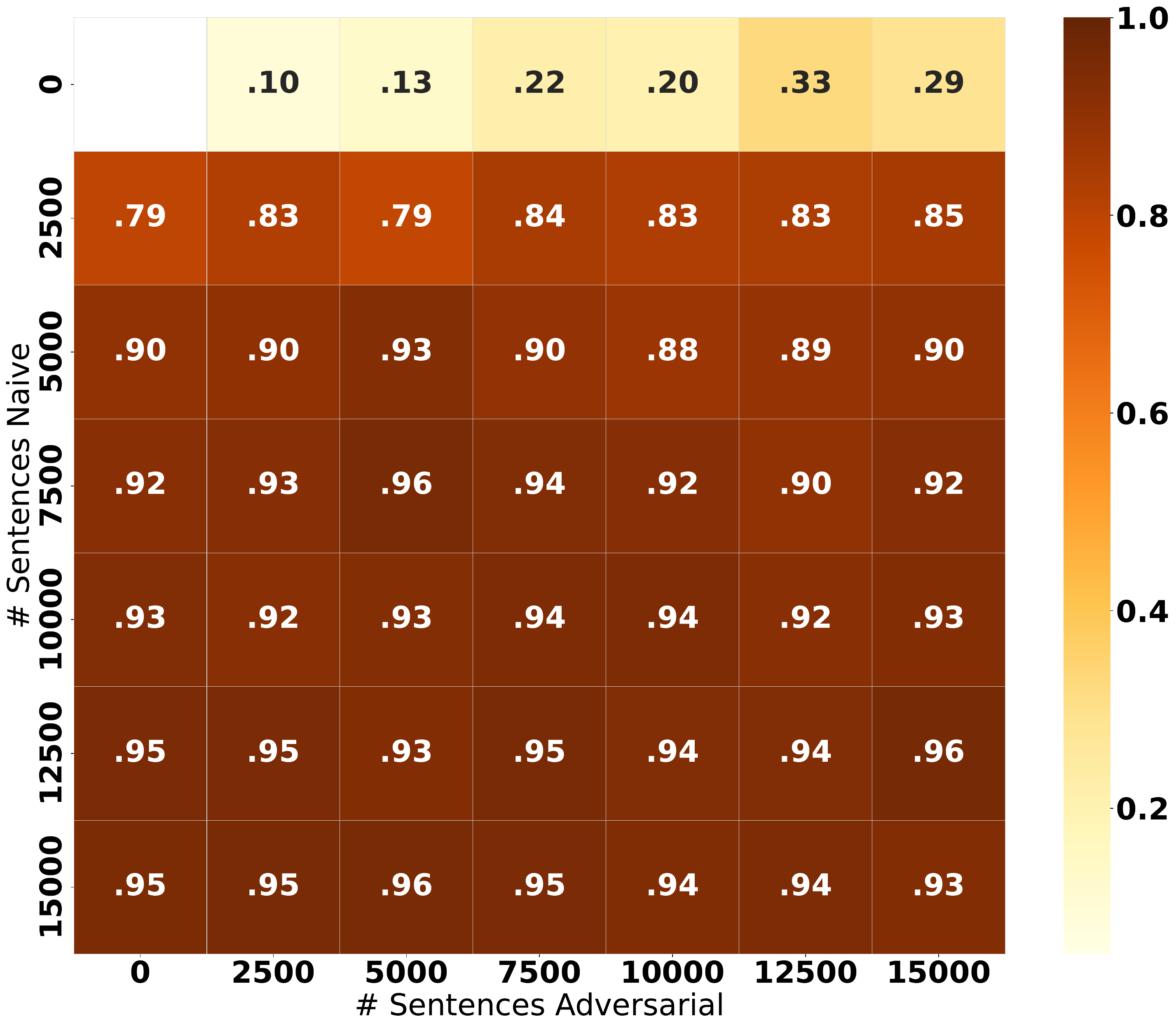}
  \caption{}
  \label{fig:mini-2}
\end{subfigure}

\begin{subfigure}{.5\linewidth}
  \centering
  \includegraphics[width=\linewidth]{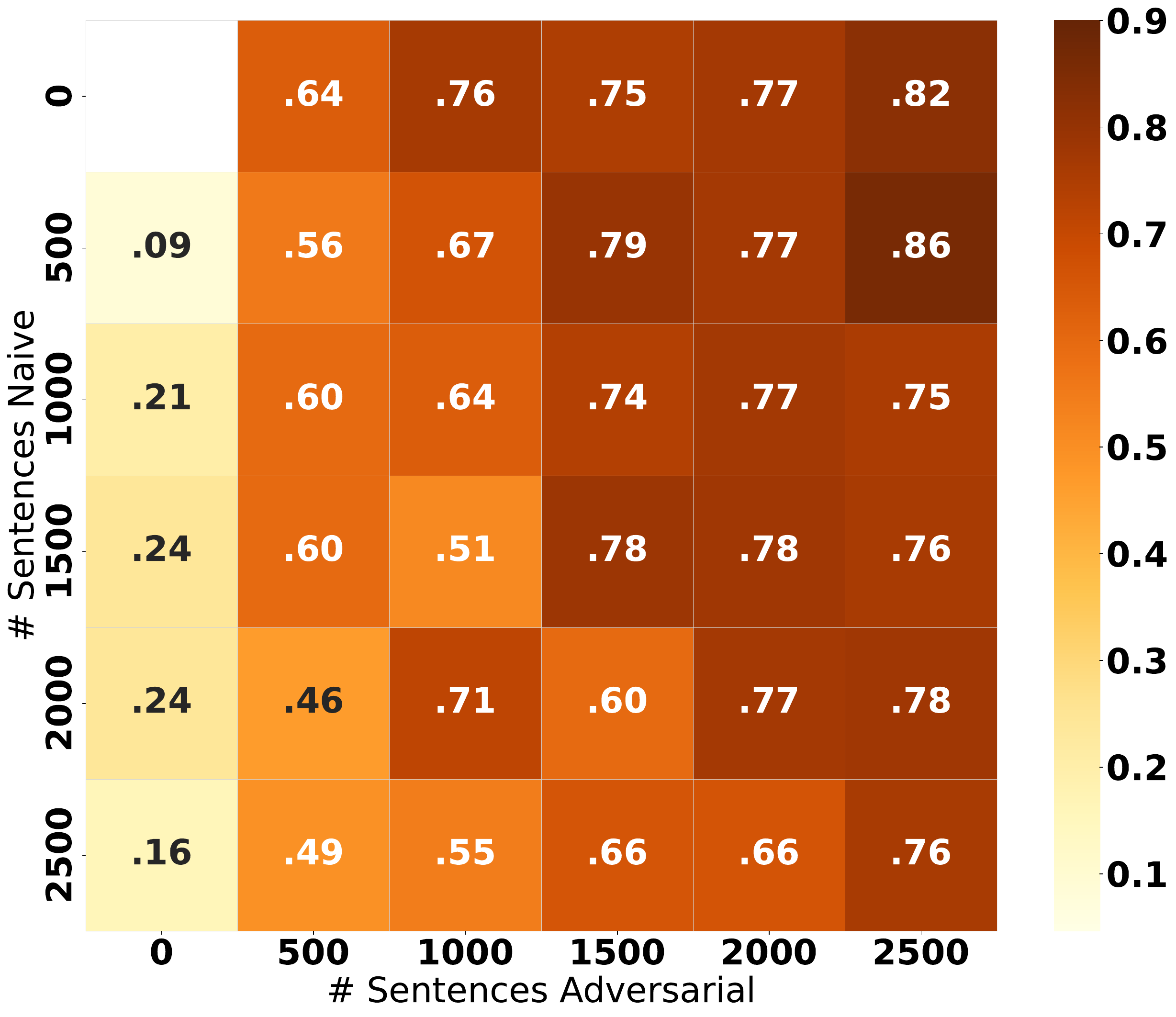}
  \caption{}
  \label{fig:mini-3}
\end{subfigure}%
\begin{subfigure}{.5\linewidth}
  \centering
  \includegraphics[width=\linewidth]{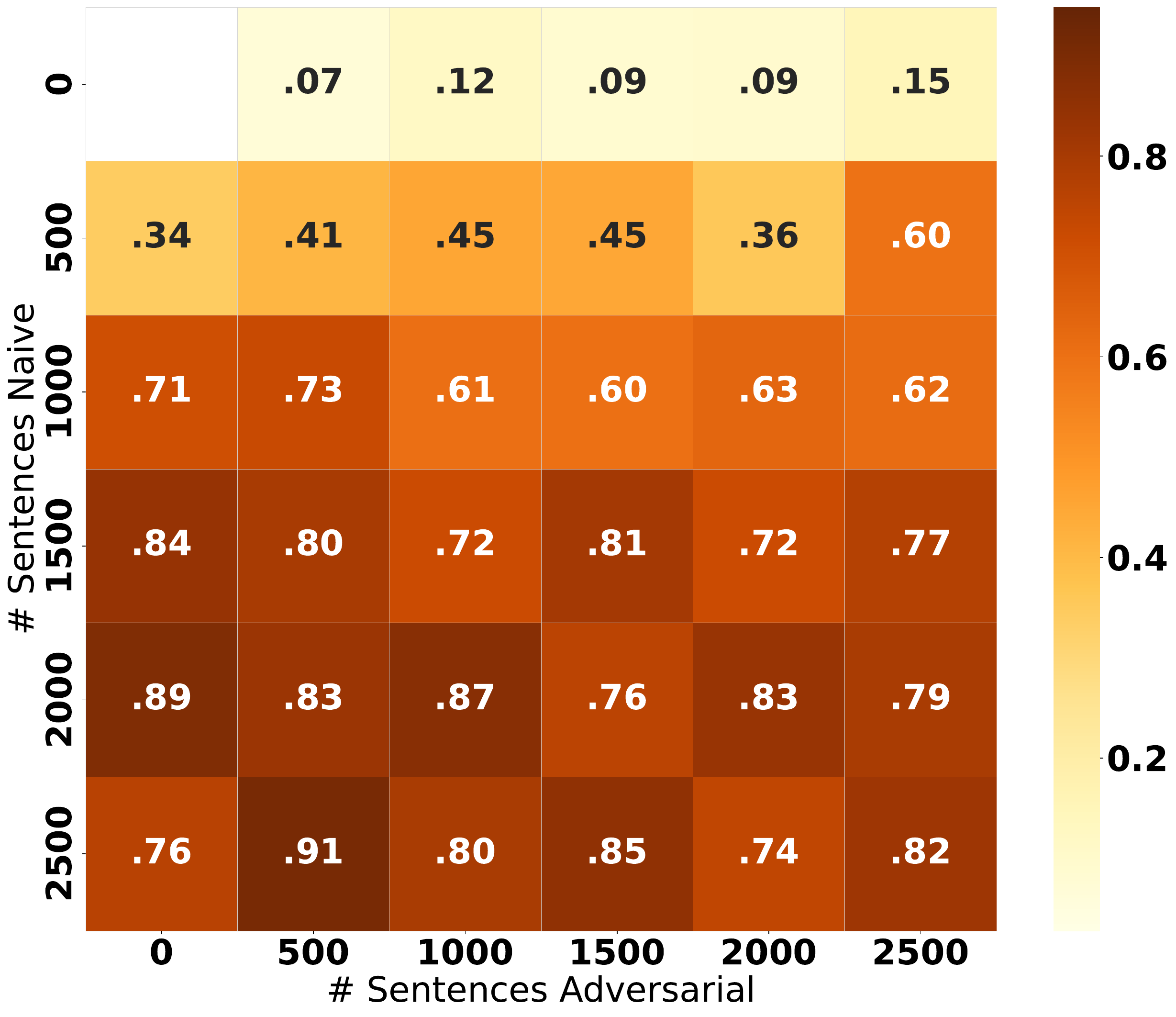}
  \caption{}
  \label{fig:mini-4}
\end{subfigure}

\caption{{Sensitivity analysis for introducing varied amounts of sentences from naive \aigen text and adversarial \aigen text into the unlabeled data during training, for PU + TTA. The total number of human-written sentences in the unlabeled set is fixed at 5000, and we show AI recall. \ref{fig:mini-1} and \ref{fig:mini-3} report performance on AI text written with the adversarial prompt while \ref{fig:mini-2} and \ref{fig:mini-4} report it on text written with the naive prompt; \ref{fig:mini-1} and \ref{fig:mini-2} span training counts from 0 to 15{,}000 sentences, whereas \ref{fig:mini-3} and \ref{fig:mini-4} zoom into the 0-2{,}500 range. Small amounts of each kind of \aigen (unlabeled) text are sufficient for performance that substantially outperforms supervised learning.}}

\label{fig:mini}
\end{figure}

\clearpage
\newpage

\subsection{Validating adversarial prompts as humanizers}
\label{sec:mirrorvalidation}

\paragraph{Motivation.} Here, we evaluate our adversarial prompts as humanizers. In particular, do they preserve the substance and not hallucinate content? While our goal is not to create a highly-performant humanizer (i.e., our humanizer prompts are developed strictly to illustrate the point that detection models can be evaded, rather than to produce AI text that real-world users would want to use), it would be undesirable for any errors in preserving the original substance to \textit{correlate} with detectability -- i.e., the \detectionmodel learns to classify text by whether it contains AI hallucinations, rather than by inherent stylistic characteristics of AI writing; or if the way that the prompt defeats a classifier is by outputting a single short text that largely omits information from the original text.

\paragraph{Methods.} We utilize an LLM-as-a-judge framework \citep{zheng2023judging, gu2024survey} to assess the degree to which all prompts used in our experiments yield synthetic mirrors that hallucinate information not contained in the original writing, or omit information that was in the original writing (for the naive prompt from \citet{liang2024mapping}, the original text is a human-written abstract; for all adversarially generated prompts, the original text is the synthetic mirror from the naive prompt). Concretely, we sample 100 human abstracts written in 2020, along with their rewrites using all prompts used in the strategic classification experiments. Then, we prompt an LLM (Gemini 2.5 Flash) to score each abstract on a continuous scale by (1) how free the rewritten text is from hallucinated or added information not present in the original (1 represents no hallucination and 0 represents a fully hallucinated abstract); (2) how completely the rewritten text preserves the crucial information from the original (1 represents no omission of substance from the original and 0 represents a rewrite that has omitted all key information). Then, we analyze whether detectability correlates with high hallucinations or low content fidelity.

\paragraph{Findings.} In \Cref{fig:judge}, we plot the marginal distribution of scores per-prompt, and also regress the LLM judge scores per-abstract against the predicted probability of being human-written (each prompt is evaluated using the same \detectionmodel as in \Cref{fig:adversarial}). Our findings validate that (1) our prompts generally preserve the information contained in the original abstract well, especially when compared to the naive prompt (used in other work) whose outputs it was rewriting; and (2) the degree to which a synthetic mirror hallucinates or omits information correlates poorly with predictions from the \detectionmodel. We note one prompt where hallucination and omission were more common (Iteration 2; PU + TTA), although we find no evidence that either issue correlated with predictions from the \detectionmodel (i.e., the LLM's deviation from the original abstract's content does not explain the success of the prompt evading the \detectionmodel). Thus, the humanizing prompts are sufficiently high quality for our analyses. 

\begin{figure}[ht]
\begin{subfigure}{.5\linewidth}
  \centering
  \includegraphics[width=\linewidth]{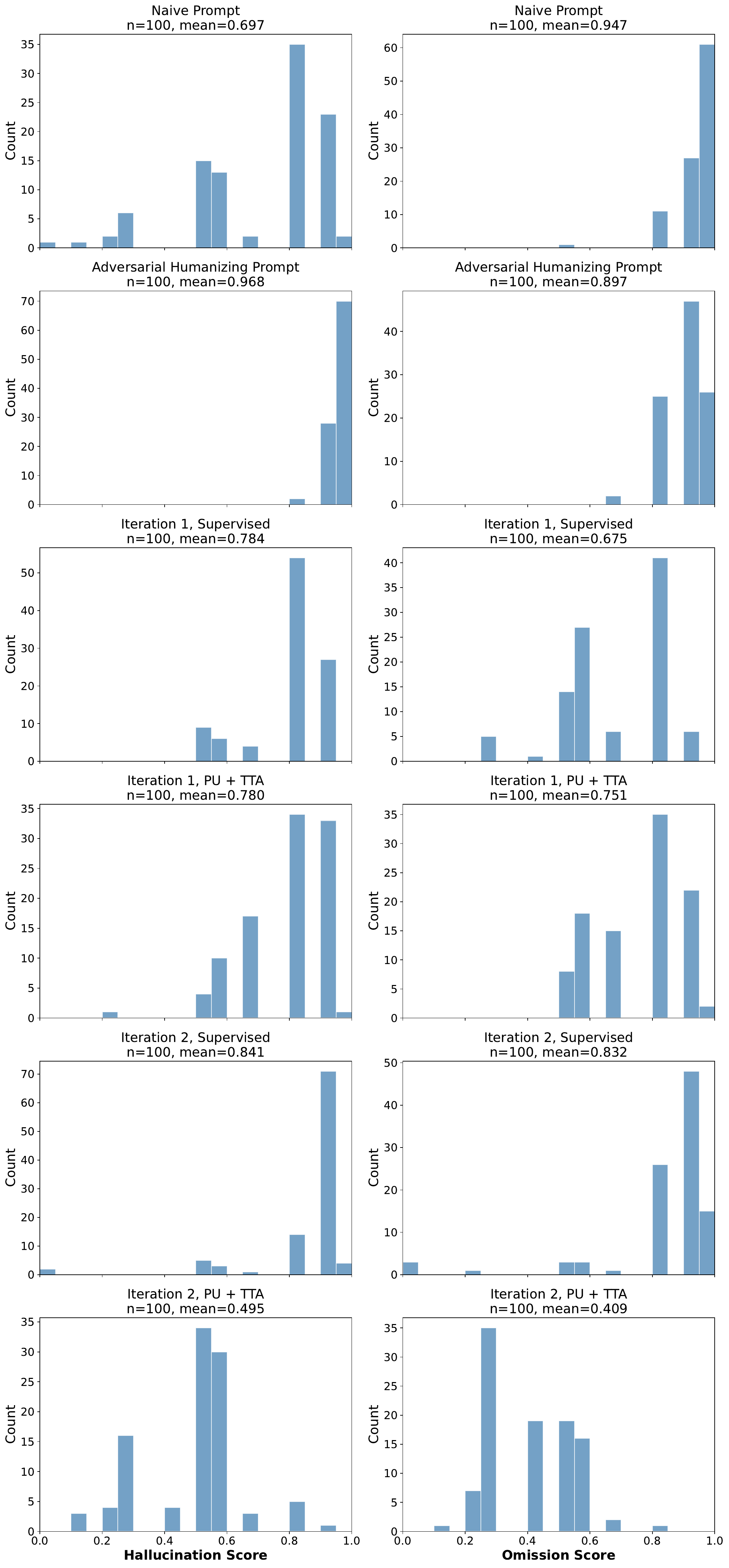}
  \caption{}
  \label{fig:judge-1}
\end{subfigure}%
\begin{subfigure}{.5\linewidth}
  \centering
  \includegraphics[width=\linewidth]{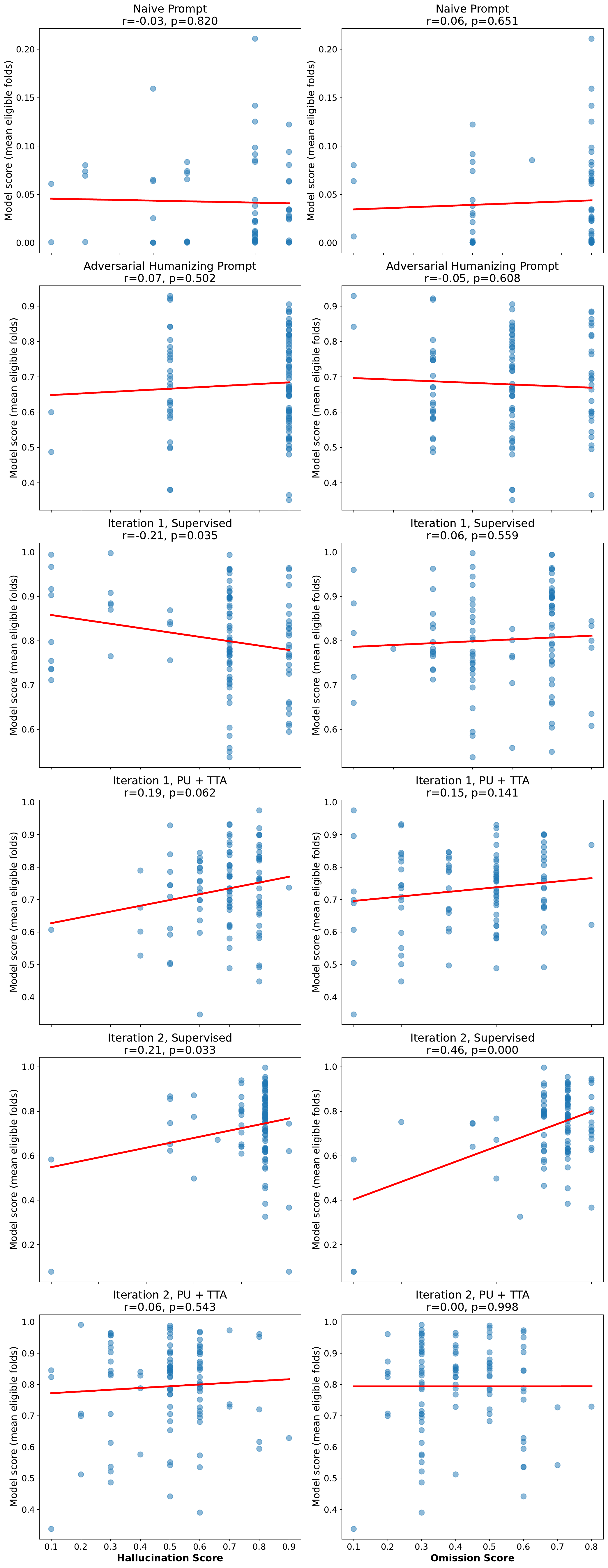}
  \caption{}
  \label{fig:judge-2}
\end{subfigure}

\caption{(\ref{fig:judge-1}) Distribution of hallucination scores and omission scores per-prompt. We generally preserve the content that was originally in the human abstract. (\ref{fig:judge-2}) The scores do not correlate with the predictions of our detection models (same \detectionmodel-prompt mapping as in \Cref{fig:adversarial}), indicating that reasoning breakdowns in the LLM itself do not induce spurious correlations in our experiments.}

\label{fig:judge}
\end{figure}

\clearpage
\newpage

\subsection{Heatmaps with non-Gemini models, out-of-distribution LLMs}
\label{app:gemini}

\begin{figure}[ht]

  \centering
  \includegraphics[width=\linewidth]{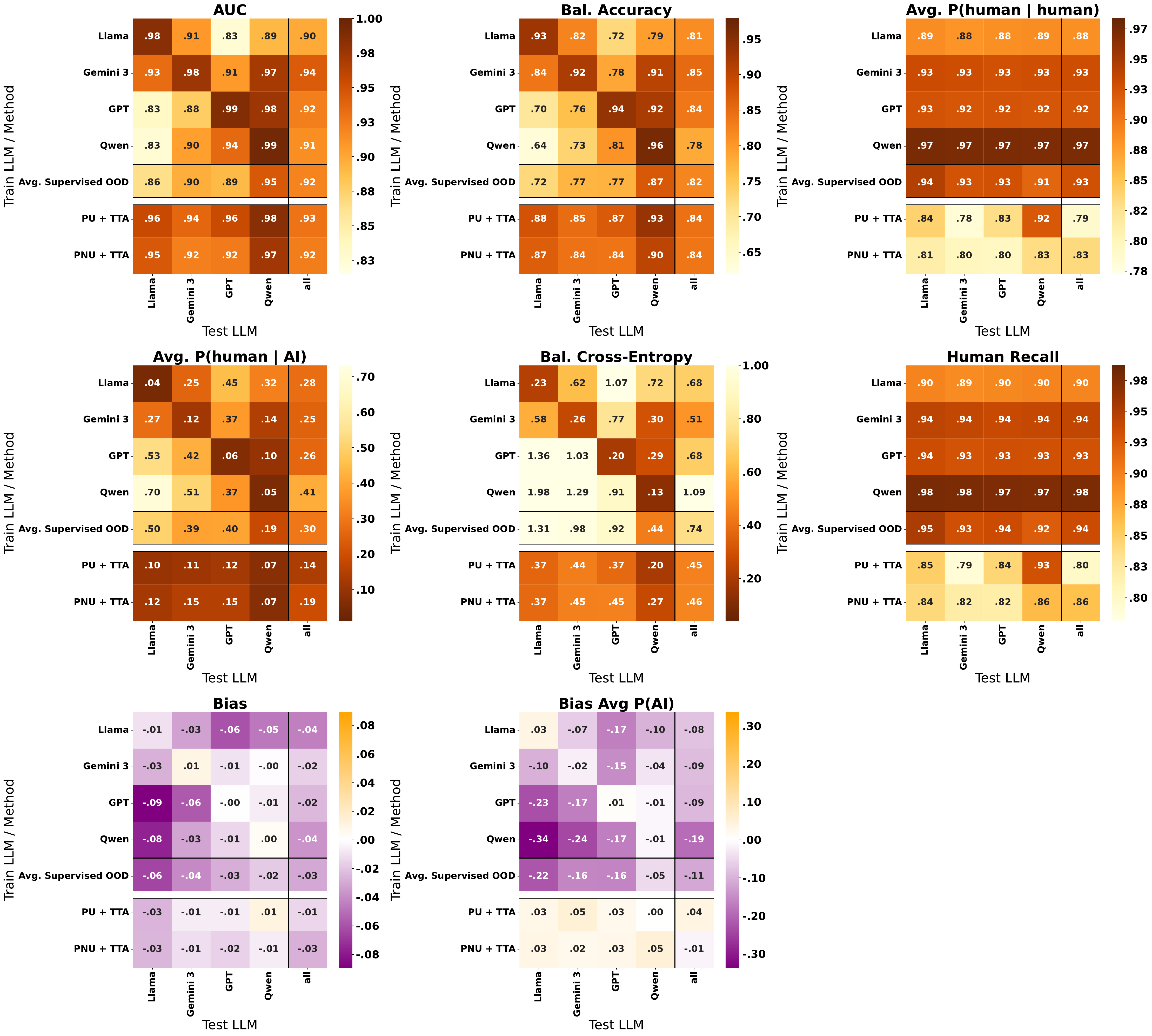}

\caption{Plotting the same experiment as in \Cref{fig:llm-shift} (left), but reporting all metrics other than in the main text (AI recall). Across metrics, our findings corroborate the claim that detection models are brittle to out-of-distribution outputs -- and that TTA methods improve performance relative to worst-case out-of-distribution outcomes for supervised learning.}

\label{fig:nongemini}
\end{figure}

\begin{figure}[ht]

  \centering
  \includegraphics[width=\linewidth]{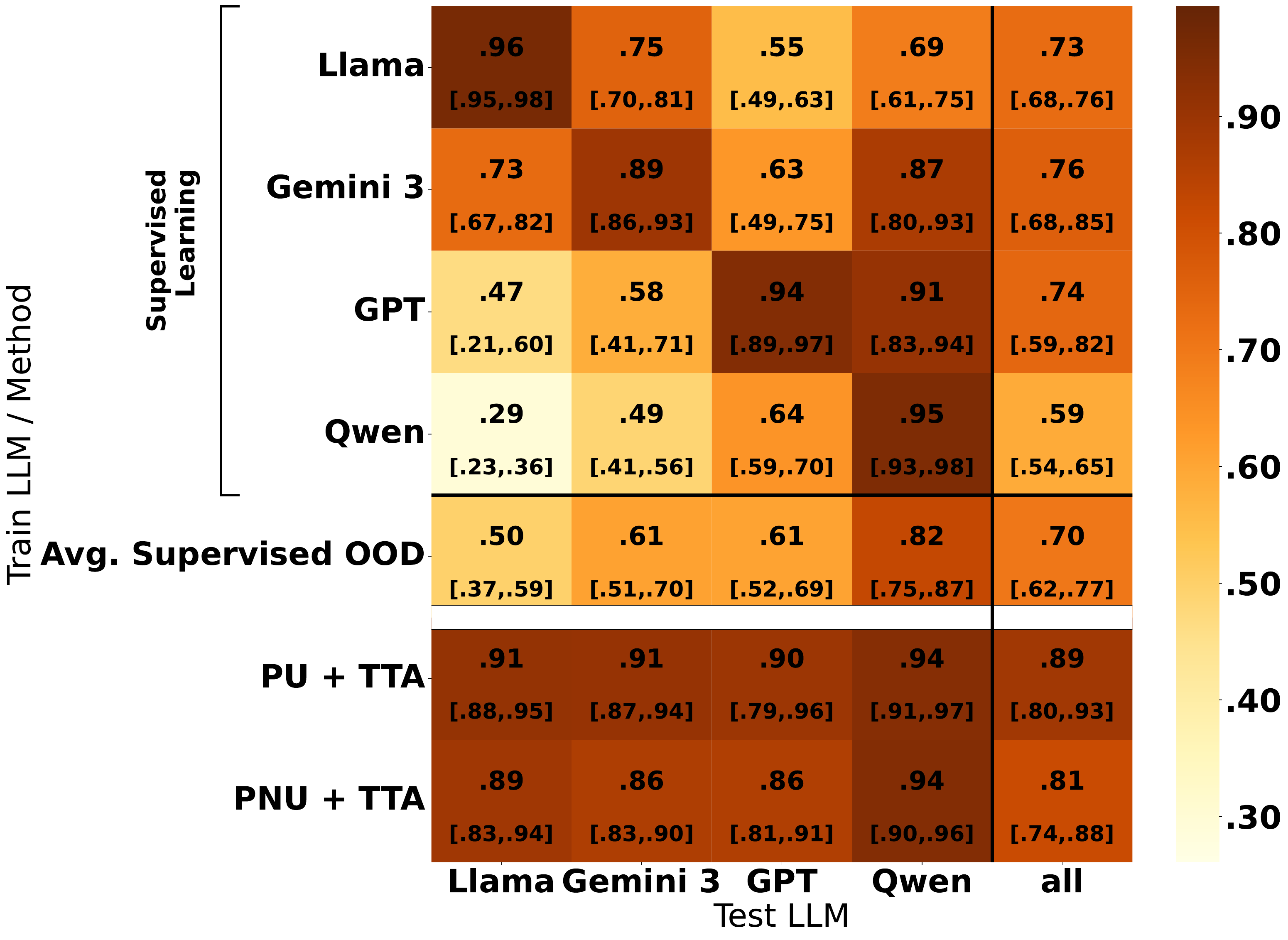}

\caption{The same figure as in \Cref{fig:llm-shift} (left), but with 95\% confidence intervals added. Our findings are reasonably robust to noise.}
\label{fig:nongeminici}
\end{figure}

\clearpage
\newpage

\subsection{Gemini heatmaps, out-of-distribution LLMs}

\begin{figure}[ht]

  \centering
  \includegraphics[width=\linewidth]{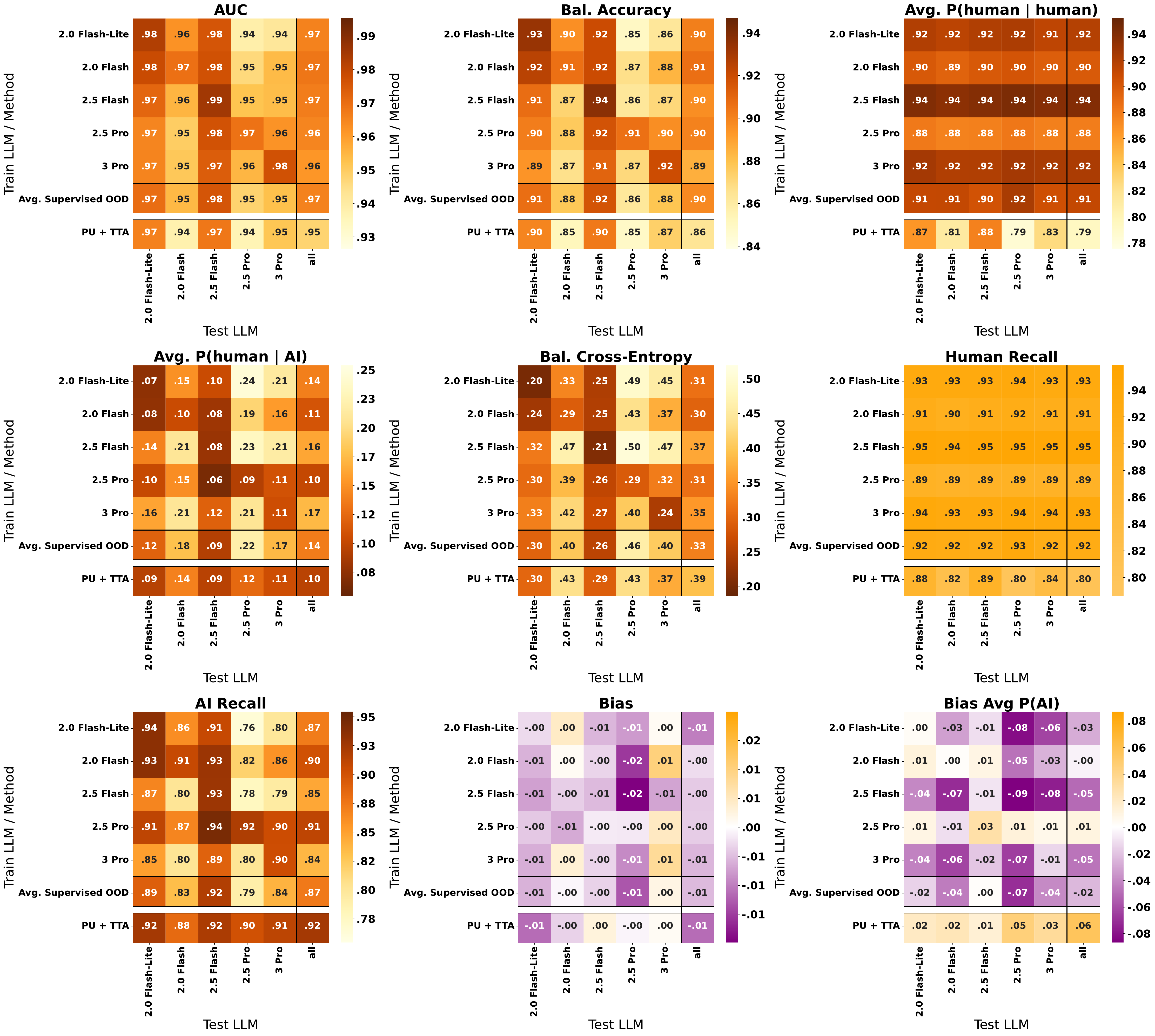}

\caption{We plot heatmaps of all collected metrics for models trained on five different Gemini models and evaluated on test sets composed of outputs of the same LLMs. We corroborate the finding that the effect of out-of-distribution \aigen text is weaker (although still present), when the training and evaluation LLMs are both from the Gemini model class.}

\label{fig:gemini}
\end{figure}

\begin{figure}[ht]

  \centering
  \includegraphics[width=\linewidth]{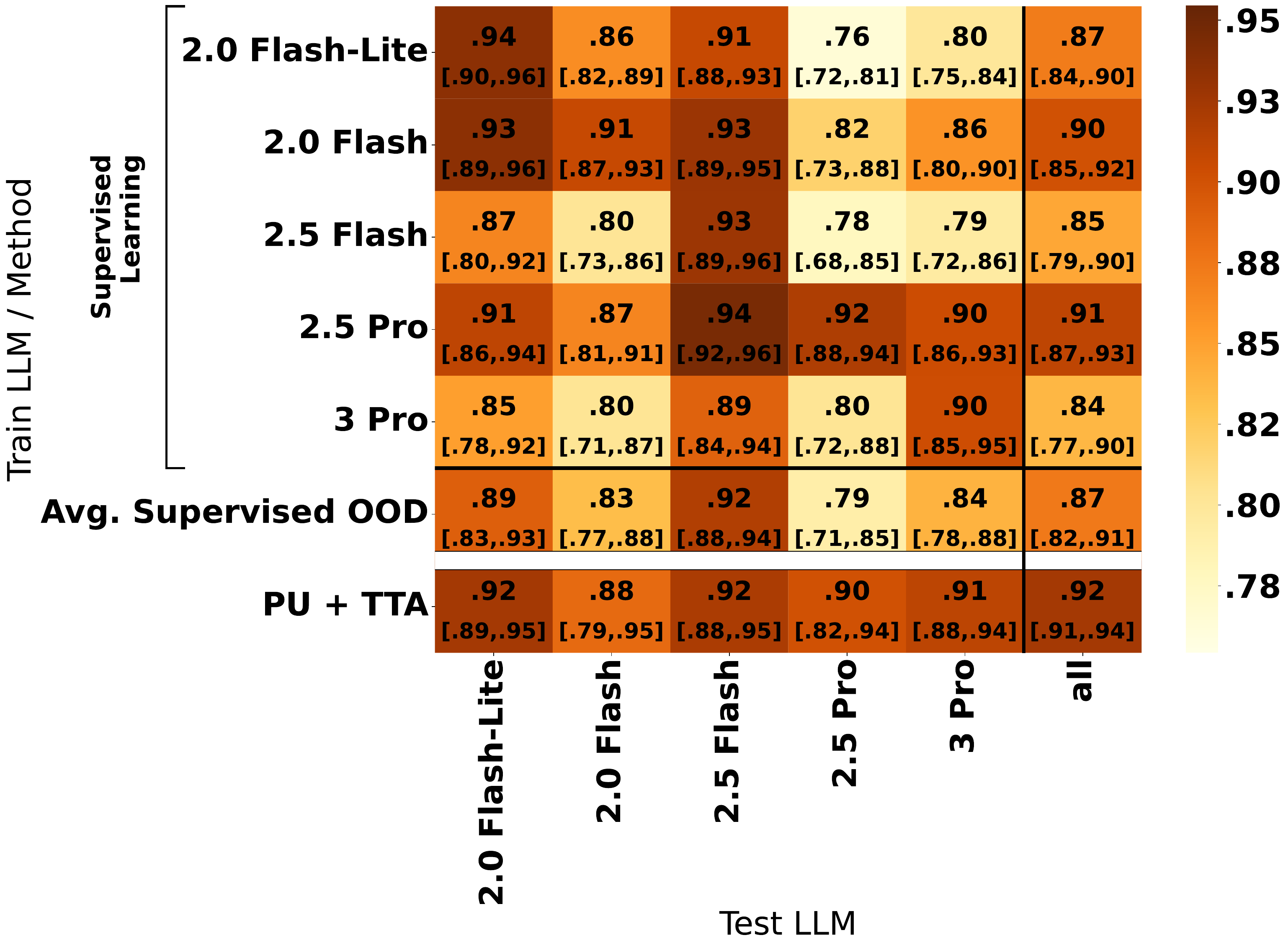}

\caption{The same figure as the AI recall subplot in \Cref{fig:gemini}, but with 95\% confidence intervals added. Our findings
are reasonably robust to noise.}

\label{fig:geminici}
\end{figure}

\begin{figure}[ht]
    \centering
  \includegraphics[width=.64\linewidth]{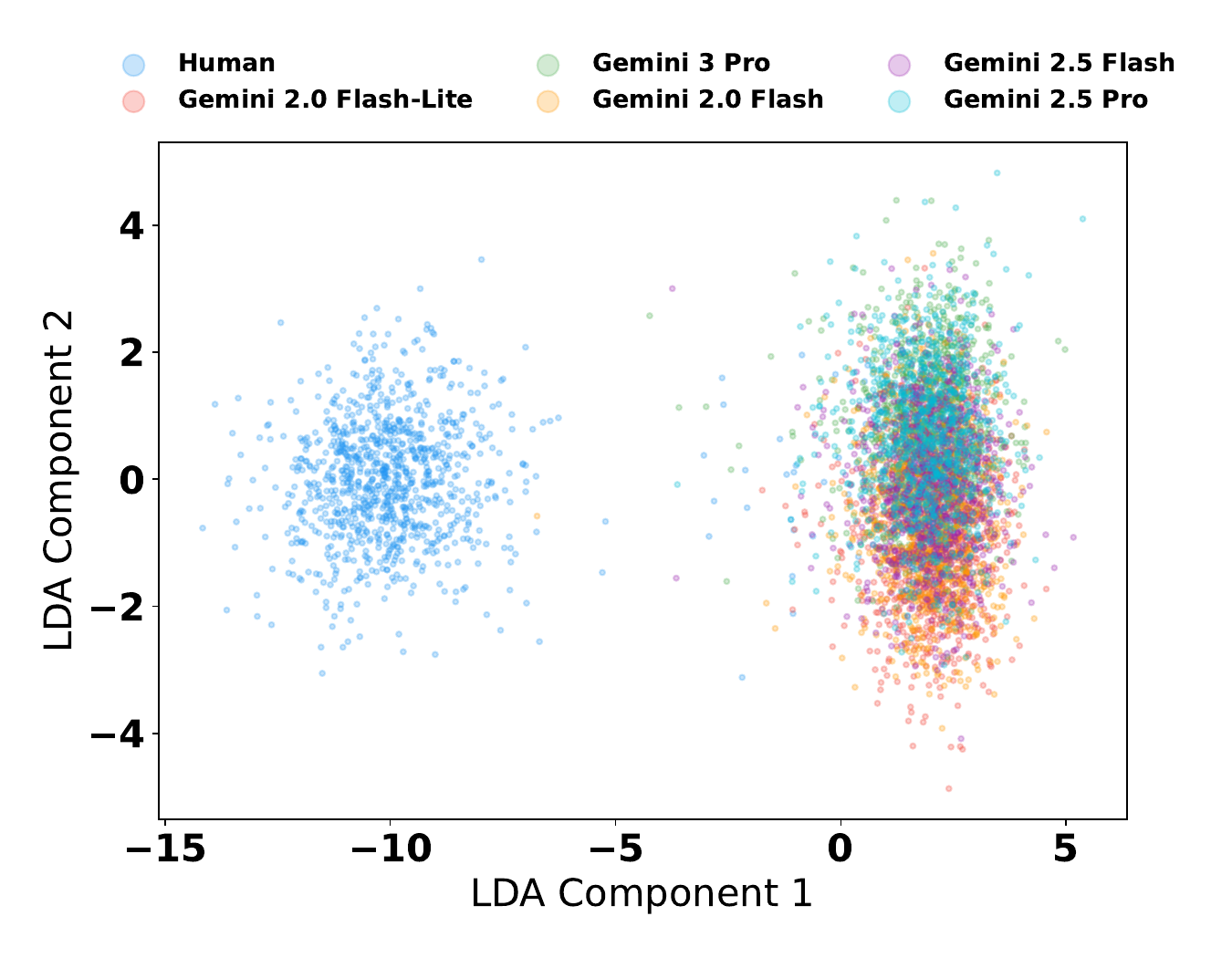}

\caption{Linear Discriminant Analysis as in \Cref{fig:llm-shift} (right), but with outputs of five different Gemini LLMs, each of which mirrored the same set of human writing. Each point represents an individual sentence; after generating synthetic mirrors, we split each abstract into sentences, make a sentence embedding using Gemini Embedding 2, and plot the LDA in-sample on 5,000 sentence embeddings sampled from each generative source (human, and each LLM). While each cluster is closer than when the LLMs are of different model classes, we still find significant separation between human writing and all LLM writing, as well as some separation between clusters of outputs between LLMs. Intuitively, the overlap in the Gemini-produced text distributions explains high out-of-distribution performance.}
\label{fig:ldagemini}
\end{figure}

\clearpage
\newpage

\subsection{Supporting evidence for more correlated models producing better OOD performance}
\label{app:lda}

\paragraph{Analysis motivation.} In \Cref{fig:llm-shift} (left), we find empirical evidence that certain pairwise distributions of LLMs (e.g., \{Gemini 3 Pro, Llama\}), when used to train a \detectionmodel, perform well when evaluated on each other's outputs; these models also have relatively closely positioned clusters in the LDA projection in \Cref{fig:llm-shift} (right).

To explore the positive relationship between the similarity of the LLM outputs used at train-time and test-time (measured within the same sentence embedding space as was used to perform LDA) and test-time performance of a \detectionmodel trained on the train-time distribution,
we run additional analyses by fitting linear boundaries between pairwise combinations of clusters and recording the error rates (where each cluster contains the sentence embeddings for either one LLM or for humans). We are interested in the error rates themselves (intuitively, pairs of sources that have higher error rates are less easy to distinguish, i.e., are distributionally closer in the embedding space) as well as the relationship between error rates and the performance of the detection model when evaluated on an out-of-distribution LLM (a supervised \detectionmodel trained on one of the LLMs' outputs should perform well on the other LLM's outputs, the closer the distributions of the two LLMs' outputs are). We make the important caveat that analyzing specific pairs of LLMs is noisy, and this analysis should be viewed as suggestive. %

\paragraph{Setup.} In the experiments below, we fit logistic regression models on pairwise combinations of each generative source, using all 5,000 available sentences per-source, for both the LLMs used in the main text figures as well as for the five Gemini LLMs (i.e., can a logistic model on the text embeddings distinguish between the two models). Then, for each pair of models, we plot error rates in-sample (\cref{fig:linearboundaryheat}). In \cref{fig:linearboundaryscat}, we show scatter plots where each point represents a pair of LLMs, and we plot the AI recall rates of the supervised detection models trained in \cref{fig:llm-shift} against the logistic regression error rate -- averaging the AI recall rates over  the two detection models trained on each LLM in the pair.

\paragraph{Findings.} At a high level, we corroborate our claims in the main text (\Cref{sec:results_llm}). Qualitatively, from the analysis with DistilBERT detection models (\Cref{fig:llm-shift} (left)), the LDA (\Cref{fig:llm-shift} (right)), and error rates from logistic regression (\Cref{fig:linearboundaryheat-1}) on the main text experiment, the distributions of human writing and LLM writing (for all LLMs) are clearly separable. Additionally, it seems that our results are internally consistent for Gemini 3 Pro: that detection models tended to classify its outputs well out-of-distribution, its cluster was spread into the clusters of all other LLM writing in the LDA, and the distributions of Gemini 3 Pro writing and other LLMs (Qwen, Llama) have the smallest separability. Finally, \Cref{fig:linearboundaryscat} shows that the error rates between pairs of LLMs (a proxy for distance between distributions) correlate strongly with out-of-distribution performance of a \detectionmodel trained on one LLM in the pair and evaluated on the second LLM (e.g., for the LLMs used in \Cref{sec:results_llm}, $r=0.852$ in \Cref{fig:linearboundaryscat}).

\begin{figure}[tbh]
\begin{subfigure}{.49\linewidth}
  \centering
  \includegraphics[width=\linewidth]{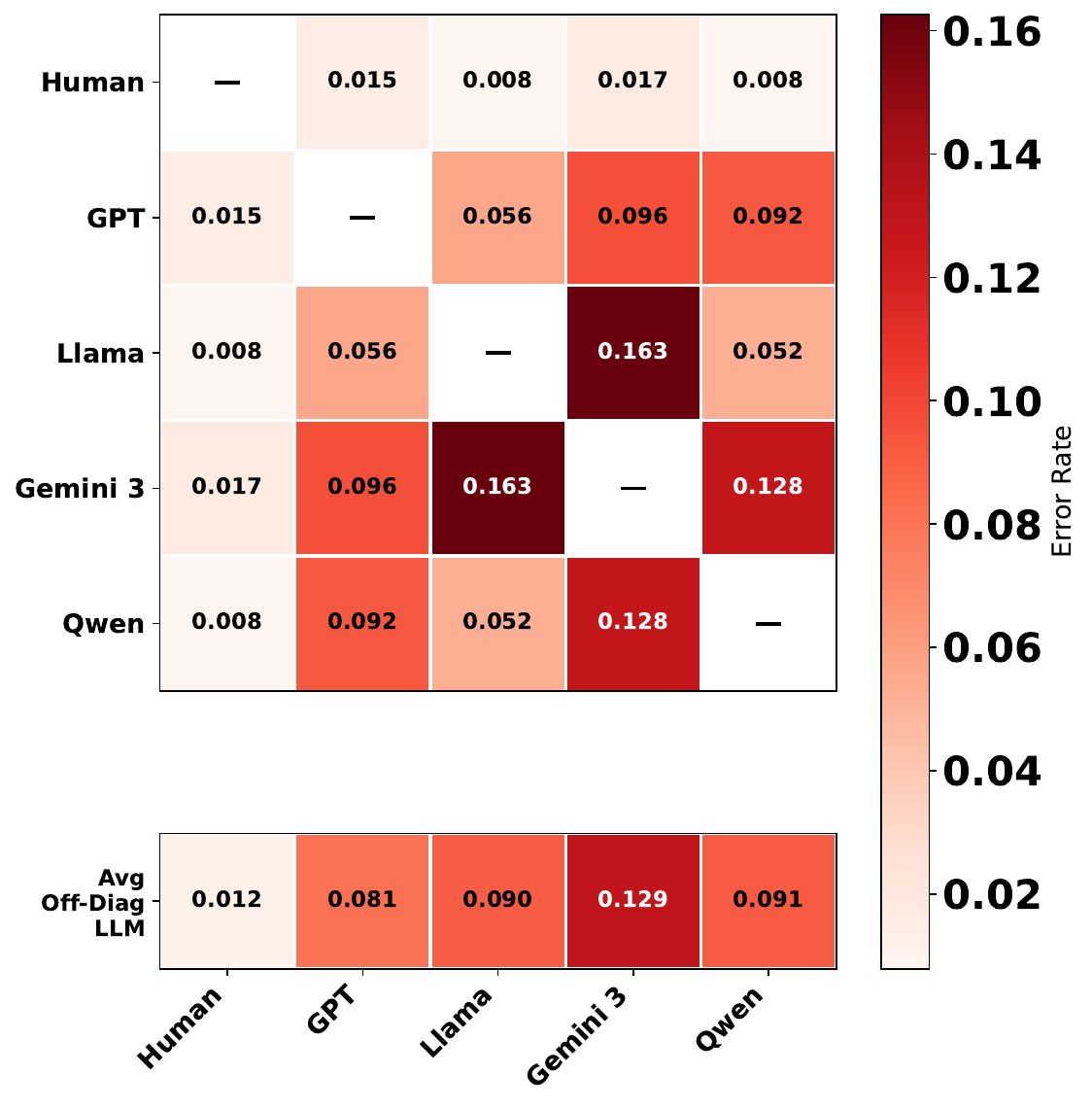}
  \caption{}
  \label{fig:linearboundaryheat-1}
\end{subfigure}%
\begin{subfigure}{.51\linewidth}
  \centering
  \includegraphics[width=\linewidth]{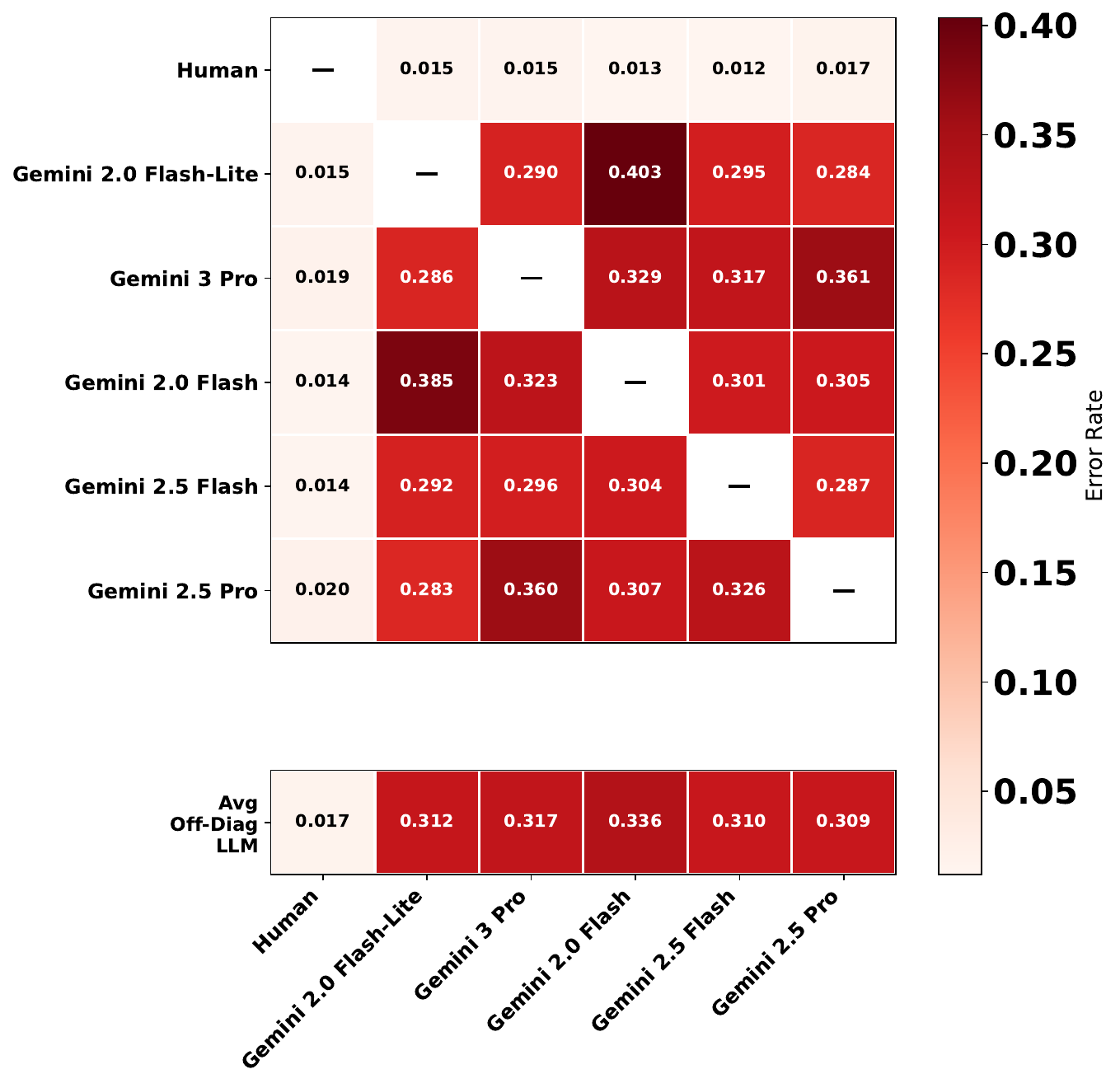}
  \caption{}
  \label{fig:linearboundaryheat-2}
\end{subfigure}

\caption{Error rates of logistic regressions trained and evaluated in-sample on abstracts from two sources. We corroborate the finding in \Cref{fig:llm-shift}, as Gemini 3 Pro  Preview's outputs overlap with those of Qwen and Llama 3.3 70B Instruct, translating to better out-of-distribution performance for a \detectionmodel trained on one of these LLMs and evaluated on the other.}

\label{fig:linearboundaryheat}
\end{figure}

\begin{figure}[tbh]
\begin{subfigure}{.41\linewidth}
  \centering
  \includegraphics[width=\linewidth]{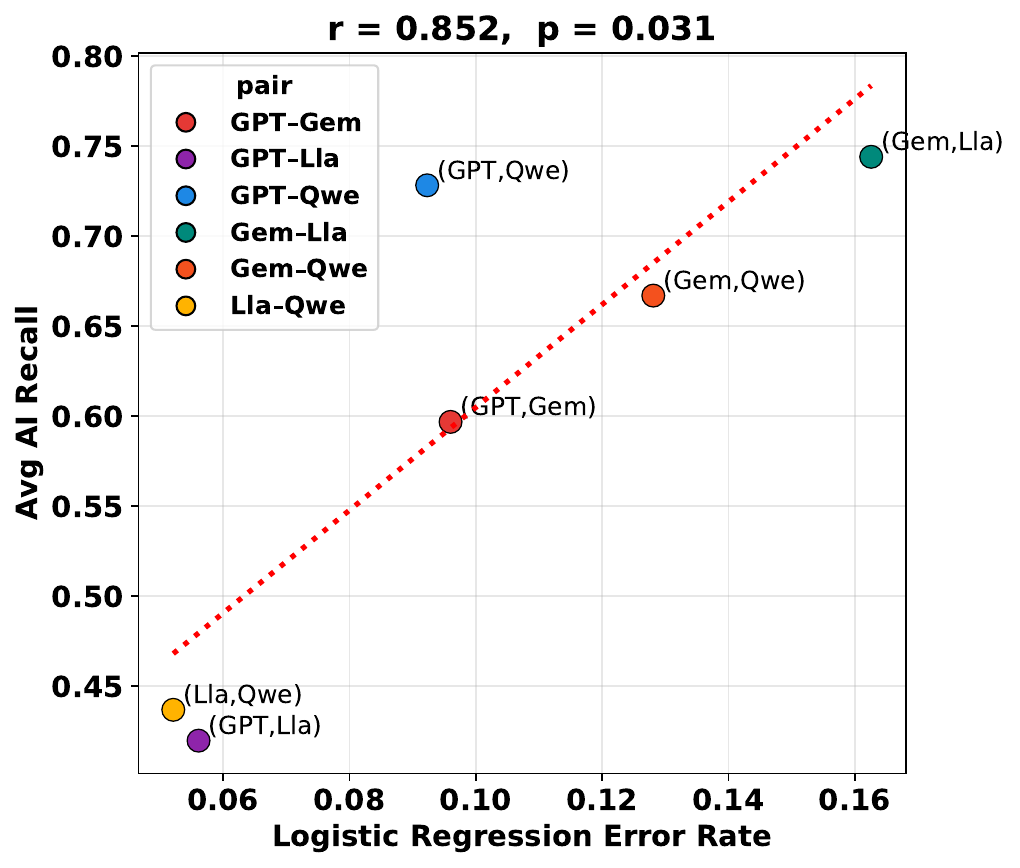}
  \caption{}
  \label{fig:linearboundaryscat-1}
\end{subfigure}%
\begin{subfigure}{.59\linewidth}
  \centering
  \includegraphics[width=\linewidth]{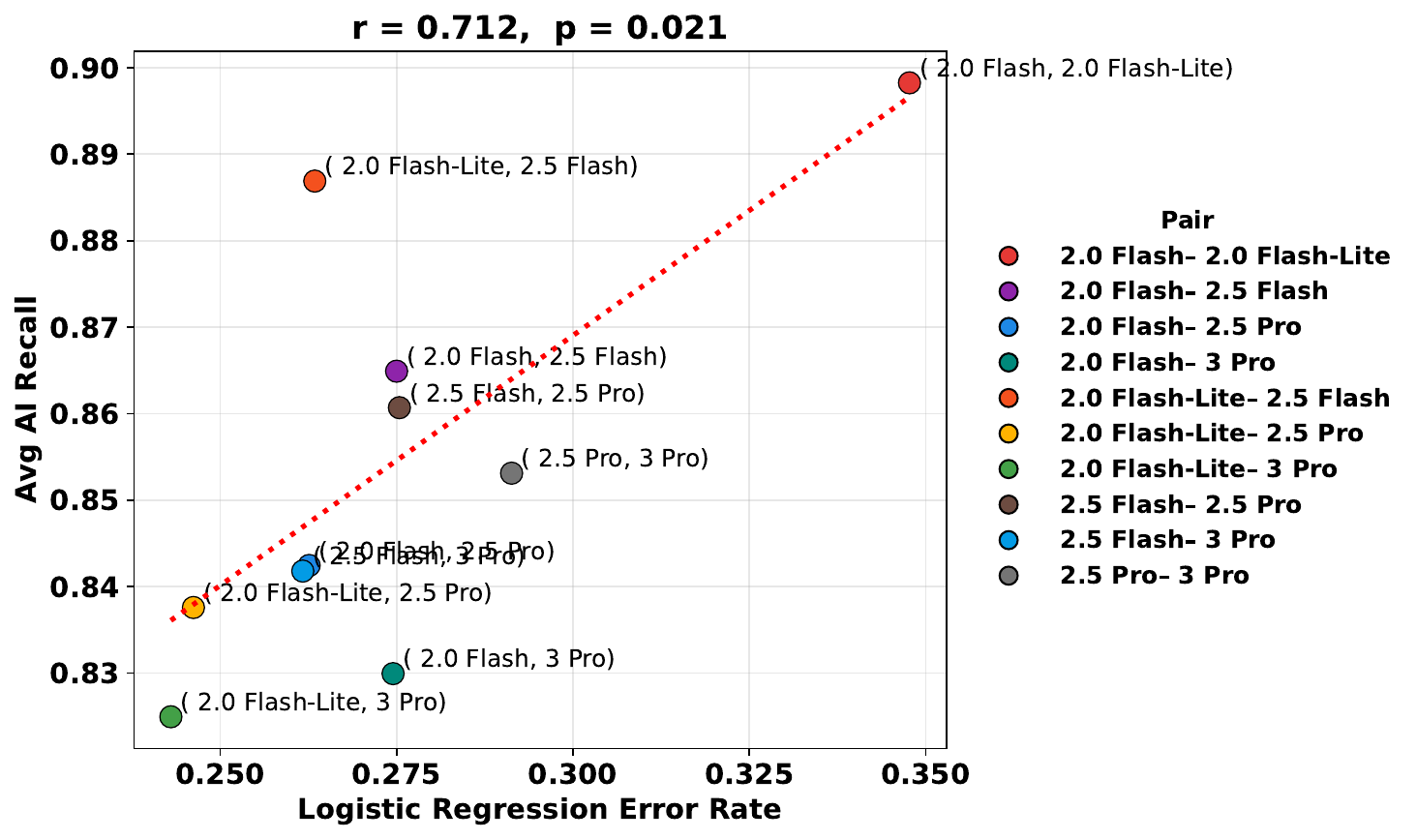}
  \caption{}
  \label{fig:linearboundaryscat-2}
\end{subfigure}

\caption{Scatter plots, for each pairwise combination of LLMs, for the error rates as in \Cref{fig:linearboundaryheat} against the averaged AI recall scores for the detection models trained on each LLM's outputs per-pair and evaluated on the second LLM's outputs. We find a strong positive correlation: as the error rate increases, the distributions of the two LLMs' outputs can be interpreted as becoming closer, which would allow a \detectionmodel trained on one of the LLMs' outputs to perform better on the second LLM's out-of-distribution outputs.}

\label{fig:linearboundaryscat}
\end{figure}

\clearpage
\newpage

\subsection{Pangram analysis post-GPT 5.4 release}

\begin{figure}[ht]
    \centering
  \includegraphics[width=.7\linewidth]{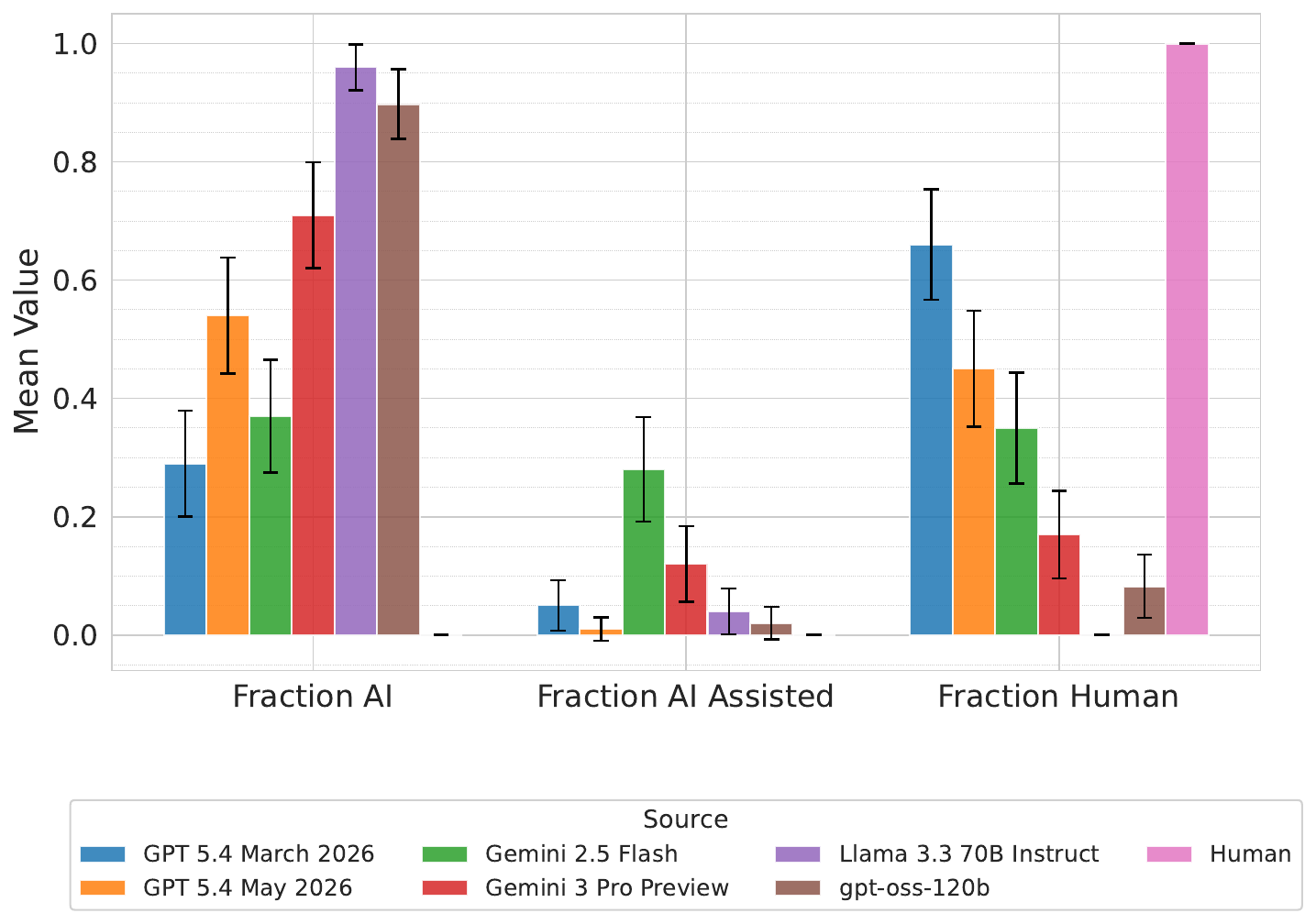}

\caption{We find that Pangram performs significantly better on some LLMs than others. Notably, shortly after the release of GPT 5.4 (likely before Pangram retrained its model on GPT 5.4 outputs; ``GPT 5.4 March 2026''), we observe that non-adversarially generated abstracts written by GPT 5.4 fool Pangram (are predicted as fully human-written) 66\% of the time; notably, Pangram also perfectly classifies the human abstracts. After the release of Pangram 3.3 in May 2026, we note that Pangram's accuracy on GPT 5.4 abstracts significantly increases (``GPT 5.4 May 2026''). We mirror the same 100 human abstracts from arXiv using each model (written in 2010), pass the resulting 700 abstracts to Pangram's API, and report average statistics for each of the returned variables with 95\% error bars.}
\label{fig:pangram}
\end{figure}

\begin{figure}[ht]
    \centering
  \includegraphics[width=.7\linewidth]{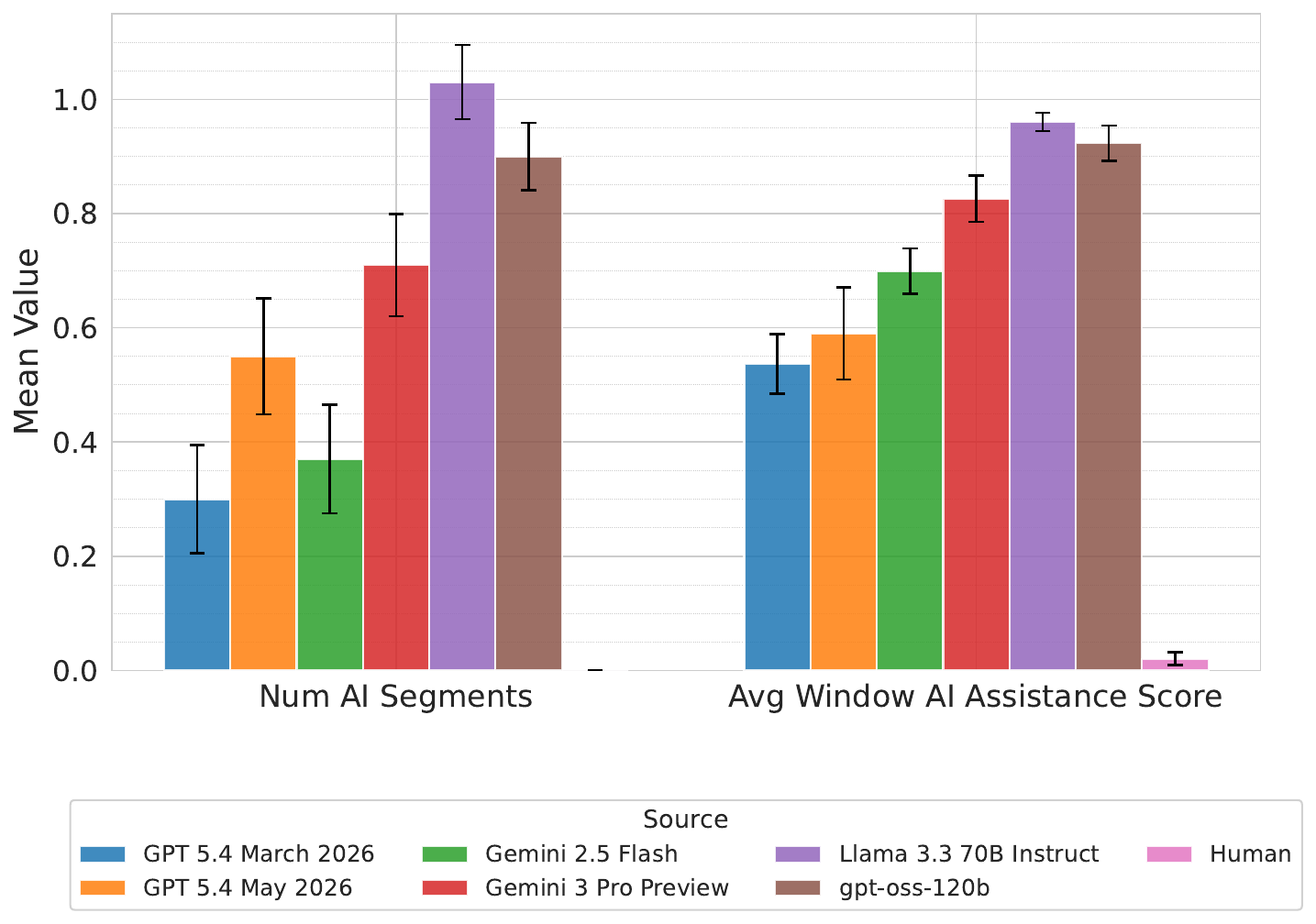}

\caption{Grouped bar plots as in \Cref{fig:pangram}, but plotting the number of segments classified as AI-written by Pangram, along with the continuous score reflecting P(LLM) from Pangram (averaged across windows). As in \Cref{fig:pangram}, pre-release of Pangram 3.3, GPT 5.4 has fewer predicted AI segments and lower continuous predictions of being \aigen than other LLMs or human writing (although Pangram 3.3 has significantly higher performance, presumably after adding labeled, out-of-distribution data to its training set).}
\label{fig:pangram2}
\end{figure}

\clearpage
\newpage

\subsection{Temporal shift in human writing}\label{app:cdf}

\begin{figure}[ht]

  \centering
  \includegraphics[width=\linewidth]{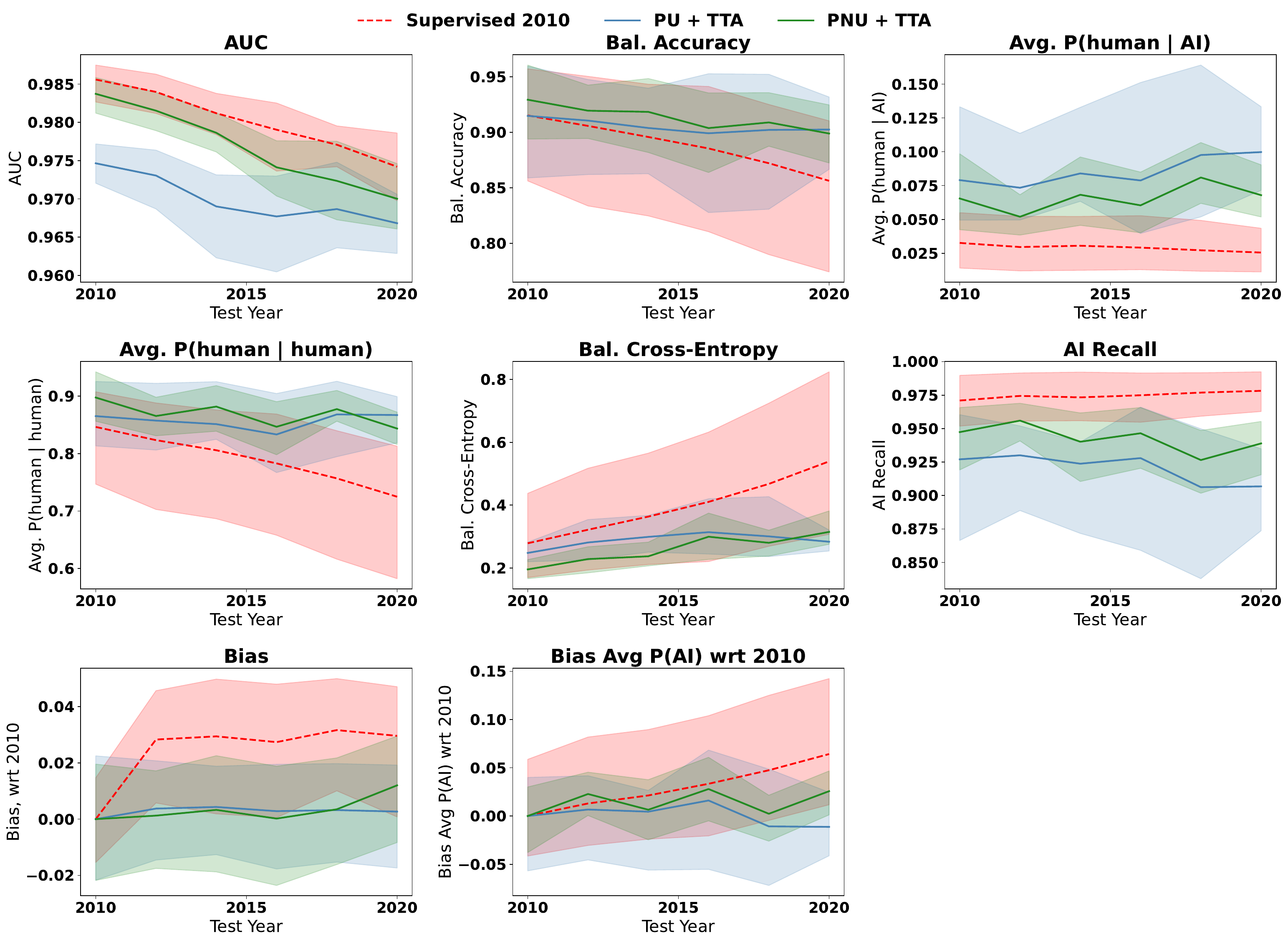}

\caption{Demonstrating that \Cref{fig:droptemporal-1} is robust on multiple metrics. While PU + TTA performs poorly on \aigen text compared to supervised methods, we see that, due to temporal shift in human writing from 2010 to 2020, supervised methods gradually predict worse on human writing (lower AUC, balanced accuracy, and continuous predictions on human writing; higher balanced cross-entropy and bias on prevalence estimation) over time, while PU + TTA has stable performance on human writing.}

\label{fig:extraline3}
\end{figure}

\begin{figure}[ht]

  \centering
  \includegraphics[width=.7\linewidth]{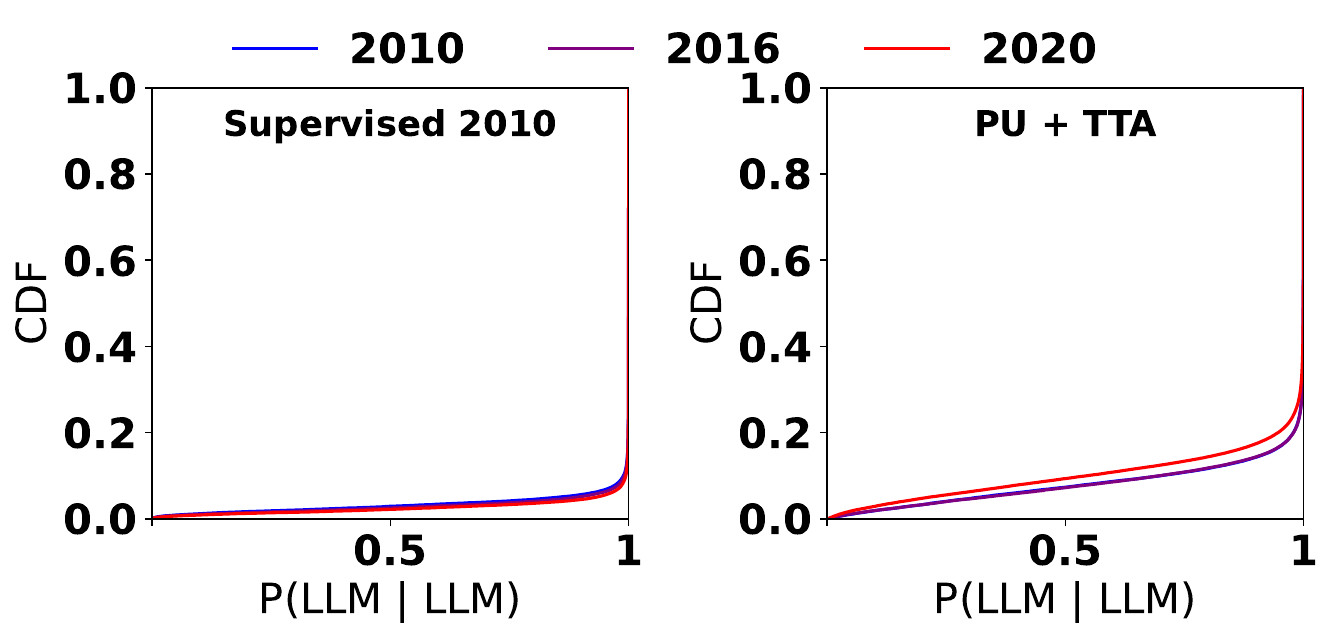}

\caption{We plot CDFs of the distribution of predictions on held-out \aigen writing from 2010, 2016, and 2020, for a supervised \aigen text classifier trained on 2010 data (left) and PU + TTA (right), as in \Cref{fig:droptemporal-2}. Note that both distributions are fairly stable over time, supporting our hypothesis of noticeable natural distribution shift in human writing between 2010 and 2020.}

\label{fig:cdf2}
\end{figure}

\begin{figure}[t]

  \centering
  \includegraphics[width=.7\linewidth]{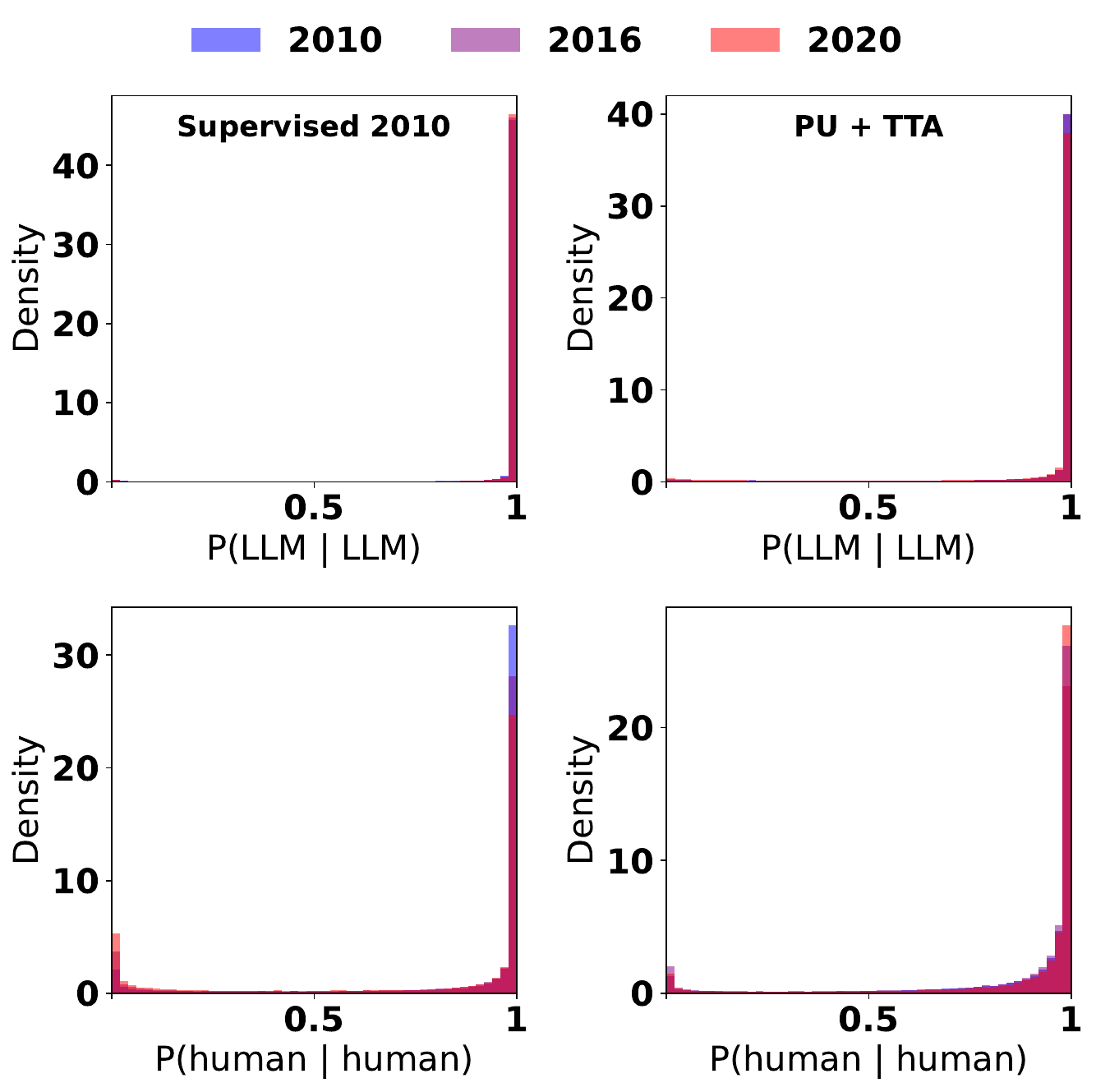}

\caption{We plot PDFs (histograms) of the distribution of predictions on held-out human writing (bottom) and \aigen writing (top) from 2010, 2016, and 2020, for a supervised \aigen text classifier trained on 2010 data (left) and PU + TTA (right).}

\label{fig:hist2}
\end{figure}

\clearpage

\section{RAID Benchmarks}
\label{sec:raid}

We evaluate on the RAID dataset \citep{dugan-etal-2024-raid}, which contains over 6 million \aigen and human-written texts. For \aigen texts, RAID records features that can be used to simulate distribution shifts. Our experiments involve training supervised \aigen detection models on known human writing and train-time \aigen writing, and training PU + TTA models on known human writing and an unlabeled mixture of human writing and test-time \aigen writing. We evaluate models on the test-time distribution of \aigen text, reporting AI recall rate. At a high level, results replicate those of the main text. We describe results below and in Table \ref{tab:cross_tnr} and Table \ref{tab:tnr}. Under distribution shifts where supervised methods are unable to adapt at test-time, PU + TTA offers significant advantages.

\paragraph{Experimental setup.}

Given the learning method (supervised; PU + TTA), the relative sizes of the positive (human) and negative (\aigen) texts from the source and target distributions are fixed. For supervised learning, we use a 4:1 train-to-validation split of labeled positives on a random subset of these texts (4,456 training positives; 1,114 validation positives), paired with 20,000 labeled source-distribution LLM negatives for training and 5,000 for validation. For PU + TTA, we additionally require unlabeled positive slices for both training and validation, so the human pool is divided in a 4:1:4:2 ratio across training labeled ($3{,}646$), validation labeled ($911$), training unlabeled ($3{,}646$), and validation unlabeled ($1{,}825$) positives; unlabeled AI negatives are drawn from the target distribution and mixed with the unlabeled positives equally (3,646 training and 1,825 validation unlabeled AI examples).
Supervised learning is never exposed to target-distribution AI text during training, while PU + TTA's unlabeled pool mixes source-distribution human text with target-distribution AI. For adversarial attack conditions, target-distribution AI refers to LLM output with the specified attack applied; for distribution shift conditions (model, domain, decoding, repetition penalty), it refers to unattacked LLM output filtered to the target value of the shift column. 
Results are averaged over 5 random seeds; given the same seed, PU's labeled training positives are a strict subset of supervised learning's, both drawn from the same human pool.

\paragraph{Adversarial attacks.} For \aigen text, RAID contains perturbed versions of that text, from alternative spellings of specific words (``alternative\_spelling'') to rewriting text with a paraphrasing language model (``paraphrase''); see \citep{dugan-etal-2024-raid} for an exhaustive list of descriptions. We additionally simulate an equal mixture of all attacks included in RAID (``all''). We treat the train-time distribution of \aigen writing as those texts without any perturbation, whereas test-time \aigen text consists of one of the attacks.
While supervised models are largely robust to the attacks in RAID, their performance falls significantly on the ``homoglyph'' attack; our PU + TTA models outperform the supervised models (gain of 0.40 in AI recall). For all subsequent experiments, we preprocess the RAID dataset to remove all \aigen text containing adversarial attacks.

\paragraph{Decoding.} AI text is produced using one of two decoding strategies: greedy decoding, where at each step the token with highest probability is outputted by the LLM (we treat this as the train-time distribution), and sampling, where at each step the outputted token is sampled from the softmax distribution predicted by the LLM over all tokens (we treat this as the test-time distribution). 
Supervised models are fairly robust to shifts in decoding strategy (i.e., trained on greedily-decoded \aigen text but evaluated on \aigen text produced via sampling), and slightly outperform PU + TTA trained on the test-time distribution (loss of 0.02 in AI recall).

\paragraph{Domain.} RAID assembles human writing and \aigen writing from multiple existing domains (see \citep{dugan-etal-2024-raid} for an exhaustive list of domains and the original data source). We randomly sample 5 pairs of train-time (``Source'') and test-time (``Target'') domains. Supervised models trained on one domain perform poorly on held-out domains; furthermore, PU + TTA models trained on unlabeled samples from the test distribution enjoy significant gains relative to supervised models evaluated out-of-distribution. {While, due to label imbalance (more data existing per-domain for \aigen text than human text), both sets of models have high AI recall rates, PU + TTA performs significantly better on both human recall and AUC metrics, indicating that TTA allows the \detectionmodel to better separate the distributions of \aigen and human text than supervised learning.}

\paragraph{Model.} RAID contains \aigen text from 11 LLMs; we treat each LLM's outputs as one domain, and keep the 4 LLMs for which over 30,000 texts are available. We train supervised and PU + TTA models with \aigen text from each of the LLMs. Each supervised model is evaluated on the outputs of each LLM, while the PU + TTA models are simply evaluated on the same test-time distribution on which they were trained. We find that PU + TTA models consistently outperform supervised models, when the train-time and test-time distributions of \aigen text differ.

\paragraph{Repetition penalty.} AI text can optionally be generated using a repetition penalty (i.e., downweighting tokens that have recently been outputted by the LLM in the sequence). We treat \aigen text without the repetition penalty as the train-time distribution, and \aigen text with the penalty as the test-time distribution. We find significant performance drops at test-time from supervised models, and improvement when using PU + TTA models (gain of 0.18 in AI recall).

\begin{table}[htbp]
\centering
\small
\caption{RAID benchmark results for the AUC metric, for different adversarial attacks (training and evaluation \aigen data balanced between domains, LLMs, decoding strategies, and repetition penalty). Results for LLM outputs with different adversarial attacks applied, with upper and lower bounds attached.}
\label{tab:cross_auc}
\begin{tabular}{p{3.3cm}p{3.3cm}p{3.3cm}c}
\toprule
\textbf{Eval Attack} & \textbf{Supervised, AUC} & \textbf{PU + TTA, AUC} & \textbf{PU + TTA Gain} \\
\midrule
all & 0.97 [0.95, 0.99] & 0.96 [0.95, 0.97] & -0.01 \\
alternative\_spelling & 0.99 [0.98, 0.99] & 0.97 [0.96, 0.98] & -0.02 \\
article\_deletion & 0.98 [0.97, 0.99] & 0.99 [0.98, 0.99] & +0.01 \\
homoglyph & 0.79 [0.65, 0.95] & 1.00 [1.00, 1.00] & +0.21 \\
insert\_paragraphs & 0.99 [0.98, 0.99] & 0.97 [0.96, 0.98] & -0.02 \\
number & 0.99 [0.98, 0.99] & 0.97 [0.96, 0.98] & -0.02 \\
paraphrase & 0.97 [0.96, 0.98] & 0.98 [0.97, 0.99] & +0.01 \\
perplexity\_misspelling & 0.98 [0.97, 0.99] & 0.97 [0.96, 0.98] & -0.01 \\
synonym & 0.99 [0.98, 0.99] & 0.97 [0.96, 0.98] & -0.02 \\
upper\_lower & 0.99 [0.99, 0.99] & 0.97 [0.96, 0.97] & -0.02 \\
whitespace & 0.99 [0.98, 0.99] & 0.97 [0.96, 0.97] & -0.02 \\
zero\_width\_space & 0.99 [0.98, 0.99] & 0.97 [0.96, 0.97] & -0.02 \\
none & 0.99 [0.99, 0.99] & \textemdash & \textemdash \\
\bottomrule
\end{tabular}
\end{table}

\begin{table}[htbp]
\centering
\small
\caption{RAID benchmark results for AI recall, for different adversarial attacks (training and evaluation \aigen data balanced between domains, LLMs, decoding strategies, and repetition penalty). Results for LLM outputs with different adversarial attacks applied, with upper and lower bounds attached.}
\label{tab:cross_tnr}
\begin{tabular}{p{3.3cm}p{3.3cm}p{3.3cm}c}
\toprule
\textbf{Eval Attack} & \textbf{Supervised, AI Recall} & \textbf{PU + TTA, AI Recall} & \textbf{PU + TTA Gain} \\
\midrule
all & 0.95 [0.90, 0.99] & 0.93 [0.85, 0.97] & -0.02 \\
alternative\_spelling & 0.99 [0.98, 1.00] & 0.96 [0.92, 0.99] & -0.03 \\
article\_deletion & 0.98 [0.97, 0.99] & 0.97 [0.93, 0.99] & -0.01 \\
homoglyph & 0.60 [0.10, 1.00] & 1.00 [1.00, 1.00] & +0.40 \\
insert\_paragraphs & 0.99 [0.98, 1.00] & 0.94 [0.91, 0.97] & -0.05 \\
number & 0.99 [0.98, 0.99] & 0.95 [0.92, 0.96] & -0.04 \\
paraphrase & 0.98 [0.95, 0.99] & 0.98 [0.97, 0.99] & +0.01 \\
perplexity\_misspelling & 0.98 [0.96, 0.99] & 0.95 [0.91, 0.98] & -0.03 \\
synonym & 0.99 [0.98, 0.99] & 0.95 [0.91, 0.98] & -0.04 \\
upper\_lower & 0.99 [0.98, 1.00] & 0.96 [0.93, 0.98] & -0.03 \\
whitespace & 0.99 [0.99, 1.00] & 0.96 [0.93, 0.98] & -0.04 \\
zero\_width\_space & 0.99 [0.98, 1.00] & 0.93 [0.83, 0.98] & -0.07 \\
none & 0.99 [0.99, 1.00] & \textemdash & \textemdash \\
\bottomrule
\end{tabular}
\end{table}

\begin{table}[htbp]
\centering
\small
\caption{RAID benchmark results for human recall, for different adversarial attacks (training and evaluation \aigen data balanced between domains, LLMs, decoding strategies, and repetition penalty).}
\label{tab:cross_tpr}
\begin{tabular}{p{3.2cm}p{3.5cm}p{3.4cm}c}
\toprule
\textbf{Eval Attack} & \textbf{Supervised, Human Recall} & \textbf{PU + TTA, Human Recall} & \textbf{PU + TTA Gain} \\
\midrule
all & 0.75 [0.67, 0.84] & 0.82 [0.72, 0.95] & +0.06 \\
alternative\_spelling & 0.75 [0.67, 0.84] & 0.74 [0.55, 0.92] & -0.01 \\
article\_deletion & 0.75 [0.67, 0.84] & 0.92 [0.85, 0.97] & +0.17 \\
homoglyph & 0.75 [0.67, 0.84] & 1.00 [0.99, 1.00] & +0.24 \\
insert\_paragraphs & 0.75 [0.67, 0.84] & 0.84 [0.74, 0.91] & +0.09 \\
number & 0.75 [0.67, 0.84] & 0.84 [0.78, 0.90] & +0.09 \\
paraphrase & 0.75 [0.67, 0.84] & 0.77 [0.68, 0.86] & +0.01 \\
perplexity\_misspelling & 0.75 [0.67, 0.84] & 0.85 [0.78, 0.89] & +0.09 \\
synonym & 0.75 [0.67, 0.84] & 0.80 [0.68, 0.90] & +0.04 \\
upper\_lower & 0.75 [0.67, 0.84] & 0.76 [0.70, 0.83] & +0.00 \\
whitespace & 0.75 [0.67, 0.84] & 0.79 [0.67, 0.88] & +0.03 \\
zero\_width\_space & 0.75 [0.67, 0.84] & 0.83 [0.63, 0.98] & +0.07 \\
none & 0.75 [0.67, 0.84] & \textemdash & \textemdash \\
\bottomrule
\end{tabular}
\end{table}

\begin{table}[htbp]
\centering
\small
\caption{RAID benchmark results for AUC, for different distribution shifts (training and evaluation \aigen data filtered to remove adversarial attacks). Results for out-of-distribution LLM outputs, with upper and lower bounds attached.}
\label{tab:auc}
\begin{tabular}{p{1.6cm}p{1.4cm}p{1.4cm}p{2.8cm}p{2.8cm}c}
\toprule
\textbf{Shift} & \textbf{Source} & \textbf{Target} & \textbf{Supervised, AUC} & \textbf{PU + TTA, AUC} & \textbf{PU Gain} \\
\midrule
decoding & greedy & sampling & 0.97 [0.95, 0.97] & 0.96 [0.95, 0.96] & -0.01 \\
\midrule
domain & abstracts & news & 0.71 [0.65, 0.77] & 0.86 [0.85, 0.88] & +0.15 \\
domain & books & abstracts & 0.80 [0.74, 0.87] & 0.93 [0.91, 0.94] & +0.13 \\
domain & news & recipes & 0.65 [0.43, 0.81] & 0.95 [0.93, 0.96] & +0.30 \\
domain & reddit & reviews & 0.65 [0.56, 0.72] & 0.97 [0.96, 0.98] & +0.33 \\
domain & reviews & reddit & 0.61 [0.56, 0.73] & 0.86 [0.82, 0.90] & +0.24 \\
\midrule
model & gpt2 & gpt2 & 1.00 [1.00, 1.00] & 0.99 [0.98, 0.99] & -0.01 \\
model & gpt2 & llama-chat & 0.90 [0.87, 0.92] & 1.00 [1.00, 1.00] & +0.10 \\
model & gpt2 & mpt & 0.99 [0.97, 1.00] & 0.99 [0.99, 1.00] & +0.01 \\
model & gpt2 & mpt-chat & 0.91 [0.89, 0.94] & 1.00 [0.99, 1.00] & +0.08 \\
model & llama-chat & gpt2 & 0.85 [0.81, 0.88] & 0.99 [0.98, 0.99] & +0.14 \\
model & llama-chat & llama-chat & 1.00 [1.00, 1.00] & 1.00 [1.00, 1.00] & -0.00 \\
model & llama-chat & mpt & 0.82 [0.76, 0.86] & 0.99 [0.99, 1.00] & +0.18 \\
model & llama-chat & mpt-chat & 0.98 [0.96, 0.99] & 1.00 [0.99, 1.00] & +0.02 \\
model & mpt & gpt2 & 0.97 [0.96, 0.98] & 0.99 [0.98, 0.99] & +0.01 \\
model & mpt & llama-chat & 0.84 [0.80, 0.87] & 1.00 [1.00, 1.00] & +0.16 \\
model & mpt & mpt & 1.00 [1.00, 1.00] & 0.99 [0.99, 1.00] & -0.00 \\
model & mpt & mpt-chat & 0.87 [0.82, 0.92] & 1.00 [0.99, 1.00] & +0.13 \\
model & mpt-chat & gpt2 & 0.79 [0.76, 0.82] & 0.99 [0.98, 0.99] & +0.20 \\
model & mpt-chat & llama-chat & 0.99 [0.99, 1.00] & 1.00 [1.00, 1.00] & +0.01 \\
model & mpt-chat & mpt & 0.86 [0.84, 0.88] & 0.99 [0.99, 1.00] & +0.14 \\
model & mpt-chat & mpt-chat & 1.00 [1.00, 1.00] & 1.00 [0.99, 1.00] & -0.00 \\
\midrule
repetition\_ penalty & no & yes & 0.85 [0.79, 0.89] & 0.99 [0.99, 0.99] & +0.14 \\
\bottomrule
\end{tabular}
\end{table}

\begin{table}[htbp]
\centering
\small
\caption{RAID benchmark results for AI recall, for different distribution shifts (training and evaluation \aigen data filtered to remove adversarial attacks). Results for out-of-distribution LLM outputs, with upper and lower bounds attached.}
\label{tab:tnr}
\begin{tabular}{p{1.6cm}p{1.4cm}p{1.4cm}p{2.8cm}p{2.8cm}c}
\toprule
\textbf{Shift} & \textbf{Source} & \textbf{Target} & \textbf{Supervised, AI Recall} & \textbf{PU + TTA, AI Recall} & \textbf{PU Gain} \\
\midrule
decoding & greedy & sampling & 0.95 [0.92, 0.98] & 0.93 [0.87, 0.96] & -0.02 \\
\midrule
domain & abstracts & news & 0.99 [0.99, 1.00] & 1.00 [1.00, 1.00] & +0.01 \\
domain & books & abstracts & 0.99 [0.96, 1.00] & 1.00 [1.00, 1.00] & +0.01 \\
domain & news & recipes & 0.99 [0.98, 1.00] & 1.00 [1.00, 1.00] & +0.01 \\
domain & reddit & reviews & 1.00 [1.00, 1.00] & 0.99 [0.99, 1.00] & -0.01 \\
domain & reviews & reddit & 0.98 [0.95, 1.00] & 0.99 [0.99, 1.00] & +0.01 \\
\midrule
model & gpt2 & gpt2 & 1.00 [0.99, 1.00] & 0.97 [0.94, 0.99] & -0.03 \\
model & gpt2 & llama-chat & 0.74 [0.62, 0.90] & 0.99 [0.99, 1.00] & +0.25 \\
model & gpt2 & mpt & 0.97 [0.94, 0.99] & 0.97 [0.91, 0.99] & -0.01 \\
model & gpt2 & mpt-chat & 0.78 [0.68, 0.90] & 0.99 [0.97, 1.00] & +0.21 \\
model & llama-chat & gpt2 & 0.55 [0.40, 0.66] & 0.97 [0.94, 0.99] & +0.42 \\
model & llama-chat & llama-chat & 1.00 [0.99, 1.00] & 0.99 [0.99, 1.00] & -0.00 \\
model & llama-chat & mpt & 0.41 [0.28, 0.51] & 0.97 [0.91, 0.99] & +0.55 \\
model & llama-chat & mpt-chat & 0.90 [0.85, 0.96] & 0.99 [0.97, 1.00] & +0.08 \\
model & mpt & gpt2 & 0.85 [0.78, 0.90] & 0.97 [0.94, 0.99] & +0.12 \\
model & mpt & llama-chat & 0.47 [0.38, 0.56] & 0.99 [0.99, 1.00] & +0.52 \\
model & mpt & mpt & 0.99 [0.99, 1.00] & 0.97 [0.91, 0.99] & -0.03 \\
model & mpt & mpt-chat & 0.42 [0.24, 0.58] & 0.99 [0.97, 1.00] & +0.57 \\
model & mpt-chat & gpt2 & 0.46 [0.34, 0.67] & 0.97 [0.94, 0.99] & +0.50 \\
model & mpt-chat & llama-chat & 0.97 [0.94, 1.00] & 0.99 [0.99, 1.00] & +0.02 \\
model & mpt-chat & mpt & 0.66 [0.58, 0.76] & 0.97 [0.91, 0.99] & +0.31 \\
model & mpt-chat & mpt-chat & 1.00 [1.00, 1.00] & 0.99 [0.97, 1.00] & -0.01 \\
\midrule
repetition\_ penalty & no & yes & 0.80 [0.75, 0.85] & 0.98 [0.96, 0.99] & +0.18 \\
\bottomrule
\end{tabular}
\end{table}

\begin{table}[htbp]
\centering
\small
\caption{RAID benchmark results for human recall, for different distribution shifts (training and evaluation \aigen data filtered to remove adversarial attacks). Results for out-of-distribution LLM outputs, with upper and lower bounds attached.}
\label{tab:tpr}
\begin{tabular}{p{1.6cm}p{1.4cm}p{1.4cm}p{2.8cm}p{2.8cm}c}
\toprule
\textbf{Shift} & \textbf{Source} & \textbf{Target} & \textbf{Supervised, Human Recall} & \textbf{PU + TTA, Human Recall} & \textbf{PU Gain} \\
\midrule
decoding & greedy & sampling & 0.79 [0.72, 0.87] & 0.79 [0.67, 0.91] & -0.00 \\
\midrule
domain & abstracts & news & 0.13 [0.09, 0.18] & 0.14 [0.12, 0.17] & +0.01 \\
domain & books & abstracts & 0.24 [0.16, 0.32] & 0.67 [0.55, 0.84] & +0.43 \\
domain & news & recipes & 0.10 [0.06, 0.13] & 0.65 [0.29, 0.86] & +0.55 \\
domain & reddit & reviews & 0.12 [0.08, 0.16] & 0.66 [0.52, 0.81] & +0.54 \\
domain & reviews & reddit & 0.11 [0.05, 0.25] & 0.13 [0.08, 0.17] & +0.02 \\
\midrule
model & gpt2 & gpt2 & 0.88 [0.74, 0.95] & 0.90 [0.84, 0.96] & +0.02 \\
model & gpt2 & llama-chat & 0.88 [0.74, 0.95] & 0.95 [0.93, 0.97] & +0.07 \\
model & gpt2 & mpt & 0.88 [0.74, 0.95] & 0.97 [0.95, 0.98] & +0.08 \\
model & gpt2 & mpt-chat & 0.88 [0.74, 0.95] & 0.87 [0.53, 0.97] & -0.02 \\
model & llama-chat & gpt2 & 0.93 [0.88, 0.98] & 0.90 [0.84, 0.96] & -0.03 \\
model & llama-chat & llama-chat & 0.93 [0.88, 0.98] & 0.95 [0.93, 0.97] & +0.02 \\
model & llama-chat & mpt & 0.93 [0.88, 0.98] & 0.97 [0.95, 0.98] & +0.03 \\
model & llama-chat & mpt-chat & 0.93 [0.88, 0.98] & 0.87 [0.53, 0.97] & -0.07 \\
model & mpt & gpt2 & 0.97 [0.95, 0.99] & 0.90 [0.84, 0.96] & -0.06 \\
model & mpt & llama-chat & 0.97 [0.95, 0.99] & 0.95 [0.93, 0.97] & -0.01 \\
model & mpt & mpt & 0.97 [0.95, 0.99] & 0.97 [0.95, 0.98] & +0.00 \\
model & mpt & mpt-chat & 0.97 [0.95, 0.99] & 0.87 [0.53, 0.97] & -0.10 \\
model & mpt-chat & gpt2 & 0.91 [0.79, 0.95] & 0.90 [0.84, 0.96] & -0.00 \\
model & mpt-chat & llama-chat & 0.91 [0.79, 0.95] & 0.95 [0.93, 0.97] & +0.05 \\
model & mpt-chat & mpt & 0.91 [0.79, 0.95] & 0.97 [0.95, 0.98] & +0.06 \\
model & mpt-chat & mpt-chat & 0.91 [0.79, 0.95] & 0.87 [0.53, 0.97] & -0.04 \\
\midrule
repetition\_ penalty & no & yes & 0.73 [0.53, 0.88] & 0.89 [0.80, 0.96] & +0.16 \\
\bottomrule
\end{tabular}
\end{table}

\clearpage
\newpage

\end{document}